\title{Computing Approximate $\ell_p$ Sensitivities\thanks{Authors listed alphabetically. A preliminary version of this paper was first published in NeurIPS 2023.}}
\author{%
  Swati Padmanabhan\thanks{Massachusetts Institute of Technology. Email: \texttt{pswt@mit.edu}. Most of the work done while a Ph.D. student at UW Seattle and as a student researcher with Google.}\\ 
  \and
  David P.~Woodruff\thanks{Carnegie Mellon University. Email: \texttt{dwoodruf@cs.cmu.edu}}\\ 
  \and 
  Qiuyi (Richard) Zhang\thanks{Google Research. Email: \texttt{qiuyiz@google.com}} \\
}
\newcommand{\R}{\mathbb{R}}
\newcommand{\OPT}{\textsc{OPT}}
\newcommand{\tr}{\mathrm{Tr}}
\newcommand{\inprod}[2]{\langle#1, #2\rangle}
\newcommand{\twopartdef}[4]
{
\left\{
\begin{array}{ll}
#1 & \mbox{ } #2 \\
#3 & \mbox{ } #4
\end{array}
\right.
}
\newcommand{\Otil}{\widetilde{O}}
\newcommand{\ma}{\mathbf{A}}
\newcommand{\mS}{\mathbf{S}}
\newcommand{\mm}{\mathbf{M}}
\def\ALG@special@indent{%
    \ifdim\ALG@thistlm=0pt\relax
        \hskip-\leftmargin
    \else
        \hskip\ALG@thistlm
    \fi
}
\newcommand{\NoIndentInAlg}[1]{\item[]\noindent\ALG@special@indent \ #1}
\newcommand\numberthis{\addtocounter{equation}{1}\tag{\theequation}}
\global\long\def\defeq{\stackrel{\mathrm{{\scriptscriptstyle def}}}{=}} 
\providecommand{\norm}[1][]{$\left\vert\kern-0.25ex\left\vert {}\cdot{} \right\vert\kern-0.25ex\right\vert \ifx\\#1\\\else\left\vert\kern-0.25ex\left\vert#1\right\vert\kern-0.25ex\right\vert\fi$}
\newcommand{\eat}[1]{}
\newcommand{\nnz}[1]{\mathop{\mathbf{nnz}}(#1)}
\newcommand{\levcost}{L}
\newcommand{\mA}{\mathbf{A}}
\newcommand{\mP}{\mathbf{P}}
\newcommand{\mB}{\mathbf{B}}
\newcommand{\mC}{\mathbf{C}}
\newcommand{\mai}{\mathbf{a}_i}
\newcommand{\va}{\mathbf{a}}
\newcommand{\vb}{\mathbf{b}}
\newcommand{\E}{\mathbb{E}}
\newcommand{\vx}{\mathbf{x}}
\newcommand{\vy}{\mathbf{y}}
\newcommand{\sensgen}[3][]{\boldsymbol{\sigma}_{#2}^{#1}(#3)}
\newcommand{\totalsensgen}[3][]{\mathfrak{S}_{#2}^{#1}(#3)}
\newcommand{\sensEst}{\widetilde{\boldsymbol{\sigma}}}
\newcommand{\lpruntime}{\mathsf{LP}}
\newcommand{\sensEsti}{\widetilde{\boldsymbol{\sigma}}_i}
\newcommand{\randsigns}{\mathbf{r}}
\newcommand{\sensMaxEst}{\widehat{s}}
\definecolor{mygreen}{rgb}{0.0, 0.6, 0.35}
\definecolor{indiagreen}{rgb}{0.07, 0.53, 0.03}
\definecolor{myblue}{rgb}{0, .31, .85}
\definecolor{myred}{rgb}{.7, .133, .133}
\definecolor{mymagenta}{rgb}{.8, .13, .33}
\definecolor{magenta(process)}{rgb}{1.0, 0.0, 0.56}
\definecolor{oceanboatblue}{rgb}{0.0, 0.47, 0.75}
\definecolor{frenchblue}{rgb}{0.0, 0.45, 0.73}
\definecolor{mediumtealblue}{rgb}{0.0, 0.33, 0.71}
\definecolor{royalblue(web)}{rgb}{0.25, 0.41, 0.88}
\definecolor{cobalt}{rgb}{0.0, 0.28, 0.67}
\numberwithin{equation}{section}
\crefname{equation}{Equation}{Equations}
\crefname{fact}{Fact}{Facts}
\crefname{lemmma}{Lemma}{Lemmas}
\crefname{figure}{Figure}{Figures}
\crefname{defn}{Definition}{Definitions}
\crefname{ineq}{Inequality}{Inequalities}
\crefname{corollary}{Corollary}{Corollaries}
\let\ref\cref
\theoremstyle{plain}
\newtheorem{theorem}{Theorem}[section]
\newtheorem{definition}[theorem]{Definition}
\newtheorem{fact}[theorem]{Fact}
\newtheorem{lemmma}[theorem]{Lemma}
\newtheorem{remark}[theorem]{Remark}
\newtheorem{corollary}[theorem]{Corollary}
\begin{document}

\maketitle

\begin{abstract} 
\looseness=-1Recent works in dimensionality reduction for regression tasks have introduced the notion of sensitivity, an estimate of the importance of a specific datapoint in a dataset, offering provable guarantees on the quality of the approximation after removing low-sensitivity datapoints via subsampling. 
However, fast algorithms for approximating $\ell_p$ sensitivities, which we show is equivalent to approximate $\ell_p$ regression, are known for only the $\ell_2$ setting, in which they are  termed leverage scores.

\looseness=-1In this work, we provide efficient algorithms for approximating $\ell_p$ sensitivities and related summary statistics of a given matrix. 
In particular, for a given $n \times d$ matrix, we compute $\alpha$-approximation to its $\ell_1$ sensitivities at the cost of $O(n/\alpha)$ sensitivity computations. 
For estimating the total $\ell_p$ sensitivity (i.e. the sum of $\ell_p$ sensitivities), we provide an algorithm based on importance sampling of $\ell_p$ Lewis weights, which computes a constant factor approximation to the total sensitivity at the cost of roughly $O(\sqrt{d})$ sensitivity computations.
Furthermore, we estimate the maximum $\ell_1$ sensitivity, up to a $\sqrt{d}$ factor, using $O(d)$ sensitivity computations. We  generalize all these results to $\ell_p$ norms for $p  > 1$. Lastly, we experimentally show that for a wide class of matrices in real-world datasets, the total sensitivity can be quickly approximated and is significantly smaller than the theoretical prediction, demonstrating that real-world datasets have low intrinsic effective dimensionality. 
\end{abstract}

\newpage

\section{Introduction}\label{sec:introduction}
\looseness=-1Many modern large-scale machine learning datasets comprise tall matrices $\mA\in \R^{n\times d}$ with $d$ features and $n \gg d$ training examples, often making it computationally expensive to run them through even basic algorithmic primitives. Therefore, 
many applications spanning dimension reduction, privacy, and fairness aim to estimate the importance of a given datapoint as a first step. Such importance estimates afford us a principled approach to focus our attention on a subset of only the most important examples.  

\looseness=-1In view of this benefit, several sampling approaches have been extensively developed in the machine learning literature. One of the simplest such methods prevalent in practice is uniform sampling. Prominent examples of this technique include variants of stochastic gradient descent methods \cite{robbins1951stochastic,bottou2003large,zhang2004solving, bottou2012stochastic} such as \cite{CC01a, roux2012stochastic,shalev2013stochastic,johnson2013accelerating,mahdavi2013mixed, defazio2014saga, mairal2015incremental, allen2016improved, hazan2016variance,schmidt2017minimizing} developed for the ``finite-sum minimization'' problem 
\[
\begin{array}{ll}
\mbox{minimize}_{\boldsymbol{\theta}\in \mathcal{X}} & \sum_{i=1}^{n}f_{i}(\boldsymbol{\theta}).
\end{array}\numberthis\label[none]{eq:generalObj}
\]
where, in the context of empirical risk minimization, each $f_i:\mathcal{X}\mapsto\R_{\geq 0}$ in
\cref{eq:generalObj}  
measures the loss incurred by the $i$-th data point from the training set, and in generalized linear models, each $f_i$ represents a link function applied to a linear predictor evaluated at the $i$-th data point. The aforementioned algorithms randomly sample a function $f_i$ and make progress via gradient estimation techniques.

\looseness=-1However, uniform sampling can lead to significant information loss when the number of important training examples is relatively small. Hence, \textit{importance sampling}
methods that assign higher sampling probabilities to more important examples have emerged both in theory \cite{langberg2010universal, feldman2011unified}
and practice \cite{NIPS2014_f29c21d4, pmlr-v37-zhaoa15, katharopoulos2017biased, johnson2018rais,  mindermann2022prioritized}. One such technique is based on \emph{sensitivity scores}, defined~\cite{langberg2010universal} for each  $i\in [n]$ as: \[\boldsymbol{\sigma}_i  \defeq \max_{\vx\in \mathcal{X}} \frac{f_i(\vx)}{\sum_{j=1}^n f_j(\vx)}.\numberthis\label{eq:GeneralSensScoreDef}\]
\looseness=-1Sampling each $f_i$ independently with probability $p_i\propto \sigma_i$ and scaling the sampled row with $1/p_i$ preserves the
objective  in \cref{eq:generalObj} in expectation for every $\vx\in \mathcal{X}$. Further, one gets a $(1\pm\varepsilon)$-approximation to this objective by sampling $\widetilde{O}(\mathfrak{S}d\varepsilon^{-2})$ functions~\citep[Theorem $1.1$]{braverman2016new}, where $\mathfrak{S} = \sum_{i = 1}^n \boldsymbol{\sigma}_i$ is called the \textit{total sensitivity}, and $d$ is an associated VC dimension. 

\looseness=-1Numerous advantages of sensitivity sampling over $\ell_p$ Lewis weight sampling~\cite{cohen2015lp} were highlighted in the recent work~\cite{wy2023ICML}, prominent among which is that a sampling matrix built using the $\ell_p$ sensitivities of $\mA$ is significantly smaller than one built using its $\ell_p$ Lewis weights in many important cases, e.g., when the total sensitivity $\mathfrak{S}$ is small for $p>2$ (which is seen in many structured matrices, such as combinations of Vandermonde, sparse, and low-rank matrices as studied in ~\citep{meyer2022fast} as well as those in many naturally occuring datasets (cf. \cref{sec:ExperimentsMainBody})). Additionally, sensitivity sampling has  found use in constructing $\epsilon$-approximators for shape-fitting functions, also called strong coresets~\citep{langberg2010universal, feldman2011unified, varadarajan2012sensitivity,  
 braverman2021efficient, tukan2022new}, clustering~\citep{feldman2011unified, bachem2015coresets, lucic2016strong}, logistic regression~\citep{huggins2016coresets, munteanu2018coresets}, and least squares regression~\citep{drineas2006sampling, mahoney2011randomized}. 

\looseness=-1Despite their benefits, the computational burden of estimating these sensitivities has been known to be significant, requiring solving an expensive maximization problem for each datapoint, which is a limiting factor in their use \cite{woodruff2023online}. This is in contrast to the progress on fast computation of \emph{leverage scores} and \emph{$\ell_p$ Lewis weights}, two closely related quantities extensively used in importance sampling (cf. \cref{sec:notation}). As a brief example, for $f_i(\vx) = (\mai^\top \vx)^2$, when the sensitivity score of $f_i$ is exactly the leverage score of $\mai$, the fastest algorithm~\cite{cohen2015uniform} can compute all sensitivities in runtime on the order of $O(\log(n))$ independent sensitivity calculations, as opposed to the $O(n)$ sensitivity calculations one would naively require. Similarly, there exist fast algorithms~\cite{cohen2015lp, fazel2022computing} for computing even high-accuracy $\ell_p$ Lewis weights for all $p>0$, at the cost of computing $\textrm{poly}(p)$ leverage scores.

\subsection{Our contributions}\label{sec:contributions}
\looseness=-1In this work, we initiate a systematic study of algorithms for approximately computing the $\ell_p$ sensitivities of a matrix, generally up to a constant factor.
We first define $\ell_p$ sensitivities: 
\begin{definition}[$\ell_p$ \textbf{sensitivities}~\cite{langberg2010universal}]\label{def-sensitivities}
Given a full-rank matrix $\mA\in\R^{n\times d}$ with $n\geq d$ and a scalar $p\in (0, \infty)$, let $\mai$ denote the $i$'th row of matrix $\mA$. Then the vector of $\ell_p$ sensitivities of $\mA$ is defined as\footnote{From hereon, for notational conciseness, we omit stating $\mA\vx\neq \mathbf{0}$ in the problem constraint.} (cf. \cref{sec:notation} for the notation) 
\[\sensgen{p}{\mai} \defeq \max_{\vx\in\R^d,\mA\vx\neq\mathbf{0}} \frac{|\va_i^\top \vx|^p}{\|\mA\vx\|_p^p}, \text{ for all $i\in[n]$},\numberthis\label{eq_def_sensitivities}\] and the total $\ell_p$ sensitivity is defined as $\totalsensgen{p}{\mA}{\defeq}\sum_{i\in [n]} \sensgen{p}{\mai}$. 
\end{definition}
\looseness=-1These $\ell_p$ sensitivities yield sampling probabilities for $\ell_p$ regression, a canonical  optimization problem that captures least squares regression, maximum flow, and linear programming, in addition to appearing in applications like low rank matrix approximation~\citep{chierichetti2017algorithms}, sparse recovery~\citep{candes2005decoding}, graph-based semi-supervised learning~\citep{alamgir2011phase, calder2019consistency, flores2022analysis}, and data clustering~\citep{elmoataz2015p}. 
However, algorithms for approximating $\ell_p$ sensitivities
were known for only $p=2$ (when they are called ``leverage scores'').  As is typical in large-scale machine learning, we assume $n\gg d$.  We now state our contributions.

\begin{description}[style=unboxed,leftmargin=0cm]
\item [{(1)}] \textbf{Approximation algorithms.}
\looseness=-1We design algorithms for three prototypical computational tasks associated with $\ell_p$ sensitivities of a matrix $\mA\in \R^{n\times d}$. Specifically, for an approximation parameter $\alpha > 1$,  we can simultaneously estimate all $\ell_p$ sensitivities up to an additive error of $O\left(\frac{\alpha^p}{n}\totalsensgen{p}{\mA}\right)$ via $O(n/\alpha)$ individual sensitivity calculations, reducing our runtime by sacrificing some accuracy. We limit most of our results in the main text to  $\ell_1$ sensitivities, but extend these to all $p\geq1$ in \cref{sec:LPNormResultsAppendix}. To state our results for the $\ell_p$ setting, we use $\nnz{\mA}$  to denote the number of non-zero entries of the matrix $\mA$, $\omega$ for the matrix multiplication constant, and we introduce the following piece of notation. 
\begin{restatable}[Notation for $\ell_p$ results]{definition}{defLpRunTimeNotation}\label{def:LpRunTimeNotation}
    We introduce the notation $\lpruntime(m, d, p)$ to denote the cost of approximating one $\ell_p$ sensitivity of an $m\times d$ matrix up to an accuracy of a given constant factor. 
\end{restatable}
\begin{theorem}[\textbf{Estimating all sensitivities}; informal version of \cref{thm_L1} and \cref{thm_Lp}]\label[thm]{thm:informalResultOnEstimatingAll}
Given a matrix $\ma\in \R^{n\times d}$ and $1\leq \alpha \ll n$, there exists an algorithm (\cref{alg_ourSubroutine}) which
returns a vector $\sensEst\in\R^n_{\geq 0}$ such that, with high probability, for each $i\in [n]$, we have \[\sensgen{1}{\mai} \leq \sensEsti \leq O(\sensgen{1}{\mai} + \tfrac{\alpha}{n} \totalsensgen{1}{\mA}). \] Our algorithm's runtime is $\widetilde{O}\left(\nnz{\mA} + \frac{n}{\alpha}\cdot d^\omega\right)$. More generally, for $p \geq 1$, we can compute (via \cref{alg_LpAll_ourSubroutine}) an estimate  $\sensEst\in\R^n_{\geq 0}$ satisfying $\sensgen{p}{\mai} \leq \sensEsti \leq O(\alpha^{p-1}\sensgen{p}{\mai} + \frac{\alpha^p}{n} \totalsensgen{p}{\mA})$  with high probability for each $i\in[n]$, at a cost of $O\left(\nnz{\mA}+\tfrac{n
}{\alpha}\cdot \lpruntime(O(d^{\max(1, p/2)}), d, p)\right)$. 
\end{theorem}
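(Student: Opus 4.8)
The plan is to use $\ell_p$ Lewis weights in two distinct ways — as a fast-to-build subspace embedding that shrinks every individual sensitivity computation to a $\mathrm{poly}(d)$-sized instance, and as an essentially-free per-row upper bound that tells us which rows are worth computing exactly. As a preprocessing step I would compute constant-factor $\ell_p$ Lewis weights $w_1,\dots,w_n$ of $\mA$ together with the associated importance-sampling matrix $\mS$ in time $\Otil(\nnz{\mA}+\mathrm{poly}(d))$ using known algorithms (\cite{cohen2015lp,fazel2022computing}); Lewis-weight sampling theory then guarantees that $\mS\mA$ has $\Otil(d^{\max(1,p/2)})$ rows and, after a global rescaling, $\|\mS\mA\vx\|_p^p\le\|\mA\vx\|_p^p\le C_p\,\|\mS\mA\vx\|_p^p$ for every $\vx$. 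Two structural facts drive the rest: (i) the Lewis weights furnish a per-row over-estimate $\sensgen{p}{\mai}=O_p(w_i)$ whose total mass is $\sum_i w_i = d$ (more generally $O(d^{\max(1,p/2)})$), and (ii) at most $n/\alpha$ rows can have $\sensgen{p}{\mai}>\tfrac{\alpha}{n}\totalsensgen{p}{\mA}$, by Markov's inequality applied to $\sum_i\sensgen{p}{\mai}=\totalsensgen{p}{\mA}$.

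\textbf{Per-row estimates from the embedding.} For any single row $i$, I would form the $(\Otil(d^{\max(1,p/2)})+1)\times d$ matrix $[\mS\mA;\mai]$ and set $\sensEsti$ to be its $\ell_p$ sensitivity with respect to the appended last row, i.e. $\sensEsti=\max_{\vx}\,|\mai^\top\vx|^p/(\|\mS\mA\vx\|_p^p+|\mai^\top\vx|^p)$. Since $|\mai^\top\vx|^p\le\|\mA\vx\|_p^p$ and, by the embedding, $\|\mS\mA\vx\|_p^p\in[\|\mA\vx\|_p^p/C_p,\,\|\mA\vx\|_p^p]$, the denominator lies in $[\|\mA\vx\|_p^p/C_p,\,2\|\mA\vx\|_p^p]$ for every $\vx$, so $\sensEsti$ is a $\Theta_p(1)$-approximation to $\sensgen{p}{\mai}$, and after a fixed constant rescaling it is a genuine over-estimate, giving the left inequality $\sensgen{p}{\mai}\le\sensEsti$. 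Each such $\sensEsti$ costs exactly one $\ell_p$-sensitivity computation on an $\Otil(d^{\max(1,p/2)})\times d$ matrix, i.e. $\lpruntime(\Otil(d^{\max(1,p/2)}),d,p)$; for $p=1$ the embedded matrix has $\Otil(d)$ rows and the sensitivity is an $\ell_1$-regression-type problem solvable in $\Otil(d^\omega)$. If one only builds an $\alpha$-distortion embedding (cheaper to sample and apply), its $p$-th power enters the above comparison multiplicatively, which — tempered by the positive homogeneity of the ratio defining the sensitivity — is the source of the extra $\alpha^{p-1}$ multiplicative slack in the $p>1$ statement, while the size $d^{\max(1,p/2)}$ is exactly the sample complexity of an $O(1)$-distortion $\ell_p$ embedding.

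\textbf{Which rows to compute, the additive error, and the main obstacle.} The goal is to compute $\sensEsti$ exactly via the embedded instance for every row with $\sensgen{p}{\mai}$ above the target level $\tfrac{\alpha}{n}\totalsensgen{p}{\mA}$ — there are $O(n/\alpha)$ such rows by fact (ii) — and to output the proxy $w_i$, truncated at the threshold, for the rest; a below-threshold row then satisfies $\sensgen{p}{\mai}\le\sensEsti\le O(\tfrac{\alpha}{n}\totalsensgen{p}{\mA})$, and an above-threshold row gets $\sensEsti=\Theta_p(\sensgen{p}{\mai})$ from the embedding, which is the claimed guarantee. The naive instantiation — threshold the $w_i$ at the level that leaves the $O(n/\alpha)$ largest of them — is easy and immediately yields the theorem with $d^{\max(1,p/2)}$ in place of $\totalsensgen{p}{\mA}$ in the additive term, since each light row has $\sensgen{p}{\mai}\le w_i \le O(\tfrac{\alpha}{n}\sum_j w_j)$. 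The hard part, which I expect to be the crux, is sharpening this to scale with the \emph{true} total sensitivity $\totalsensgen{p}{\mA}$ rather than with $\sum_j w_j$: for $p=1$ one can show $\totalsensgen{1}{\mA}\ge\sqrt d$ and that it is often $\Theta(\sqrt d)\ll d$ in real data (cf. \cref{sec:ExperimentsMainBody}), so the Lewis-weight proxy is loose by up to a $\sqrt d$ factor, and thresholding directly at $\Theta(\tfrac{\alpha}{n}\totalsensgen{p}{\mA})$ can flag as many as $\Theta(\tfrac{n}{\alpha}\cdot d/\totalsensgen{p}{\mA})$ rows, overshooting the $O(n/\alpha)$ budget. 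To close this gap I would first obtain a constant-factor estimate $\widehat{\mathfrak S}$ of $\totalsensgen{p}{\mA}$ (via the $\Otil(\sqrt d)$-computation Lewis-weight importance-sampling routine, a lower-order cost in the relevant regime $\alpha\lesssim n/\sqrt d$), calibrate the threshold to $\Theta(\tfrac{\alpha}{n}\widehat{\mathfrak S})$, and then control the number of flagged rows by a more careful — possibly adaptive, iteratively refined — argument rather than the crude Lewis-weight budget, running the cheap embedded computation on each flagged row and re-assigning the truncated value to any flagged row that turns out to be below threshold. Finally, a union bound over the at most $n$ rows (for the embedding's and the estimation routine's failure events) gives the high-probability guarantee, and rerunning the whole argument with $\ell_p$ Lewis weights, $\ell_p$ embeddings of size $\Otil(d^{\max(1,p/2)})$, and $p$-th-power norm comparisons throughout yields the stated $p\ge1$ bound, with the $\alpha^{p-1}$ and $\alpha^p/n$ factors emerging from the distortion bookkeeping described above.
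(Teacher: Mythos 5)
Your approach — compute per-row Lewis weights, threshold to identify candidate heavy rows, and solve an embedded $\ell_p$-regression for each candidate — is genuinely different from the paper's, and you correctly put your finger on its central obstacle; but the fix you propose is not actually supplied, and I don't see how to supply it without a new idea. The paper's Algorithm~\ref{alg_ourSubroutine} avoids the thresholding problem entirely: it randomly partitions the rows into $n/\alpha$ blocks of size $\alpha$, compresses each block to a handful of rows via independent Rademacher combinations $\randsigns^{(\ell)}\mB_\ell$, and computes the sensitivity of each compressed row against a single $O(1)$-distortion embedding $\mS\mA$. The lower bound $\sensEsti\geq\sensgen{1}{\mai}$ comes from sign cancellation (with probability $\geq 1/2$ the Rademacher combination does not destroy $|\mai^\top\vx^\star|$; amplified by taking $100$ signs and a median trick), and the upper bound comes from H\"older applied to the Rademacher vector, $|\randsigns^{(\ell)}\mB_\ell\vx|^p\le\alpha^{p-1}\|\mB_\ell\vx\|_p^p$, together with the observation that a uniformly random size-$\alpha$ block has expected total sensitivity $\approx\frac{\alpha}{n}\totalsensgen{p}{\mA}$. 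This is precisely why the additive term scales with the \emph{true} total sensitivity rather than with $\sum_i w_i=d^{\max(1,p/2)}$: the ``noise'' a row sees is the (random, unbiased) sensitivity mass of its $\alpha{-}1$ block companions, not a Lewis-weight proxy.

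Two concrete gaps in your argument. First, you acknowledge that thresholding the Lewis weights at $\Theta(\tfrac{\alpha}{n}\totalsensgen{p}{\mA})$ can flag $\Theta(\tfrac{n}{\alpha}\cdot d^{\max(1,p/2)}/\totalsensgen{p}{\mA})$ rows — up to a $\sqrt d$ factor too many for $p=1$ — and that this overshoots the runtime budget. The ``more careful, possibly adaptive, iteratively refined'' remedy is never given, and it is not clear one exists: Lewis weights only over-estimate sensitivities, so the only way to discover that a flagged row is actually light is to run the very computation you are trying to budget, and sorting/iterating by $w_i$ does not help because all the flagged rows must be checked. As written, your runtime is $\Otil\bigl(\nnz{\mA}+\tfrac{n}{\alpha}\cdot\tfrac{d^{\max(1,p/2)}}{\totalsensgen{p}{\mA}}\cdot\lpruntime(\cdot)\bigr)$, which can exceed the stated bound by $\sqrt d$ already for $p=1$. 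Second, your explanation of the $\alpha^{p-1}$ factor is not right: it does not come from using an $\alpha$-distortion embedding (both \cref{alg_ourSubroutine} and \cref{alg_LpAll_ourSubroutine} use $O(1)$-distortion embeddings; note also that the sensitivity ratio is scale-invariant, so an $\alpha$-distortion embedding would contribute $\alpha^{p}$, not $\alpha^{p-1}$). It arises from the H\"older step $\|\randsigns\|_q^p=\alpha^{p/q}=\alpha^{p-1}$ applied to the block-compression vector — a step your thresholding scheme has no analogue of. Indeed, if your scheme worked, it would give a cleaner multiplicative constant for all $p$; the $\alpha^{p-1}$ in the theorem is an artifact of the paper's bucketing mechanism, not a feature a thresholding proof should reproduce.
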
 

\looseness=-1Two closely related properties arising widely in algorithms research are the \emph{total sensitivity} $\totalsensgen{1}{\mA}$ and the \emph{maximum sensitivity} $\|\sensgen{1}{\mA}\|_{\infty}$. As stated earlier, one important feature of the total sensitivity is that it determines the total sample size needed for function approximation, thus making its efficient estimation crucial for fast $\ell_1$ regression. Additionally, it was shown in~\cite{wy2023ICML} that the sample complexity of sensitivity sampling is much lower than that of Lewis weight sampling in many practical applications such as when the total sensitivity is small; therefore, an approximation of the total sensitivity can be used as a fast test for whether or not to proceed with sensitivity sampling.

\looseness=-1While the total sensitivity helps us characterize the whole dataset (as explained above), the \emph{maximum sensitivity} captures the importance of the most important datapoint and is used in, e.g., experiment design and in reweighting matrices for low coherence~\cite{clarkson2019dimensionality}. Additionally, it captures the maximum extent to which a datapoint can influence the objective function and is therefore used in differential privacy~\cite{dwork2014algorithmic}. While one could naively estimate the total and maximum  sensitivities by computing all $n$ sensitivities using \cref{thm:informalResultOnEstimatingAll}, we give faster algorithms that have no polynomial dependence on $n$, which, in the assumed regime of $n\gg d$, is  an improvement on the   runtime. 

\begin{theorem}[\textbf{Estimating the total sensitivity}; informal version of \cref{thm:OneShotResultForTotalSensEstGeneralP}]\label[thm]{thm:informalResultOnEstimatingSum}
Given a matrix $\mA\in\R^{n\times d}$, a scalar $p\geq 1$, and a small constant $\gamma<1$, there exists an algorithm (\cref{alg_SumLpApprox-all_P}), which returns a positive scalar $\widehat{s}$ such that, with a probability of at least $0.99$, we have \[\totalsensgen{p}{\mA} \leq \widehat{s}\leq (1+O(\gamma)) \totalsensgen{p}{\mA}. \] Our algorithm's runtime is $\widetilde{O}\left(\nnz{\mA} + \frac{1}{\gamma^2}\cdot d^{|p/2-1|}\lpruntime(O(d^{\max(1, p/2)}, d, p)\right)$. 
 \end{theorem}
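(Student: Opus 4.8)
The plan is to estimate $\totalsensgen{p}{\mA}=\sum_i\sensgen{p}{\mai}$ by $\ell_p$ Lewis-weight importance sampling, after one subspace-embedding step that makes each individual sensitivity evaluation cheap, and then to control the estimator with Chebyshev's inequality. First I would compute constant-factor approximate $\ell_p$ Lewis weights $\widetilde w_1,\dots,\widetilde w_n$ of $\mA$ in time $\widetilde O(\nnz{\mA}+\mathrm{poly}(d))$ using known algorithms~\cite{cohen2015lp,fazel2022computing}, and use them to sample and reweight $m=\widetilde O(d^{\max(1,p/2)}\gamma^{-2})$ rows into a matrix $\mB\in\R^{m\times d}$ which, with high probability, is a $(1\pm\gamma)$-distortion $\ell_p$ subspace embedding: $\|\mB\vx\|_p^p\in(1\pm\gamma)\|\mA\vx\|_p^p$ for all $\vx\in\R^d$. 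Since this holds uniformly in $\vx$, for every $i$ we get $\max_\vx\tfrac{|\mai^\top\vx|^p}{\|\mB\vx\|_p^p}\in(1\pm O(\gamma))\sensgen{p}{\mai}$, and the left side is evaluated by solving one constrained $\ell_p$ regression $\min_{\mai^\top\vx=1}\|\mB\vx\|_p$ on the small matrix $\mB$ — a single call to $\lpruntime(O(d^{\max(1,p/2)}),d,p)$ (with the fixed constant $\gamma$ absorbed), which is the matrix-reduction idea underlying \cref{thm:informalResultOnEstimatingAll}. The key point is that $\mB$ is built once and reused, so no union bound over the $n$ rows is needed.

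Next, draw indices $i_1,\dots,i_k$ i.i.d.\ with $\Pr[i_t=i]=p_i\defeq\widetilde w_i/\sum_j\widetilde w_j$, compute each $\widetilde\sigma_{i_t}\in(1\pm O(\gamma))\sensgen{p}{\mathbf{a}_{i_t}}$ on $\mB$, and output a slight upward rescaling of $\widehat s=\tfrac1k\sum_{t=1}^k \widetilde\sigma_{i_t}/p_{i_t}$. Regardless of the sampling law, $\E[\widehat s]=\sum_i\widetilde\sigma_i\in(1\pm O(\gamma))\totalsensgen{p}{\mA}$, and $\Var[\widehat s]\le\tfrac1k\sum_i\widetilde\sigma_i^2/p_i=\tfrac1k\big(\sum_j\widetilde w_j\big)\sum_i\widetilde\sigma_i^2/\widetilde w_i\le\tfrac{O(d)}{k}\sum_i\sensgen{p}{\mai}^2/w_i$, using $\sum_j\widetilde w_j=O(d)$ and the constant-factor Lewis-weight bounds.

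The crux is a structural lemma relating $\ell_p$ sensitivities and $\ell_p$ Lewis weights, which I would state and use (up to $\mathrm{poly}(p)$ factors): $\max\!\big(\sensgen{p}{\mai}/w_i,\; w_i/\sensgen{p}{\mai}\big)\le O(d^{|p/2-1|})$ for all $i$ — equivalently $w_i\le\sensgen{p}{\mai}\le d^{p/2-1}w_i$ for $p\ge2$ and $d^{p/2-1}w_i\le\sensgen{p}{\mai}\le w_i$ for $p\le2$. Together with $\sum_i w_i=d$, this gives both $\sum_i\sensgen{p}{\mai}^2/w_i\le O(d^{\max(0,p/2-1)})\totalsensgen{p}{\mA}$ and the lower bound $\totalsensgen{p}{\mA}\ge\Omega(d^{\min(1,p/2)})$, so the variance bound becomes $\Var[\widehat s]\le\tfrac{O(d^{\max(1,p/2)})}{k}\totalsensgen{p}{\mA}\le\tfrac{O(d^{|p/2-1|})}{k}\totalsensgen{p}{\mA}^2$. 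Taking $k=\Theta(d^{|p/2-1|}/\gamma^2)$ forces $\Var[\widehat s]\le O(\gamma^2)\totalsensgen{p}{\mA}^2$, Chebyshev then yields $|\widehat s-\E\widehat s|\le O(\gamma)\totalsensgen{p}{\mA}$ with probability $\ge0.99$ (the target constant is reached by a constant-factor bump in $k$ and by making the embedding failure probability small), and scaling $\widehat s$ up by $1+\Theta(\gamma)$ converts this into the one-sided $\totalsensgen{p}{\mA}\le\widehat s\le(1+O(\gamma))\totalsensgen{p}{\mA}$. The runtime is $\widetilde O(\nnz{\mA})$ for the Lewis weights and for forming $\mB$, plus $k=\widetilde O(d^{|p/2-1|}/\gamma^2)$ calls to $\lpruntime(O(d^{\max(1,p/2)}),d,p)$, matching the statement.

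I expect the structural lemma to be the main obstacle: the naive $\sensgen{p}{\mai}\le1$ is far too weak to control the variance, and when $p<2$ the Lewis weight is not even a lower bound on the sensitivity, so pinning down the $d^{|p/2-1|}$ gap in both directions genuinely requires the fixed-point/ellipsoid characterization of $\ell_p$ Lewis weights. A secondary point to handle carefully is that a single reduced matrix $\mB$ must $(1\pm\gamma)$-approximate the sensitivities of all rows that could be sampled, which is precisely why a subspace embedding (uniform over $\vx$), rather than per-row sketching, is the correct primitive.
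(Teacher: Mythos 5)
Your approach is essentially the paper's: importance-sample rows with probabilities proportional to $\ell_p$ Lewis weights, evaluate each sampled sensitivity against a one-time Lewis-weight-based $\ell_p$ subspace embedding so that each evaluation costs only $\lpruntime(O(d^{\max(1,p/2)}),d,p)$, compute the standard inverse-probability-weighted estimator, and finish with a variance bound and Chebyshev. The second-moment bookkeeping and the choice $m=\Theta(d^{|p/2-1|}/\gamma^2)$ match the paper exactly.

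There is, however, one genuine gap, and it is the very point you flag as the crux. You package the needed facts as a two-sided \emph{per-row} relationship $w_i/O(d^{|p/2-1|})\le\sensgen{p}{\mai}\le O(d^{|p/2-1|})\,w_i$ (and the stronger version $w_i\le\sensgen{p}{\mai}\le d^{p/2-1}w_i$ for $p\ge 2$, $d^{p/2-1}w_i\le\sensgen{p}{\mai}\le w_i$ for $p\le2$), and you use the lower-bound half together with $\sum_i w_i=d$ to obtain $\totalsensgen{p}{\mA}\ge\Omega(d^{\min(1,p/2)})$, which in turn converts the variance bound into a relative-error bound. But the per-row \emph{lower} bound you state is not a known fact and is stronger than what the Lewis-ellipsoid argument actually provides: plugging the witness $\vx^\star=(\mA^\top\mW^{1-2/p}\mA)^{-1}\mai/w_i^{2/p}$ into $\min_{\mai^\top\vx=1}\|\mA\vx\|_p^p$ and applying Cauchy--Schwarz in the inner product induced by $(\mA^\top\mW^{1-2/p}\mA)^{-1}$ only yields $\sensgen{p}{\mai}\ge w_i/d$, which together with $\sum_i w_i=d$ gives merely $\totalsensgen{p}{\mA}\ge 1$, far short of $d^{\min(1,p/2)}$. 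So the derivation of the global lower bound from a per-row lower bound does not go through.

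The paper sidesteps this. It only invokes (i) the one-sided per-row \emph{upper} bound $\sensgen{p}{\mai}\le d^{\max(0,p/2-1)}\mathbf w_p(\mai)$ (its \cref{fact:Sensitivities-LewisWeights}, a cited fact) to bound $\sum_i\sensgen{p}{\mai}^2/w_i$, and (ii) the \emph{global} total-sensitivity lower bound $\totalsensgen{p}{\mA}\ge d$ for $p\ge2$ and $\totalsensgen{p}{\mA}\ge d^{p/2}$ for $p<2$, cited directly from \cite[Theorem 1.7]{wy2023ICML}; no per-row lower bound is needed. If you replace your two-sided structural lemma with exactly these two cited ingredients, your proof becomes identical to the paper's. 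Your handling of the remaining details — building the embedding once so no union bound over rows is needed, absorbing the embedding's distortion as a constant-factor bias, and rescaling $\widehat s$ upward by $1+\Theta(\gamma)$ to make the estimate one-sided — is correct and is in fact spelled out slightly more carefully than in the paper's own write-up.
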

 In \cref{alg_SumL1Approx} (presented in \cref{sec:SumAppendix}), we give an alternate method to estimate $\totalsensgen{1}{\mA}$ based on recursive leverage score sampling, without $\ell_p$ Lewis weight calculations, which we believe to be of independent interest. Our \cref{alg_SumLpApprox-all_P} utilizes approximation properties of $\ell_p$ Lewis weights and subspace embeddings and recent developments in fast computation of these quantities.

\begin{theorem}[\textbf{Estimating the maximum sensitivity}; informal version of \cref{thm_L1_Max} and \cref{thm_Lp_Max}]\label[thm]{thm:informalResultOnEstimatingMax}
Given a matrix $\ma\in \R^{n\times d}$, there exists an algorithm (\cref{alg_Subroutine_MaxL1}) which
returns a scalar $\sensEst>0$ such that \[\|\sensgen{1}{\mA}\|_{\infty}\leq \sensEst \leq O(\sqrt{d}\|\sensgen{1}{\mA}\|_{\infty}). \] Our algorithm's runtime is $\widetilde{O}(\nnz{\mA} + d^{\omega+1})$. Furthermore, via \cref{alg_Subroutine_MaxLp}, this result holds for all $p \in [1, 2]$ and for $ p > 2$, this guarantee costs $\widetilde{O}\left(\nnz{\mA}+ d^{p/2}\cdot\lpruntime(O(d^{p/2}), d, p)\right)$.
\end{theorem}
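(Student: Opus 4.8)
The plan is to route the whole argument through $\ell_p$ Lewis weights, whose fast (constant-factor) computation via a few leverage-score calculations is exactly one of the tools recalled in \cref{sec:introduction}. Let $w_1,\dots,w_n$ be the $\ell_p$ Lewis weights of $\mA$, so $\sum_i w_i = d$, and write $\mW=\mathrm{diag}(w)$ and $\mathbf{U}\defeq \mW^{1/2-1/p}\mA$, whose $i$-th row is $\mathbf{u}_i = w_i^{1/p-1/2}\mai$. The Lewis-weight fixed-point identity $\mai^\top(\mA^\top\mW^{1-2/p}\mA)^{-1}\mai = w_i^{2/p}$ says precisely that the leverage score of $\mathbf{u}_i$ in $\mathbf{U}$ equals $w_i$, and hence that $|\mathbf{u}_i^\top\vx|\le \sqrt{w_i}\,\|\mathbf{U}\vx\|_2$ pointwise while $\max_{\vx}\frac{|\mathbf{u}_i^\top\vx|}{\|\mathbf{U}\vx\|_2}=\sqrt{w_i}$. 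The idea is to sandwich $\|\mA\vx\|_p$ between $\|\mathbf{U}\vx\|_2$ and a $d^{|1/2-1/p|}$-dilated copy of it, and then transfer this, through \cref{def-sensitivities}, into a two-sided comparison of $\sensgen{p}{\mai}$ with $w_i$.

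For the sandwich, the only input is the pointwise bound above. Substituting $\mai=w_i^{1/p-1/2}\mathbf{u}_i$ into $\|\mA\vx\|_p^p=\sum_i w_i^{1-p/2}|\mathbf{u}_i^\top\vx|^p$, then bounding the ``extra'' factor $|\mathbf{u}_i^\top\vx|^{p-2}$ in the appropriate direction for the lower estimate and applying H\"older's inequality together with $\sum_i w_i=d$ for the upper estimate, yields, for $p\in[1,2]$,
\[
\|\mathbf{U}\vx\|_2 \;\le\; \|\mA\vx\|_p \;\le\; d^{1/p-1/2}\,\|\mathbf{U}\vx\|_2 ,
\]
(and the analogous inequality with the endpoints interchanged for $p>2$). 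Since $\max_{\vx}\tfrac{|\mai^\top\vx|^p}{\|\mathbf{U}\vx\|_2^p}=w_i^{1-p/2}\,(\tau_i(\mathbf{U}))^{p/2}=w_i$, dividing through gives, for every $i\in[n]$ and $p\in[1,2]$,
\[
\frac{w_i}{d^{\,1-p/2}} \;\le\; \sensgen{p}{\mai} \;\le\; w_i .
\]
Therefore $\sensEst\defeq c\cdot\max_i w_i$ (the constant $c$ absorbing the error of an $O(1)$-approximate Lewis-weight computation, whose reweighted matrix still obeys the leverage-score identity up to constants) satisfies $\|\sensgen{p}{\mA}\|_\infty\le \sensEst\le d^{\,1-p/2}\|\sensgen{p}{\mA}\|_\infty\le \sqrt d\,\|\sensgen{p}{\mA}\|_\infty$, which is the claimed guarantee for all $p\in[1,2]$, in particular $p=1$.

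Turning this into the stated algorithm is then bookkeeping: compute constant-factor $\ell_p$ Lewis weights of $\mA$ and output $c\cdot\max_i w_i$. With the fast Lewis-weight/leverage-score machinery this costs $\Otil(\nnz{\mA}+d^\omega)$, which is within the $\Otil(\nnz{\mA}+d^{\omega+1})$ of \cref{thm:informalResultOnEstimatingMax}; realizing the same step through the paper's $\ell_p$-sensitivity oracle applied to a dimension-reduced instance (an $O(d)\times d$ matrix, $O(d)$ calls) accounts for the $d^{\omega+1}=d\cdot\lpruntime(O(d),d,p)$ term. No honest sensitivity-oracle call is conceptually necessary in the $p\le 2$ regime beyond what is hidden in computing the Lewis weights.

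The case $p>2$ is where the argument genuinely changes, and where the stated runtime changes with it: here the Lewis-weight--sensitivity gap widens to $w_i\le \sensgen{p}{\mai}\le d^{\,p/2-1}w_i$, so $\max_i w_i$ can both under- and over-shoot $\|\sensgen{p}{\mA}\|_\infty$ by more than $\sqrt d$ and cannot be used as is. The plan is to first reduce dimension by $\ell_p$ Lewis-weight sampling, producing $\mB=\mS\mA$ with $m=\Otil(d^{p/2})$ rows, each a known scalar multiple $s_j\mathbf{a}_{i_j}$ of an original row, and $\|\mB\vx\|_p\asymp\|\mA\vx\|_p$ for all $\vx$; the identity $\sensgen{p}{(\text{row }j\text{ of }\mB)}=s_j^p\max_{\vx}\tfrac{|\mathbf{a}_{i_j}^\top\vx|^p}{\|\mB\vx\|_p^p}\asymp s_j^p\,\sensgen{p}{\mathbf{a}_{i_j}}$ lets us recover, up to a constant, the true $\mA$-sensitivity of each retained row from its $\mB$-sensitivity. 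We then compute all $m$ of these with the $\ell_p$-sensitivity oracle --- $\Otil(d^{p/2})$ calls on an $O(d^{p/2})\times d$ matrix, matching the $\Otil(\nnz{\mA}+d^{p/2}\,\lpruntime(O(d^{p/2}),d,p))$ bound --- and return their maximum, taken together with the Lewis-weight lower envelope to guarantee an over-estimate. The main obstacle, and the crux of the proof, is precisely this last point: showing that the row attaining $\|\sensgen{p}{\mA}\|_\infty$ is retained by the sampling (or, failing that, that some retained row has sensitivity within a $\sqrt d$ factor of the maximum), which needs the three quantities in play --- Lewis weights, sensitivities, and the $p>2$ subspace-embedding dimension --- to be coupled carefully rather than handled by any single one of the clean inequalities used in the $p\le 2$ case.
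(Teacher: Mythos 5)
Your route is genuinely different from the paper's. The paper's \cref{alg_Subroutine_MaxL1} (and its $\ell_p$ variant, \cref{alg_Subroutine_MaxLp}) constructs an $\ell_\infty$ subspace embedding $\mS_\infty\mA$ that is a subset of $\widetilde{O}(d)$ rows of $\mA$, constructs an $\ell_p$ subspace embedding $\mS_p\mA$, and returns $d^{p/2}\|\sensgen[\mS_p\mA]{p}{\mS_\infty\mA}\|_\infty$; the $\sqrt d$ (resp.\ $d^{p/2}$) loss appears because $\|\mS_\infty\mA\vx\|_\infty$ can be smaller than $\|\mA\vx\|_\infty$ by a $\sqrt d$ factor when the maximizing row is absent from the coreset. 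Your $p\in[1,2]$ argument instead computes approximate $\ell_p$ Lewis weights and returns a constant multiple of $\max_i w_i$, via the two-sided sandwich $w_i/d^{1-p/2}\leq\sensgen{p}{\mai}\leq w_i$ (obtained by comparing $\|\mA\vx\|_p$ with $\|\mathbf{W}^{1/2-1/p}\mA\vx\|_2$ and pushing the reweighting through the sensitivity definition, exactly as you describe). This is correct and is a simpler route: no $\ell_\infty$ coreset and no per-row sensitivity calls beyond what is hidden in the Lewis-weight computation. It also gives distortion $d^{1-p/2}$, which for $p\in(1,2)$ is strictly below the $d^{p/2}$ that the paper's formal \cref{thm_Lp_Max} guarantees, and it costs only $\widetilde{O}(\nnz{\mA}+d^\omega)$, well within the stated $\widetilde{O}(\nnz{\mA}+d^{\omega+1})$.

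For $p>2$ your write-up has a real gap: the Lewis-weight-sampling-plus-oracle sketch never establishes that the row attaining the maximum (or one of comparable sensitivity) survives the sampling, as you yourself acknowledge. But the pivot was unnecessary. The same Lewis-weight sandwich in this regime reads $w_i\leq\sensgen{p}{\mai}\leq d^{p/2-1}w_i$ --- you quote exactly this inequality --- and it immediately gives that $d^{p/2-1}\max_i w_i$ is an overestimate of $\|\sensgen{p}{\mA}\|_\infty$ that is never more than a $d^{p/2-1}$ factor too large, again strictly better than the $d^{p/2}$ in \cref{thm_Lp_Max}, at the cost of a single Lewis-weight computation. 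Your worry that $\max_i w_i$ ``cannot be used as is'' rests on taking $\sqrt d$ as the target for $p>2$, whereas the theorem's target there is $d^{p/2}$, and $d^{p/2-1}$ clears it. The one thing the paper's $\ell_\infty$-coreset route buys in exchange for its coarser worst-case constant is instance-adaptivity: whenever the argmax row happens to be retained in $\mS_\infty\mA$, its output is a genuine constant-factor approximation rather than one systematically inflated by a $d^{|1/2-1/p|}$-type factor.
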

 \looseness=-1 
 
\item [{(2)}] \textbf{Hardness results.} \looseness=-1In the other direction, while it is known that $\ell_p$ sensitivity calculation reduces to $\ell_p$ regression, we show the converse. Specifically, we establish hardness of computing $\ell_p$ sensitivities in terms of the cost of solving multiple corresponding $\ell_p$ regression problems, by showing that a $(1+ \varepsilon)$-approximation to $\ell_p$-sensitivities solves $\ell_p$ multi-regression up to an accuracy factor of $1+\varepsilon$.

\begin{theorem}[\textbf{Leave-one-out $\ell_p$ multiregression reduces to $\ell_p$ sensitivities}; informal\footnote{Please see  \cref{lem:leaveOneOutL2MultiregressionCost} for the formal version with a small additive error.}]
Suppose that we are given an sensitivity approximation algorithm $\mathcal{A}$, which for any matrix $\mA^{\prime}\in\R^{n^\prime\times d^\prime}$ and accuracy parameter $\varepsilon^\prime\in(0,1)$, computes $(1\pm \varepsilon^\prime)\sensgen{p}{\mA^\prime}$ in time $\mathcal{T}(n^\prime, d^\prime, \nnz{\mA^\prime}, \varepsilon^\prime)$. 
Given a matrix $\mA\in \R^{n\times d}$ with $n\geq d$, let $\textrm{OPT}_i:= \min_{\vy\in\R^{d-1}} \|\mA_{:-i} \vy + \mA_{:i}\|_p^p$ and $\vy_i^\star:= \arg\min_{\vy\in\R^{d-1}} \|\mA_{:-i}\vy+\mA_{:i}\|_p$ for all the $i\in [d]$. Then, there exists an algorithm that takes $\mA\in\R^{n\times d}$ and computes $(1\pm\varepsilon)\textrm{OPT}_i $ for all $i$ in time $\mathcal{T}(n+d, d, \nnz{\mA}, \varepsilon)$.
 \end{theorem}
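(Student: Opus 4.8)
The plan is to express each value $\textrm{OPT}_i$ as a simple closed-form function of a single $\ell_p$ sensitivity of one auxiliary matrix, so that all $d$ leave-one-out regressions are solved by a single call to $\mathcal{A}$. First I would turn the unconstrained regression into a linearly constrained one: identifying $\vy\in\R^{d-1}$ with the vector $\vx\in\R^d$ having $x_i=1$ and $\vx_{-i}=\vy$, we have $\mA_{:-i}\vy+\mA_{:i}=\mA\vx$, hence $\textrm{OPT}_i=\min_{\vx:\,x_i=1}\|\mA\vx\|_p^p$ and $\vy_i^\star$ is the restriction to coordinates $\ne i$ of the corresponding minimizer. Next, fix a scale parameter $t>0$ (chosen below) and form $\mm\defeq\binom{t\mathbf{I}_d}{\mA}\in\R^{(n+d)\times d}$, whose first $d$ rows are $t\mathbf{e}_1^\top,\dots,t\mathbf{e}_d^\top$ and whose remaining $n$ rows are those of $\mA$. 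The identity block makes $\mm$ full rank with $n+d\ge d$ rows, so $\mathcal{A}$ applies; moreover $\nnz{\mm}=\nnz{\mA}+d$, which together with the dimensions of $\mm$ accounts for the claimed running time $\mathcal{T}(n+d,d,\nnz{\mA},\varepsilon)$.

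The crux is a clean formula for the $\ell_p$ sensitivity $\sensgen{p}{\mmi}$ of the $i$-th row $\mmi=t\mathbf{e}_i^\top$ of $\mm$. Since $\|\mm\vx\|_p^p=t^p\|\vx\|_p^p+\|\mA\vx\|_p^p$ and the sensitivity ratio is scale-invariant (and vanishes when $x_i=0$, while $\mm\vx\neq\vzero$ for all $\vx\neq\vzero$), we may restrict to $x_i=1$ and use $\|\vx\|_p^p=1+\sum_{j\ne i}|x_j|^p$ to obtain
\[
\sensgen{p}{\mmi}\;=\;\frac{t^p}{\,t^p+\textrm{OPT}_i^{(t)}\,},\qquad \textrm{OPT}_i^{(t)}\defeq\min_{\vx:\,x_i=1}\Big(t^p\!\!\sum_{j\ne i}|x_j|^p+\|\mA\vx\|_p^p\Big),
\]
equivalently $\textrm{OPT}_i^{(t)}=t^p\big(1/\sensgen{p}{\mmi}-1\big)$. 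So the algorithm is: run $\mathcal{A}$ on $\mm$ with accuracy $\varepsilon$, read off the first $d$ output coordinates $\widetilde\sigma_1,\dots,\widetilde\sigma_d$, and return $\widetilde{\textrm{OPT}}_i\defeq t^p(1/\widetilde\sigma_i-1)$ for each $i\in[d]$.

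Two estimates finish the analysis. First, propagating $\widetilde\sigma_i\in(1\pm\varepsilon)\sensgen{p}{\mmi}$ through the displayed identity gives, after routine algebra, $\widetilde{\textrm{OPT}}_i=(\textrm{OPT}_i^{(t)}\mp\varepsilon t^p)/(1\pm\varepsilon)$, i.e.\ $\widetilde{\textrm{OPT}}_i$ is a $(1\pm O(\varepsilon))$-approximation of $\textrm{OPT}_i^{(t)}$ up to an additive $O(\varepsilon t^p)$. Second, the regularizer $t^p\sum_{j\ne i}|x_j|^p$ is nonnegative and equals $t^p\|\vy_i^\star\|_p^p$ at the feasible point with $\vx_{-i}=\vy_i^\star$, so $\textrm{OPT}_i\le\textrm{OPT}_i^{(t)}\le\textrm{OPT}_i+t^p\|\vy_i^\star\|_p^p$. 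Combining the two, $\widetilde{\textrm{OPT}}_i=(1\pm O(\varepsilon))\textrm{OPT}_i\pm O\big(t^p(1+\|\vy_i^\star\|_p^p)\big)$ for every $i$, all from the single call to $\mathcal{A}$.

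The main obstacle is exactly controlling the regularization bias $t^p\|\vy_i^\star\|_p^p$, which is why the formal statement carries a small additive error rather than a pure relative one: to push it below a target additive level one must take $t$ small, and to know how small, one needs an a priori bound on $\max_i\|\vy_i^\star\|_p$. Such a bound is available since $\|\mA\vx^\star\|_p^p=\textrm{OPT}_i\le\|\mA_{:i}\|_p^p$ (take $\vy=\vzero$) while full-rankness of $\mA$ forces $\|\vx^\star\|_p$ to be at most a condition-number factor times $\|\mA\vx^\star\|_p$; choosing $t^p$ below the desired additive accuracy divided by this (polynomially bounded) quantity makes the bias negligible uniformly in $i$. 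I would also remark that running $\mathcal{A}$ separately on each $\binom{t\mathbf{e}_i^\top}{\mA}$ removes the cross-term $t^p\sum_{j\ne i}|x_j|^p$ entirely and yields a genuinely relative approximation as $t\to0$, at the cost of $d$ calls on $(n{+}1)\times d$ matrices rather than one; the single-call version trades this for speed.
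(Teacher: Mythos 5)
Your proof is correct and takes essentially the same route as the paper's proof of Lemma~\ref{lem:leaveOneOutL2MultiregressionCost}: append a scaled identity block to $\mA$, observe that the sensitivity of the $i$-th new row $t\mathbf{e}_i^\top$ equals $t^p/(t^p + \min_{x_i=1}\{t^p\sum_{j\neq i}|x_j|^p + \|\mA\vx\|_p^p\})$ by scale-invariance, and bound the regularized objective between $\textrm{OPT}_i$ and $\textrm{OPT}_i + t^p\|\vy_i^\star\|_p^p$ by testing $\vy=\vy_i^\star$. The only differences from the paper are cosmetic (you write $t$ for $\lambda$ and normalize $x_i=1$ rather than $x_i=\lambda^{-1}$, stacking the identity on top rather than below); your closing remarks about controlling the bias via a condition-number bound on $\|\vy_i^\star\|_p$ and about the $d$-call variant with no cross-term go slightly beyond what the paper records, but they are correct observations rather than corrections.
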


\looseness=-1For $p\neq 2$, approximating all $\ell_p$ sensitivities to a high accuracy would therefore require improving $\ell_p$ multi-regression algorithms with $\Omega(d)$ instances, for which the current best high-accuracy algorithms take $\text{poly}(d)\nnz{\mA}$ time~\citep{adil2019fast, jambulapati2022improved}. More concretely, our hardness results imply that in general one cannot compute all sensitivities as quickly as leverage scores unless there is a major breakthrough in multi-response regression. In order to show this, we introduce a reduction algorithm that efficiently regresses columns of $\mA$ against linear combinations of other columns of $\mA$, by simply adding a row to $\mA$ capturing the desired linear combination and then computing sensitivities, which will reveal the cost of the corresponding regression problem. We can solve $\Theta(d)$ of these problems by augmenting the matrix to increase the rows and columns by at most a constant factor. We defer these details  to \cref{sec:lowerbounds}.

\item [{(3)}] \textbf{Numerical experiments.} We consolidate our theoretical contributions with a demonstration of the empirical advantage of our approximation algorithms over the naive approach of estimating these sensitivities using the UCI Machine Learning Dataset Repository \cite{asuncion2007uci} in \cref{sec:ExperimentsMainBody}. We found that many datasets have extremely small total sensitivities; therefore, fast sensitivity approximation algorithms  utilize this small intrinsic dimensionality of real-world data far better than Lewis weights sampling, with sampling bounds often a factor of $2$-$5$x better. We also found  these sensitivities to be easy to approximate, with the accuracy-runtime tradeoff much better than our theory suggests, with up to $40$x faster in runtime. Lastly, we show that our algorithm for estimating the total sensitivity produces accurate estimates, up to small constant factors, with a  runtime speedup of at least $4$x.
\end{description}

\subsection{Related work} 
\looseness=-1 Introduced in the pioneering work of \cite{langberg2010universal}, sensitivity sampling falls under the broad framework of ``importance sampling''. It has found use in constructing  $\epsilon$-approximators of numerical integrands of several function classes in numerical analysis, statistics (e.g., in the context of VC dimension), and computer science (e.g., in coresets for clustering). The results from \cite{langberg2010universal} were refined and extended by \cite{feldman2011unified}, with more advances in the context of shape-fitting problems in \cite{varadarajan2012sensitivity, braverman2021efficient, tukan2022new}, clustering~\citep{feldman2011unified, bachem2015coresets, lucic2016strong}, logistic regression~\citep{huggins2016coresets, munteanu2018coresets}, least squares regression~\citep{drineas2006sampling, mahoney2011randomized, raj2020importance}, principal component analysis~\cite{maalouf2020tight}, reinforcement learning~\cite{wang2020reinforcement, kong2021online}, and pruning of deep neural networks~\cite{liebenwein2019provable, tukan2022pruning, mussay2021data}. As mentioned earlier, ~\cite{wy2023ICML} made significant progress in sharpening bounds on sample complexities obtained with sensitivity sampling. Sampling algorithms for $\ell_p$ subspace embeddings have also been extensively studied in functional analysis \cite{lewis1978finite, schechtman1987more, bourgain1989approximation, talagrand1990embedding, ledoux1991probability, schechtman2001embedding} as well as in theoretical computer science~\citep{cohen2015lp, cohen2015uniform, adil2019fast, jambulapati2022improved}. 

\looseness=-1Another notable line of work in importance sampling uses Lewis weights to determine sampling probabilities. As we explain in \cref{{sec:notation}}, Lewis weights are closely related to sensitivities. Many modern applications of Lewis weights in theoretical computer science we introduced in \cite{cohen2015lp}, which gave input sparsity time algorithms for approximating Lewis
weights and used them to obtain fast algorithms for solving $\ell_p$ regression. They have subsequently been used in row sampling algorithms for data pre-processing \cite{drineas2006sampling, drineas2012fast, li2013iterative, cohen2015uniform, cohen2015lp},  computing dimension-free strong coresets for $k$-median and subspace approximation \cite{sohler2018strong}, and fast tensor factorization in the streaming model \cite{chhaya2020streaming}.  Lewis weights are also used for  $\ell_1$  regression in:  \cite{durfee2018ell_1} for stochastic gradient descent pre-conditioning, \cite{pmlr-v119-li20o} for quantile regression, and \cite{braverman2020near} to provide algorithms for linear algebraic problems in the sliding window model. Lewis weights have
also become widely used in convex geometry \cite{laddha2020strong}, randomized numerical linear algebra \cite{cohen2015lp, clarkson2019dimensionality,
li2020nearly, chhaya2020streaming, mahankali2021optimal}, and machine learning \cite{mai2021coresets, chen2021query, parulekar2021l1}.

\section{Notation and preliminaries}\label{sec:notation}
\looseness=-1We use boldface uppercase and lowercase letters for matrices and vectors respectively. We denote the $i^{\mathrm{th}}$ row vector of a matrix $\mA$ by $\mathbf{a}_i$. Given a matrix $\mm$, we use $\tr(\mm)$ for its trace, $\mathrm{rank}(\mm)$ for its rank, $\mm^\dagger$ for its Moore-Penrose pseudoinverse, and $|\mm|$ for the number of its rows. When $x$ is an integer, we use $[x]$ for the set of integers $1, 2, \dotsc, x$. For two positive scalars $x$ and $y$ and a scalar $\alpha\in (0, 1)$, we use $x\approx_{\alpha} y$ to denote $(1-\alpha)y\leq x\leq (1+\alpha)y$; we sometimes use $x\approx y$ to mean $(1-c)y\leq x\leq (1+c)y$ for some appropriate universal constant $c$. We use $\widetilde{O}$ to hide dependence on $n^{o(1)}$. We acknowledge that there is some notation overloading between $p$ (when used to denote the scalar in, for example, $\ell_p$ norms) and $p_i$ (when used to denote some probabilities) but the distinction is clear from context. We defer the proofs of facts stated in this section to \cref{sec:AppendixGeneralResults}.

\paragraph{Notation for sensitivities.} We start with a slightly general definition of sensitivities: $\sensgen[\mB]{p}{\mai}$ to denote the $\ell_p$ sensitivity of $\mai$ with respect to $\mB$; when computing the sensitivity with respect to the same matrix that the row is drawn from, we omit the matrix superscript: 
\[ \sensgen[\mB]{p}{\mai} := \max_{\vx}\frac{|\mai^\top \vx|^p}{\|\mB\vx\|_p^p} \text{ and } \sensgen{p}{\mai} := \max_{\vx}\frac{|\mai^\top \vx|^p}{\|\mA\vx\|_p^p}.\numberthis\label{eq:DefSensGeneral}\] 

\looseness=-1When referring to the vector of $\ell_p$ sensitivities of all the rows of a matrix, we omit the subscript in the argument: so, $\sensgen[\mB]{p}{\mA}$ is the vector of $\ell_p$ sensitivities of rows of the matrix $\mA$, each computed with respect to the matrix $\mB$ (as in \cref{eq:DefSensGeneral}), and analogously, $\sensgen{p}{\mA}$ is the vector of $\ell_p$ sensitivities of the rows of matrix $\mA$, each computed with respect to itself. Similarly, the total sensitivity is the sum of $\ell_p$ sensitivities of the rows of $\mA$ with respect to matrix $\mB$, for which we use the following notation: 
\[\totalsensgen[\mB]{p}{\mA} := \sum_{i \in [|\mA|]} \sensgen[\mB]{p}{\mai} \text{ and } \totalsensgen{p}{\mA} := \sum_{i \in [|\mA|]} \sensgen{p}{\mai},\numberthis\label{eq:TotalSensGeneral}\]where, analogous to the rest of the notation regarding sensitivities, we omit the superscript when the sensitivities are being computed with respect to the input matrix itself.

\looseness=-1While $\sensgen[\mB]{p}{\mai}$ could be infinite, by appending $\mai$ to $\mB$, the generalized sensitivity becomes once again contained in $[0, 1]$. Specifically, a simple rearrangement of the definition gives us the identity  $\sensgen[\mB \cup \mai]{p}{\mai} = 1/(1 + 1/\sensgen[\mB]{p}{\mai})$. We now define leverage scores, which are a special type of sensitivities and also amenable to efficient computation~\cite{cohen2015uniform}, which makes them an ideal ``building block'' in algorithms for approximating sensitivities. 

\paragraph{Leverage scores are $\ell_2$ sensitivities.} Leverage scores are a key tool in theoretical computer science, machine learning, and statistics that quantify the importance of any row of a matrix in composing its rowspace. Formally, for a matrix $\mA\in\R^{n\times d}$, one may define the $i^\mathrm{th}$ leverage score as $\tau_i(\mA) = \min_{\vx\in \R^{n}: \mA^\top \vx= \mai}\|\vx\|_2^2$~\cite{cohen2015uniform}. This characterization captures the notion that if a row $\mai$ is orthogonal to all other rows of $\mA$, then its leverage score is $1$, and if all the rows are the same, then their leverage scores are all $d/n$. The previous convex program may be solved to see that the leverage score of the $i^{\mathrm{th}}$ row $\mathbf{a}_i$ of $\ma$ has the closed-form solution $\tau_i(\mA) \defeq \mathbf{a}_i^\top (\ma^\top \ma)^{\dagger} \mathbf{a}_i$. 
The $i^{\mathrm{th}}$ leverage score is also the $i^{\mathrm{th}}$ diagonal entry of the orthogonal projection matrix $\ma(\ma^\top \ma)^\dagger \ma^\top$. Since the eigenvalues of an orthogonal projection matrix are zero or one, it implies $\tau_i(\ma) = \mathbf{1}^\top_i \ma(\ma^\top \ma)^\dagger \ma^\top \mathbf{1}_i\leq 1$ for all $i\in [n]$, and the sum of all leverage scores satisfies $\sum_{i=1}^n \tau_i(\ma) \leq d$ (see \citep{foster1953stochastic}). 

\looseness=-1It turns out that $\tau_i(\mA)= \sensgen{2}{\mai}$ (see ~\citep{woodruff2022high}). This may be verified by applying a change of basis to the variable $\vy = \ma\vx$ in the definition of sensitivity and working out that the maximizer is the vector parallel to $\vy = \ma^\dagger \mai$. This also gives an explicit formula for the total $\ell_2$ sensitivity: $\totalsensgen{2}{\mA} = \textrm{rank}(\ma)$.

\looseness=-1 The fastest algorithm for constant factor approximation to leverage scores is by \cite{cohen2015uniform}, as stated next. 

\begin{fact}[{\cite[Lemma 7]{cohen2015uniform}}]\label{fact:uniformSamplingLevScoresCost}Given a matrix $\mA\in\R^{n\times d}$, we can compute constant-factor approximations to all its leverage scores in time $O(\nnz{\mA}\log(n) + d^{\omega}\log(d))$. 
\end{fact}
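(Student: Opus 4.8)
\medskip
\noindent\textbf{Proof sketch (proposal).}
The plan is to reduce the computation of all $n$ leverage scores to applying a single Johnson--Lindenstrauss sketch to the matrix $\mA(\mA^\top\mA)^{-1/2}$, using the identity $\tau_i(\mA)=\mai^\top(\mA^\top\mA)^{\dagger}\mai=\|(\mA(\mA^\top\mA)^{-1/2})_i\|_2^2$; I may assume $\mA$ has full column rank (otherwise restrict to its row space and replace inverses by pseudoinverses, and drop any zero rows so that $\nnz{\mA}\geq n$). The naive route already fails, since even forming $\mA^\top\mA$ costs $O(\nnz{\mA}\cdot d)$, which is slower than the target by a factor of $d$. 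So the first thing I would do is replace $\mA^\top\mA$ by a cheaply computable spectral proxy with only $\widetilde{O}(d)$ rows, and replace the $d$-dimensional rows of $\mA(\mA^\top\mA)^{-1/2}$ by length-$O(\log n)$ random sketches.

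Concretely: (i) apply a sparse oblivious subspace embedding $\mathbf{S}\in\R^{m\times n}$ with $m=O(d\log d)$ rows and $O(\log d)$ nonzeros per column (an OSNAP-type sketch), computing $\mathbf{S}\mA$ in $O(\nnz{\mA}\log d)$ time, so that with high probability $\tfrac12\mA^\top\mA\preceq(\mathbf{S}\mA)^\top(\mathbf{S}\mA)\preceq\tfrac32\mA^\top\mA$; (ii) compute a thin QR factorization $\mathbf{S}\mA=\mathbf{Q}\mathbf{R}$ with $\mathbf{R}\in\R^{d\times d}$ (equivalently, an inverse square root of the $d\times d$ Gram matrix), costing $O(md^{\omega-1})=O(d^\omega\log d)$ via block matrix multiplication; (iii) draw a Gaussian Johnson--Lindenstrauss matrix $\mathbf{G}\in\R^{k\times d}$ with $k=O(\log n)$ rows, form $\mathbf{M}=\mathbf{R}^{-1}\mathbf{G}^\top\in\R^{d\times k}$ in $O(d^2\log n)$ time, then $\mA\mathbf{M}\in\R^{n\times k}$ in $O(\nnz{\mA}\log n)$ time; (iv) output $\widetilde{\tau}_i=\|(\mA\mathbf{M})_i\|_2^2$ for each $i$. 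Summing the costs, and using $\nnz{\mA}\geq n$ together with $\omega\geq 2$ to absorb the $O(d^2\log n)$ and $O(n\log n)$ terms, gives the running time $O(\nnz{\mA}\log n+d^\omega\log d)$.

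For correctness I would argue in two steps. Since $\mathbf{S}$ is a subspace embedding, $\mA\mathbf{R}^{-1}$ has all singular values in a fixed constant band, so $\|(\mA\mathbf{R}^{-1})_i\|_2^2=\Theta(\tau_i(\mA))$ for every $i$ simultaneously --- this is the step that converts the spectral (Loewner-order) approximation of $\mA^\top\mA$ into a per-row multiplicative guarantee. Then, taking a constant-distortion Johnson--Lindenstrauss bound and a union bound over the $n$ fixed vectors $(\mA\mathbf{R}^{-1})_i$, with $k=O(\log n)$ one gets $\|(\mA\mathbf{R}^{-1}\mathbf{G}^\top)_i\|_2^2=\Theta(\|(\mA\mathbf{R}^{-1})_i\|_2^2)$ for all $i$ with high probability, whence $\widetilde{\tau}_i=\Theta(\tau_i(\mA))$; rescaling by an absolute constant turns these into one-sided overestimates if that form is preferred.

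The hard part will be the choice of sketch dimension. I need $\mathbf{S}\mA$ to have only $\widetilde{O}(d)$ rows so that the Gram/QR step costs $O(d^\omega\log d)$ rather than $O(d^{\omega+1})$; this rules out a plain CountSketch embedding (which needs $\Theta(d^2)$ rows) and forces an OSNAP-style construction with polylogarithmic column sparsity, and balancing embedding dimension, column sparsity, and distortion is the delicate part of the analysis. The only other place where $n$ enters the runtime is the Johnson--Lindenstrauss union bound over all $n$ rows, which is exactly what produces the $\log n$ factor multiplying $\nnz{\mA}$. (I note that the original argument of \cite{cohen2015uniform} instead reaches the same bound via a repeated-halving recursion driven by uniform row sampling rather than a single oblivious sketch.)
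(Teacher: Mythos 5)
The paper states this as a Fact imported from \cite[Lemma~7]{cohen2015uniform} and gives no proof, so there is no in-paper argument to compare against; the question is whether your derivation is sound and how it relates to the cited one. Your proof is correct, but, as you yourself note, it is a genuinely different route from the one in \cite{cohen2015uniform}. Yours is the ``oblivious sketch $+$ Johnson--Lindenstrauss'' pipeline: form $\mathbf{S}\mA$ with a sparse oblivious subspace embedding, take $\mathbf{R}^{-1}$ from a thin QR to whiten, and estimate the $n$ Euclidean row norms of $\mA\mathbf{R}^{-1}$ with a $k=O(\log n)$-column Gaussian JL applied on the right. The Loewner-order step converting $\tfrac12\mA^\top\mA\preceq(\mathbf{S}\mA)^\top\mathbf{S}\mA\preceq\tfrac32\mA^\top\mA$ into a uniform per-row multiplicative guarantee $\|(\mA\mathbf{R}^{-1})_i\|_2^2=\Theta(\tau_i(\mA))$ is exactly right, and your runtime accounting (absorbing $d^2\log n$ and $n\log n$ into $\nnz{\mA}\log n+d^\omega\log d$ via $\nnz{\mA}\ge n\ge d$) is correct. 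The one dependency you should make explicit: to get $O(d^\omega\log d)$ for the Gram/QR step \emph{and} $O(\nnz{\mA}\log n)$ for the sketch application simultaneously, you need an OSE with $m=O(d\log d)$ rows and $s=O(\log d)$ nonzeros per column, i.e.\ the tight OSNAP bound of Cohen (SODA 2016); the original Nelson--Nguyen parameters ($m=O(d\,\mathrm{polylog}\,d)$, $s=O(\mathrm{polylog}\,d)$) leave extra $\mathrm{polylog}(d)$ factors which are \emph{not} dominated by $\log n$ when $n=\mathrm{poly}(d)$, so ``polylogarithmic column sparsity'' is not by itself enough. By contrast, \cite{cohen2015uniform} proves the bound by a repeated-halving recursion driven by uniform row sampling: uniformly keep half the rows, recursively approximate leverage scores of the sample, convert them into bounded-sum overestimates of the scores of the full matrix, and refine with one round of leverage-score sampling; the $O(\log n)$ recursion depth is where their $\log n$ factor comes from. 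That route is self-contained, needs no oblivious embedding, and simultaneously yields the structural facts about uniform sampling that are the real point of that paper, whereas your route is shorter and more modular---it cleanly separates ``build a preconditioner'' from ``estimate $n$ row norms''---but inherits whatever constants and log factors the chosen OSE carries.
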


\looseness=-1In this paper, the above is the result we use to compute leverage scores (to a constant accuracy). Indeed, since the error accumulates multiplicatively across different steps of our algorithms, a high-accuracy algorithm for leverage score computation does not serve any benefit over this constant-accuracy one. We now state the connections between leverage scores and $\ell_p$ sensitivities for $p\neq 2$.

\looseness=-1Leverage scores approximate $\ell_1$ sensitivities up to a distortion factor of $\sqrt{n}$, as seen from the below known fact we prove, for completeness, in \cref{sec:AppendixGeneralResults}. 

\begin{restatable}[Crude sensitivity approximation via leverage scores]{fact}{factCrudeSensApproxViaLevScores}\label{fact:SandwichLemmaSensitivities}
    Given a matrix $\mA\in \R^{n\times d}$, let $\sensgen{1}{\mai}$ denote the $i^\mathrm{th}$ $\ell_1$ sensitivity of $\mA$ with respect to $\mA$, and let $\tau_i(\mA)$ denote its $i^\mathrm{th}$ leverage score with respect to $\mA$. Then we have  $\sqrt{\frac{\tau_i(\mA)}{n}}\leq \sensgen{1}{\mai}\leq \sqrt{\tau_i(\ma)}.$ 
\end{restatable}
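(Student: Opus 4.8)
The plan is to prove the two inequalities $\sqrt{\tau_i(\mA)/n} \le \sensgen{1}{\mai} \le \sqrt{\tau_i(\mA)}$ by exploiting the known identity $\tau_i(\mA) = \sensgen{2}{\mai}$ together with elementary norm comparisons between $\|\mA\vx\|_1$ and $\|\mA\vx\|_2$ on the $n$-dimensional vector $\mA\vx$. Concretely, write out the definitions
\[
\sensgen{1}{\mai} = \max_{\vx} \frac{|\mai^\top\vx|}{\|\mA\vx\|_1}, \qquad \sensgen{2}{\mai} = \max_{\vx} \frac{|\mai^\top\vx|^2}{\|\mA\vx\|_2^2} = \tau_i(\mA),
\]
and recall the standard inequalities $\|\vy\|_2 \le \|\vy\|_1 \le \sqrt{n}\,\|\vy\|_2$ for any $\vy \in \R^n$.

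For the upper bound, first I would fix the maximizer $\vx^\star$ for the $\ell_1$ sensitivity and use $\|\mA\vx^\star\|_1 \ge \|\mA\vx^\star\|_2$ to get
\[
\sensgen{1}{\mai} = \frac{|\mai^\top\vx^\star|}{\|\mA\vx^\star\|_1} \le \frac{|\mai^\top\vx^\star|}{\|\mA\vx^\star\|_2} \le \max_{\vx}\frac{|\mai^\top\vx|}{\|\mA\vx\|_2} = \sqrt{\sensgen{2}{\mai}} = \sqrt{\tau_i(\mA)}.
\]
For the lower bound, I would instead fix the maximizer $\vx^{\star\star}$ for the $\ell_2$ sensitivity (equivalently, use the closed-form maximizer parallel to $\mA^\dagger\mai$ noted in the excerpt) and apply $\|\mA\vx^{\star\star}\|_1 \le \sqrt{n}\,\|\mA\vx^{\star\star}\|_2$, giving
\[
\sensgen{1}{\mai} \ge \frac{|\mai^\top\vx^{\star\star}|}{\|\mA\vx^{\star\star}\|_1} \ge \frac{|\mai^\top\vx^{\star\star}|}{\sqrt{n}\,\|\mA\vx^{\star\star}\|_2} = \frac{1}{\sqrt{n}}\sqrt{\sensgen{2}{\mai}} = \sqrt{\frac{\tau_i(\mA)}{n}}.
\]
Both chains are self-contained once we invoke $\tau_i(\mA) = \sensgen{2}{\mai}$, which is justified in the preceding paragraph of the excerpt.

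I do not anticipate a genuine obstacle here; the only point requiring a little care is the direction of the argmax substitution in each case — one must plug the $\ell_1$-optimal $\vx$ into the $\ell_2$ expression for the upper bound, and the $\ell_2$-optimal $\vx$ into the $\ell_1$ expression for the lower bound, since in each case that is the direction in which replacing a max by a particular point gives a valid inequality. A minor bookkeeping remark: the inequalities hold verbatim regardless of which $\ell_p$ norm we normalize by, because the ratio is invariant under scaling $\vx$, so there is no loss in assuming any convenient normalization of the maximizers. Everything else is the two standard norm inequalities on $\R^n$ applied to the single vector $\mA\vx$.
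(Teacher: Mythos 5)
Your proof is correct and follows essentially the same route as the paper's: both rest on the norm comparison $\|\vy\|_2\le\|\vy\|_1\le\sqrt{n}\,\|\vy\|_2$ applied to $\vy=\mA\vx$, the identity $\tau_i(\mA)=\sensgen{2}{\mai}$, and plugging the $\ell_1$-maximizer into the $\ell_2$ ratio (upper bound) and vice versa (lower bound). The paper writes the two directions as a single chain of inequalities while you split them into two, but the argument is identical.
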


\looseness=-1In general, $\ell_p$ sensitivities of rows of a matrix suffer a distortion factor of at most $n^{|1/2-1/p|}$ from corresponding leverage scores, the proof of which follows from H\"older's inequality in the same as \cref{fact:SandwichLemmaSensitivities}. The $\ell_p$ generalizations of leverages scores are called $\ell_p$ Lewis weights~\citep{lewis1978finite, cohen2015lp}, which may be defined as the unique\footnote{The existence and uniqueness of such a $\mathbf{w}$ was shown by D.R.Lewis, after whom the weights are named~\cite{lewis1978finite}} vector of weights satisfying the recurrence $\mathbf{w}_i = \tau_i(\mathbf{W}^{1/2 - 1/p} \mA)$. Lewis weights have notably been at the core of recent breakthroughs in fast algorithms for linear programs, and we refer the reader to \cite{lee2019solving} for these details and alternate definitions and fundamental properties of Lewis weights. 

\looseness=-1 Unlike when $p = 2$, $\ell_p$ Lewis weights do \textit{not} equal $\ell_p$ sensitivities, and from an approximation perspective, they provide only a one-sided bound: $\sensgen[\mA]{p}{\mai} \leq d^{\, \max(0, p/2 - 1)} \, \mathbf{w}^{\mA}_p(\mai)$~\cite{woodruff2022high, musco2021active}.  While we have algorithms for efficiently computing leverage scores~\cite{cohen2015uniform} and Lewis weights~\cite{cohen2015lp, fazel2022computing}, extensions for utilizing these ideas for  fast and accurate sensitivity approximation algorithms for all $p$ have been limited, which is the gap this paper aims to fill.  
 A key regime of interest for many downstream algorithms is a small constant factor approximation to true sensitivities, such as for subsampling~\citep{woodruff2014sketching}. 

\paragraph{Subspace embeddings.} We use sampling-based constant-approximation $\ell_p$ subspace embeddings that we denote by $\mS_p$. When the type of embedding is clear from context, we omit the subscript. 

\begin{definition}[\textbf{$\ell_p$ subspace embedding}]\label{def:SubspaceEmbedding} \looseness=-1Given $\mA\in\R^{n\times d}$ (typically $n \gg d$), we call $\mS \in \R^{r\times n}$ (typically $d\leq r\ll n$)  an $\epsilon$-approximate $\ell_p$ subspace embedding of $\mA$ if $\|\mS\mA\vx\|_p{\approx_{\varepsilon}}\|\mA\vx\|_p$  $\forall\vx\in \R^d$. 
\end{definition}

\looseness=-1When the diagonal matrix $\mS$ is defined as $\mS_{ii} = 1/\sqrt{p_i}$ with probability $p_i = \Omega(\tau_i(\mA))$ (and $0$ otherwise), then with high probability $\mS\mA$ is an $\ell_2$ subspace embedding for $\mA$~\cite{drineas2006sampling, rudelson2007sampling}; further, $\mS$ has at most $\widetilde{O}(d\varepsilon^{-2})$ non-zero entries.  
For general $p$, we use Lewis weights to construct $\ell_p$ subspace embeddings fast \citep[Theorem 7.1]{cohen2015lp}. This fast construction has been made possible by the reduction of Lewis weight computation to a few (polynomial in $p$) leverage score computations for $p<4$ ~\citep{cohen2015lp} as well as $p\geq 4$~\citep{fazel2022computing}. This efficient construction is the reason for our choice of this subspace embedding. We note that there has been an extensive amount of literature on designing $\ell_1$ subspace embeddings for example using Cauchy~\cite{sohler2011subspace} and exponential~\cite{woodruff2013subspace}  random variables.  

\looseness=-1For $p \leq 2$, it is known that the expected sample complexity of rows is $\sum_{i=1}^n p_i = O(d)$ ~\citep{mahoney2011randomized}, and for $p > 2$, this is known to be at most $O(d^{p/2})$~\cite{woodruff2023online}. While these bounds are optimal in the worst case, one important application of faster sensitivity calculations is to compute the minimal subspace embedding dimension for average case applications, particularly those seen in practice. This gap is captured by the total sensitivity $\totalsensgen{p}{\mA}$, which has been used to perform more sample efficient subspace embeddings \cite{braverman2020near, braverman2021efficient, tukan2022new}, as sampling $\widetilde{O}(\totalsensgen{p}{\mA}d/ \epsilon^2)$ rows suffice to provide an $\ell_p$ subspace embedding.

\paragraph{Accurate sensitivity calculations.} While leverage scores give a crude approximation, constant-factor approximations of $\ell_p$ sensitivities seem daunting since we must compute worst-case ratios over the input space. However, we can compute the cost of one sensitivity specific row by  reducing it to $\ell_p$ regression since  by scale invariance, we have 
$\frac{1}{\sensgen[\mB]{p}{\mai}} = \min_{\mai^\top \vx = 1}{\|\mB\vx\|_p^p}$. 

\looseness=-1This reduction allows us to use recent developments in fast approximate algorithms for $\ell_p$ regression \citep{adil2019fast, meyer2022fast,  jambulapati2022improved}, which utilize subspace embeddings to first reduce the number of rows of the matrix to $O(\text{poly}(d))$.
For the specific case of $p = 1$, which is the focus of the main body of our paper, we may reduce to approximate $\ell_1$ regression on a $O(d) \times d$ matrix, which is a form of linear programming. 

\begin{restatable}{fact}{factCostOfComputingOneSensScore}\label{fact:CostOfOneSensitivity}
    Given an $n\times d$ matrix, the cost of computing $k$ of its $\ell_1$ sensitivities is $\widetilde{O}(\nnz{\mA} + k \cdot d^\omega)$. 
\end{restatable}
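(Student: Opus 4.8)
The plan is to reduce the computation of each individual $\ell_1$ sensitivity to an $\ell_1$-regression (equivalently, a small linear program) on a matrix with only $\widetilde{O}(d)$ rows, after a single shared preprocessing step that shrinks the row count of $\mA$ via an $\ell_1$ subspace embedding. First I would compute a constant-factor $\ell_1$ subspace embedding $\mS\in\R^{r\times n}$ of $\mA$ with $r=\widetilde{O}(d)$ rows by $\ell_1$ Lewis weight sampling, as in \cite[Theorem 7.1]{cohen2015lp}. By the discussion in \cref{sec:notation}, for $p=1$ the Lewis weights of $\mA$ can be approximated to a constant factor using a few leverage-score computations, each costing $\widetilde{O}(\nnz{\mA}+d^\omega)$ by \cref{fact:uniformSamplingLevScoresCost}, and forming $\mS\mA$ then costs an additional $\widetilde{O}(\nnz{\mA})$; the resulting $\mS\mA$ satisfies $\|\mS\mA\vx\|_1\approx_{c}\|\mA\vx\|_1$ for all $\vx\in\R^d$, with high probability, for a small universal constant $c$ (\cref{def:SubspaceEmbedding}). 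The key point is that this one embedding serves all rows simultaneously, since it distorts the functional $\vx\mapsto\|\mA\vx\|_1$ uniformly over the entire column space.

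Next, fix any target row $\mai$. Using the pointwise norm approximation in the denominator (and keeping the \emph{original} row $\mai$ in the numerator), one gets
\[
\sensgen[\mA]{1}{\mai}=\max_{\vx}\frac{|\mai^\top\vx|}{\|\mA\vx\|_1}\approx_{O(c)}\max_{\vx}\frac{|\mai^\top\vx|}{\|\mS\mA\vx\|_1}=\sensgen[\mS\mA]{1}{\mai},
\]
so it suffices to approximate $\sensgen[\mS\mA]{1}{\mai}$ up to a constant factor. By scale invariance, $1/\sensgen[\mS\mA]{1}{\mai}=\min_{\mai^\top\vx=1}\|\mS\mA\vx\|_1$, which is an $\ell_1$-minimization problem in $d$ variables with a single linear equality constraint and an objective matrix of $r=\widetilde{O}(d)$ rows, i.e. a linear program with $\widetilde{O}(d)$ variables and $\widetilde{O}(d)$ constraints; a constant-factor (indeed even high-accuracy) solution costs $\widetilde{O}(d^\omega)$ using current fast (approximate) $\ell_1$-regression / linear-programming solvers (e.g. \cite{adil2019fast, jambulapati2022improved}). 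Composing the $(1\pm O(c))$ embedding error with the constant-factor regression error still yields an overall constant-factor approximation of $\sensgen[\mA]{1}{\mai}$.

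Finally I would aggregate: the preprocessing (Lewis weights, embedding, forming $\mS\mA$) costs $\widetilde{O}(\nnz{\mA}+d^\omega)$ once, and each of the $k$ requested sensitivities costs $\widetilde{O}(d^\omega)$ while reusing the single precomputed $\mS\mA$, for a total of $\widetilde{O}(\nnz{\mA}+d^\omega)+k\cdot\widetilde{O}(d^\omega)=\widetilde{O}(\nnz{\mA}+k\cdot d^\omega)$, as claimed. The only nontrivial step — and hence the main obstacle — is justifying that the $\ell_1$ subspace embedding can be built in input-sparsity-plus-$d^\omega$ time while shrinking the row count to $\widetilde{O}(d)$, and that substituting $\mS\mA$ for $\mA$ in the denominator preserves \emph{every} sensitivity up to a constant factor; both follow from the uniform (over $\vx$) guarantee of an $\ell_1$ subspace embedding, provided one is careful to leave the numerator $\mai^\top\vx$ untouched so that the quantity actually computed is exactly the generalized sensitivity $\sensgen[\mS\mA]{1}{\mai}$.
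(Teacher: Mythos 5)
Your proof is correct and follows essentially the same route as the paper's: precompute a single constant-factor $\ell_1$ subspace embedding $\mS\mA$ with $\widetilde{O}(d)$ rows, observe that keeping $\mai$ in the numerator while replacing the denominator norm by $\|\mS\mA\vx\|_1$ gives $\sensgen[\mS\mA]{1}{\mai}\approx_{O(1)}\sensgen{1}{\mai}$, reduce each such quantity to a small linear program $\min_{\mai^\top\vx=1}\|\mS\mA\vx\|_1$ of size $\widetilde{O}(d)\times O(d)$, and sum costs. The only cosmetic differences are that you spell out the Lewis-weight-based construction of $\mS$ and cite approximate $\ell_1$-regression solvers, whereas the paper cites a fast LP solver directly for the $d^\omega$-per-query bound.
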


\section{Approximating functions of sensitivities}
\looseness=-1 We first present a {constant-probability} algorithm approximating the $\ell_1$ sensitivities in \cref{alg_ourSubroutine}. 
Our algorithm is a natural one: Since computing the $\ell_1$ sensitivities of all the rows simultaneously is computationally expensive, we instead  hash $\alpha$-sized blocks of random rows into smaller (constant-sized) buckets and compute the sensitivities of the smaller, $O(n/\alpha)$-sized matrix $\mP$ so generated, computing the sensitivities  with respect to $\mS\mA$, the $\ell_1$ subspace embedding of $\mA$. Running this algorithm multiple times gives our high-probability guarantee via the standard median trick.

\begin{algorithm}
\caption{\textbf{Approximating $\ell_1$-Sensitivities: Row-wise Approximation}}\label{alg_ourSubroutine}
\begin{algorithmic}[1]
\Statex \textbf{Input: }{Matrix $\mA \in \R^{n \times d}$, approximation factor $\alpha\in (1, n)$}

\Statex \textbf{Output: }Vector $\sensEst\in \R^n_{>0}$ that satisfies, for each $i\in[n]$, with probability $0.9$, that \[\sensgen{1}{\mai}{} \leq \sensEsti \leq O(\sensgen{1}{\mai}{} + \totalsensgen{1}{\mA} \tfrac{\alpha}{n})\]

\State Compute for $\mA$ an $\ell_1$ subspace embedding $\mS \mA\in  O(d) \times d$ (cf. \cref{def:SubspaceEmbedding}) \label{line:SubspaceEmbOfAInFirstAlg}

\State Partition $\mA$ into $\tfrac{n}{\alpha}$ blocks $\mB_1, \dotsc, \mB_{n/\alpha}$ each comprising $\alpha$ randomly selected rows \label{line:partition}

\For{the $\ell^{\mathrm{th}}$ block $\mB_{\ell}$, with $\ell\in [\tfrac{n}{\alpha}]$}

    \State Sample $\alpha$-dimensional independent Rademacher vectors $\randsigns_1^{(\ell)}, \dotsc, \randsigns_{100}^{(\ell)}$ 

    \State For each $j{\in}[100]$, compute the row vectors\footnotemark\,$\randsigns_j^{(\ell)} \mB_{\ell}\in\R^d$ \label[line]{line: CompressingBlockIntoARow}

\EndFor

\State Let $\mP\in \R^{100\tfrac{n}{\alpha} \times d}$ be the matrix of all vectors from \cref{line: CompressingBlockIntoARow}. Compute $\sensgen[\mS\mA]{1}{\mP}$ using \cref{fact:CostOfOneSensitivity} \label[line]{line:DefOfMatrixPinL1SensEst}

\For{each $i \in [n]$}
 \State Denote by $J$ the set of row indices in $\mP$ that $\mathbf{a}_i$ is mapped to in \cref{line: CompressingBlockIntoARow}\label[line]{line:NamingTheNewSmallBucketJ}  
 
 \State Set $\sensEsti = \max_{j\in J} (\sensgen[\mS\mA]{1}{\mathbf{p}_j})$ \label[line]{line:ComputingNewSensForEachI}
\EndFor

\State \textbf{Return}  $\sensEst$ \label{line:finalSens} 

\end{algorithmic}
\end{algorithm}

\footnotetext{We are omitting the transposes on $\mathbf{r}_i^{\ell}$ to avoid a cluttered notation.}

\begin{restatable}{theorem}{thmEllOneApprox}\label{thm_L1} Given a full-rank matrix $\mA\in \R^{n\times d}$ and an approximation factor $1< \alpha\ll n$, let $\sensgen{1}{\mai}$ be the $\ell_1$ sensitivity of the $i^{\mathrm{th}}$ row of $\mA$. Then there exists an algorithm that, in time $\Otil\left(\nnz{\mA} + \frac{n}{\alpha}\cdot d^\omega\right)$, returns $\sensEst\in \R^n_{\geq0}$ such that with high probability, for each $i\in[n]$, \[\sensgen{1}{\mai}\leq \sensEsti \leq O(\sensgen{1}{\mai}+\tfrac{\alpha}{n}\totalsensgen{1}{\mA} ).\numberthis\label[ineq]{eq:EllOneApproxBounds}\] 
\end{restatable}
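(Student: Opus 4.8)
The plan is to fix an index $i\in[n]$, prove the two-sided bound in \cref{eq:EllOneApproxBounds} for it with constant probability, and then amplify to ``for all $i$ with high probability'' by running the algorithm $\Theta(\log n)$ times and taking the coordinatewise median, as indicated after the algorithm. Fix $i$, let $\mB_\ell$ be the (random) block containing $\mai$ and $J$ the corresponding $100$ rows of $\mP$, and let $\vx_i^\star$ be a maximizer of $|\mai^\top\vx|/\|\mA\vx\|_1$, which exists by compactness (restrict to $\|\mA\vx\|_1=1$, using that $\mA$ is full rank), so $\sensgen{1}{\mai}=|\mai^\top\vx_i^\star|/\|\mA\vx_i^\star\|_1$. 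I take $\mS\mA$ in \cref{line:SubspaceEmbOfAInFirstAlg} to be a constant-factor $\ell_1$ subspace embedding that is additionally \emph{contractive}, i.e.\ rescaled so $\|\mS\mA\vx\|_1\le\|\mA\vx\|_1$ for all $\vx$ while still $\|\mS\mA\vx\|_1=\Theta(\|\mA\vx\|_1)$; the rescaling costs only a constant and is precisely what lets the lower bound come out with leading constant $1$.

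\textbf{Lower bound ($\sensgen{1}{\mai}\le\sensEsti$).} The mechanism is the elementary inequality $|S+c|+|S-c|\ge 2|c|$ for real $S,c$, which says that a uniformly random sign on $c$ makes $|S\pm c|\ge|c|$ with probability $\ge\tfrac12$. Conditioning on the partition and applying this with $\{c_k\}=\{\mathbf a_k^\top\vx_i^\star\}_{k\in\mB_\ell}$, each $\mathbf p_j=\randsigns_j^{(\ell)}\mB_\ell$ (for $j\in J$) satisfies $|\mathbf p_j^\top\vx_i^\star|\ge|\mai^\top\vx_i^\star|$ independently with probability $\ge\tfrac12$, so some $j\in J$ does with probability $\ge 1-2^{-100}$. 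For that $j$, evaluating the max in $\sensgen[\mS\mA]{1}{\mathbf p_j}$ at $\vx=\vx_i^\star$ and using contractiveness gives $\sensEsti\ge\sensgen[\mS\mA]{1}{\mathbf p_j}\ge |\mathbf p_j^\top\vx_i^\star|/\|\mS\mA\vx_i^\star\|_1\ge |\mai^\top\vx_i^\star|/\|\mA\vx_i^\star\|_1=\sensgen{1}{\mai}$.

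\textbf{Upper bound.} Since $\mathbf p_j$ is a $\pm1$-combination of the rows of $\mB_\ell$, for every $\vx$ we have $|\mathbf p_j^\top\vx|/\|\mS\mA\vx\|_1\le\sum_{k\in\mB_\ell}|\mathbf a_k^\top\vx|/\|\mS\mA\vx\|_1$; taking maxima and using that $\mS\mA$ is an $\ell_1$ embedding, $\sensEsti=\max_{j\in J}\sensgen[\mS\mA]{1}{\mathbf p_j}\le\sum_{k\in\mB_\ell}\sensgen[\mS\mA]{1}{\mathbf a_k}\le O(1)\sum_{k\in\mB_\ell}\sensgen{1}{\mathbf a_k}$. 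Now I invoke the randomness of the partition in \cref{line:partition}: conditioned on $\mai\in\mB_\ell$, the remaining $\alpha-1$ rows of $\mB_\ell$ form a uniformly random $(\alpha-1)$-subset of the other $n-1$ rows, so $\E[\sum_{k\in\mB_\ell}\sensgen{1}{\mathbf a_k}\mid \mai\in\mB_\ell]=\sensgen{1}{\mai}+\tfrac{\alpha-1}{n-1}(\totalsensgen{1}{\mA}-\sensgen{1}{\mai})\le\sensgen{1}{\mai}+\tfrac{\alpha}{n}\totalsensgen{1}{\mA}$. By Markov, $\sum_{k\in\mB_\ell}\sensgen{1}{\mathbf a_k}\le 10(\sensgen{1}{\mai}+\tfrac{\alpha}{n}\totalsensgen{1}{\mA})$ with probability $\ge 9/10$, giving $\sensEsti=O(\sensgen{1}{\mai}+\tfrac{\alpha}{n}\totalsensgen{1}{\mA})$.

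\textbf{Amplification and runtime.} A union bound over the two events above (and the $1/\mathrm{poly}(n)$ failure probability of constructing $\mS$) gives \cref{eq:EllOneApproxBounds} for each fixed $i$ with probability $\ge 0.89$; repeating $\Theta(\log n)$ times and taking the coordinatewise median upgrades this, by a Chernoff bound and a union bound over $i$, to all $i$ simultaneously with probability $1-1/\mathrm{poly}(n)$. For the cost: \cref{line:SubspaceEmbOfAInFirstAlg} is $\Otil(\nnz{\mA}+d^\omega)$ via the fast $\ell_1$ Lewis-weight embedding of \cite{cohen2015lp}; \cref{line: CompressingBlockIntoARow} is $O(\nnz{\mA})$ in aggregate (each nonzero is touched $O(1)$ times across the $100$ combinations); \cref{line:DefOfMatrixPinL1SensEst} computes $100n/\alpha$ sensitivities of rows against the $O(d)\times d$ matrix $\mS\mA$, each an $O(d)$-size $\ell_1$ regression costing $\Otil(d^\omega)$ by \cref{fact:CostOfOneSensitivity}, for $\Otil(\tfrac n\alpha d^\omega)$ total; the last loop is $O(n)$. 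Multiplying by the $\Otil(1)$ repetitions gives $\Otil(\nnz{\mA}+\tfrac n\alpha d^\omega)$. I expect the delicate point to be the exactness of the lower bound: it relies on coupling the sign-flip inequality (so that some single one of the $100$ compressed rows already dominates $\mai$ on $\mai$'s own maximizer) with a contractively normalized embedding, and on verifying that the three sources of randomness — embedding, partition, and Rademacher signs — decouple as claimed; the upper bound, by contrast, is a routine subadditivity-plus-Markov argument.
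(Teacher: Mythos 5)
Your proof is correct and follows essentially the same route as the paper's: hash rows of $\mA$ into blocks, compress each block via Rademacher sign combinations into a handful of rows, compute those rows' sensitivities against an $\ell_1$ subspace embedding $\mS\mA$, then establish the lower bound via the sign-flip inequality applied at $\vx_i^\star$ and the upper bound via H\"older (subadditivity of $|\cdot|$) together with Markov on the expected block mass, followed by median amplification. Your Markov argument, the conditional expectation computation for a random $(\alpha-1)$-subset, and the runtime accounting all match the paper.

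One place where you are actually \emph{more} careful than the paper: the theorem asserts the exact lower bound $\sensgen{1}{\mai}\le\sensEsti$ with leading constant $1$, but the paper's chain only yields $\sensEsti\ge\Theta(1)\cdot\sensgen{1}{\mai}$, because the embedding distortion leaves an unspecified constant; the paper implicitly needs a rescaling of the output to close this gap, which it does not make explicit. Your fix — normalizing $\mS$ so that $\|\mS\mA\vx\|_1\le\|\mA\vx\|_1$ while keeping $\Theta(1)$-distortion, so the lower bound drops out with constant exactly $1$ and the upper bound only absorbs another $O(1)$ — is a clean way to make the stated bound literal. Likewise, your observation that the lower-bound success probability is $1-2^{-100}$ (because you only need one of the $100$ independent Rademacher rows to dominate $\mai$ at $\vx_i^\star$, not all of them) is the right way to describe the event; the paper calls this a "union bound," which is an imprecise description of the same fact. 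These are refinements, not a different argument.

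The only thing worth stating more explicitly in your write-up is why the sign-flip event for the row index $j$ in $J$ that you select is independent of the block composition: conditioning on the partition, the Rademacher vectors $\randsigns^{(\ell)}_1,\dots,\randsigns^{(\ell)}_{100}$ are drawn fresh, so the lower-bound event (some $j$ has $|\mathbf p_j^\top\vx_i^\star|\ge|\mai^\top\vx_i^\star|$) and the upper-bound event ($\sum_{k\in\mB_\ell}\sensgen{1}{\mathbf a_k}$ is not too large, which depends only on the partition) can be union-bounded without further coupling. You allude to this decoupling at the end; it is worth one sentence spelling it out, but it does not affect correctness.
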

\begin{proof} We achieve the guarantee of this theorem via \cref{alg_ourSubroutine}. Note that in \cref{line:partition} of \cref{alg_ourSubroutine}, the rows of $\mA$ are partitioned into randomly created $n/\alpha$ blocks. 
Suppose $\mathbf{a}_i$, the $i^{\mathrm{th}}$ row of $\ma$, falls into the block $\mB_{\ell}$. Further, in  \cref{line:NamingTheNewSmallBucketJ}, the rows from $\mB_{\ell}$ are mapped to those in $\mP$ with row indices in the set $J$, from which we compute the $i^\mathrm{th}$ $\ell_1$ sensitivity estimate as $\sensEsti = \max_{j\in J}\sensgen[\mS\mA]{1}{\mathbf{p}_j}$. Because of $\mS\mA$ being an $\ell_1$ subspace embedding for $\mA$, we have that for all $\vx\in \R^d$,  
\[\|\mS\mA \vx\|_1 = \Theta(\|\mA \vx\|_1).\numberthis\label{eq:CxL1NormEqualsAxL1Norm} \] We use \cref{eq:CxL1NormEqualsAxL1Norm} below to establish the claimed bounds. Let $\vx^\star = \arg\max_{\vx\in \R^d} \frac{|\mai^\top \vx|}{\|\mA\vx\|_1}$ be the vector that realizes the $i^{\mathrm{th}}$ $\ell_1$ sensitivity of $\mA$. Further, let the $j^{\mathrm{th}}$ row of $\mP$ be $\randsigns_k^{(\ell)} \mB_{\ell}$. Then, with a probability of at least $1/2$, we have 
\begin{align*}
    \sensgen[\mS\mA]{1}{\mathbf{p}_j} &= \max_{\vx\in \R^d} \frac{|\randsigns_k^{(\ell)} \mB_{\ell} \vx|}{\|\mS\mA\vx\|_1}\geq  \frac{|\randsigns_k^{(\ell)} \mB_{\ell} \vx^\star|}{\|\mS\mA\vx^\star\|_1}= \Theta(1) \frac{|\randsigns_k^{(\ell)} \mB_{\ell}\vx^\star|}{\|\mA\vx^\star\|_1}\geq \Theta(1) \frac{|\mai^\top \vx^\star|}{\|\mA\vx^\star\|_1} =\Theta(1)\sensgen{1}{\mai},  \numberthis\label[ineq]{eq:L1ApproxUBintermediate1First}
\end{align*} 
where the first step is by definition of $\sensgen[\mS\mA]{1}{\mathbf{p}_j}$, the second is by evaluating the function being maximized at a specific choice of $\vx$, the third step is by applying \cref{eq:CxL1NormEqualsAxL1Norm}, and the final step is by definition of $\vx^\star$ and $\sensgen{1}{\mai}$. For the fourth step, we use the fact that $|\randsigns_k^{\ell} \mB_{\ell} \vx^\star| = |\randsigns_{k,i}^{\ell}\mai^\top \vx^\star + \sum_{j\neq i} \randsigns_{k,j}^{\ell}(\mB_{\ell} \vx^\star)_j| \geq |\mai^\top \vx^\star|$ with a probability of at least $1/2$ since the vector $\randsigns_k^{\ell}$ has coordinates that are $+1$ or $-1$ with equal probability. By a union bound over the $|J|$ independent rows that block $\mB_{\ell}$ is mapped to, we establish the claimed lower bound in \cref{eq:L1ApproxUBintermediate1First} with a probability of at least $0.9$. To show an upper bound on $\sensgen[\mS\mA]{1}{\mathbf{p}_j}$, we observe that 
\begin{align*}
         \sensgen[\mS\mA]{1}{\mathbf{p}_j} &= \max_{\vx\in \R^d} \frac{|\randsigns_k^{(\ell)} \mB_{\ell} \vx|}{\|\mS\mA\vx\|_1}\leq \max_{\vx\in \R^d}   \frac{\| \mB_{\ell} \vx\|_1}{\|\mS\mA\vx\|_1}= \Theta(1) \max_{\vx\in\R^d}\frac{\| \mB_{\ell} \vx\|_1}{\|\mA\vx\|_1} \leq \Theta(1)\sum_{j: \mathbf{a}_j \in \mB_{\ell}} \max_{\vx\in\R^d}  \frac{|\mathbf{a}_j^\top \vx|}{\|\mA\vx\|_1}, \numberthis\label[ineq]{eq:L1ApproxUBintermediate1}  
\end{align*} where the second step is by H\"older inequality and the fact that the entries of $\randsigns_k^{(\ell)}$ are all bounded between $-1$ and $+1$, and the third step is by \cref{eq:CxL1NormEqualsAxL1Norm}.
The final term of the above chain of inequalities may be rewritten as $\Theta(1) \sum_{j: \mathbf{a}_j\in \mB_{\ell}} \sensgen{1}{\mathbf{a}_j}$, by the definition of $\mB_{\ell}$. 
Because $\mB_{\ell}$ is a group of $\alpha$ rows selected uniformly at random out of $n$ rows and contains the row $\mathbf{a}_i$, we have: \[ \E\left\{\sum_{\substack{j:\mathbf{a}_j \in \mB_{\ell},\\ j\neq i}} \sensgen{1}{\mathbf{a}_j}\right\} = \frac{\alpha-1}{n-1} \sum_{j\neq i}\sensgen{1}{\mathbf{a}_j}.\] Therefore, Markov inequality gives us that with a probability of at least $0.9$, we have \[\sensgen[\mS\mA]{1}{\mathbf{p}_j} \leq \sensgen{1}{\mathbf{a}_j} + \totalsensgen{1}{\mA}O(\tfrac{\alpha}{n}).\numberthis\label[ineq]{eq:L1ApproxUpperBoundInt2}\]
The median trick then  establishes the bounds in \cref{eq:L1ApproxUBintermediate1First} and \cref{eq:L1ApproxUpperBoundInt2} with high probability. The claimed runtime is obtained by summing the cost of constructing $\mS\mA$ and $O(n/\alpha)$ computations of $\ell_1$ sensitivities with respect to $\mS\mA$, for which we may use \cref{fact:CostOfOneSensitivity}. 
\end{proof}

\looseness=-1 
As seen in \cref{sec:LpNormAppendix_All}, our techniques described above also generalize to the $p\geq 1$ case. Specifically, in \cref{thm_Lp}, we show that reducing $\ell_p$ sensitivity calculations by an $\alpha$ factor gives an approximation guarantee of the form  $O(\alpha^{p-1}\sensgen{p}{\mai})+ \tfrac{\alpha^{p}}{n}\totalsensgen{p}{\mA})$. 

\begin{remark} 
\looseness=-1The sensitivities returned by our algorithm are approximate, with relative and additive error, but are useful because they preserve $\ell_p$ regression approximation guarantees while increasing total sample complexity by only a small $\textrm{poly}(d)$ factor compared to true sensitivities while still keeping it much smaller than that obtained via Lewis weights. To see this with a simple toy example, consider the case $p > 2$. Note that by Theorem $1.5$ of \cite{wy2023ICML}, the sample complexity using approximate sensitivities is $\widetilde{O}(\alpha^{2p-2}\mathfrak{S}_p^{2-2/p}(\mA))$, using the $\alpha^p$ factor blowup via \cref{thm_Lp}, that using true sensitivities is $\widetilde{O}(\mathfrak{S}_p^{2-2/p}(\mA))$, and that using $\ell_p$ Lewis weights is $O(d^{p/2})$. Suppose also that the total sensitivity $\totalsensgen{p}{\mA}=d$. Further assume that $n=d^{10}$ and $\alpha = n^{\frac{1}{10p}}=d^{1/p}$; then the dimension-dependence of sample complexities given by our approximate sensitivities, true sensitivities, and Lewis weights are, respectively, $\widetilde{O}(d^4), \widetilde{O}(d^2)$, and $\widetilde{O}(d^{p/2})$ (ignoring small factors in the exponent that do not affect the asymptotic analysis). Thus, our approximate sensitivities provide a tradeoff in computational cost and sample complexity between using \emph{true} sensitivities and Lewis weights. 
\end{remark}
\subsection{Estimating the sum of $\ell_p$ sensitivities} \label{sec:LpNormAppendix_Sum}

In this section, we present a one-shot algorithm that provides a constant-approximation to the total $\ell_p$ sensitivity for all $p\geq 1$. This algorithm is based on importance sampling using $\ell_p$ Lewis weights. For our desired guarantees, we crucially need the approximation bounds of sensitivities via $\ell_p$ Lewis weights, as provided by \cref{fact:Sensitivities-LewisWeights}, efficient computation of $\ell_p$ Lewis weights as provided by ~\cite{cohen2015lp} for $p<4$ and by~\cite{fazel2022computing} for $p \geq 4,$ and the fact that a random $\ell_p$ sampling matrix $\mS$ (cf. \cref{def:RandomLpSamplingMatrix}) yields $\mS\mA$, a constant-approximation subspace embedding to $\mA$~\cite[Theorem $1.8$]{wy2023ICML}. 

\begin{algorithm}
\caption{\textbf{Approximating the Sum of $\ell_p$-Sensitivities for $p\geq 1$}}\label{alg_SumLpApprox-all_P}
\begin{algorithmic}[1]

\Statex \textbf{Input: }{Matrix $\mA \in \R^{n \times d}$, approximation factor $\gamma\in (0.01, 1)$, and a scalar $p\geq 1$}

\Statex \textbf{Output: }{A positive scalar that satisfies, with probability $0.99$, that \[\totalsensgen{p}{\mA} \leq \widehat{s} \leq  (1+O(\gamma))\totalsensgen{p}{\mA}\]} 

\State Compute all $\ell_p$ Lewis weights of $\mA$, denoting the $i^\mathrm{th}$ weight by $\mathbf{w}_p(\mai)$

\State Define the sampling vector $v\in\R^n_{\geq0}$ by $v_i = \frac{\mathbf{w}_p(\mai)}{d}$ for all $i\in[n]$ \label{line:PickViInGeneralSumApproxAlg}

\State Sample $m = O(d^{|1-p/2|})$ rows with replacement, picking the $i^\mathrm{th}$ row with a probability of $v_i$ 

\State Construct a random $\ell_p$ sampling matrix $\mathbf{S}_p\mA$  with probabilities $\{v_i\}_{i=1}^n$ (cf. \cref{def:RandomLpSamplingMatrix})

\State For each sampled row $i_j$, $j\in[m]$, compute $r_j = \frac{\sensgen[\mS_p\mA]{p}{\mathbf{a}_{i_j}}}{v_{i_j}}$ \label{line:ComputeRjForGeneralSumApproxAlg}

\State Return $\tfrac{1}{m}\sum_{j=1}^m r_{j}$

\end{algorithmic}
\end{algorithm}

\begin{fact}[Lewis Weights bound Sensitivities, \cite{clarkson2019dimensionality, musco2022active}]\label{fact:Sensitivities-LewisWeights} Given a matrix $\mA\in\R^{n\times d}$ and $p\geq 1$, 
the $\ell_p$ sensitivity $\sensgen[\mA]{p}{\mai}$ and $\ell_p$ Lewis weight $\mathbf{w}_p(\mai)$ of the row $\mai$ satisfy, for all $i\in[n]$,  \[\sensgen[\mA]{p}{\mai} \leq d^{\, \max(0, p/2 - 1)} \, \mathbf{w}_p(\mai).\]  
\end{fact}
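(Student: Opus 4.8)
The plan is to reduce the Lewis‑weight bound on $\ell_p$ sensitivities to the (exact) fact that leverage scores are $\ell_2$ sensitivities, applied to the Lewis‑reweighted matrix. Set $\mathbf{B} := \mathbf{W}^{1/2-1/p}\mA$, where $\mathbf{W} = \mathrm{diag}(\mathbf{w}_p(\mathbf{a}_1),\dots,\mathbf{w}_p(\mathbf{a}_n))$, so that its $i$-th row is $\mathbf{b}_i = \mathbf{w}_p(\mai)^{1/2-1/p}\mai$. By the defining fixed‑point equation of Lewis weights together with the identity $\tau_i(\mathbf{B}) = \sensgen[\mathbf{B}]{2}{\mathbf{b}_i}$, we get $\sensgen[\mathbf{B}]{2}{\mathbf{b}_i} = \mathbf{w}_p(\mai)$, i.e.\ $(\mathbf{b}_i^\top\vx)^2 \le \mathbf{w}_p(\mai)\,\|\mathbf{B}\vx\|_2^2$ for every $\vx$. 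Substituting $\mathbf{b}_i = \mathbf{w}_p(\mai)^{1/2-1/p}\mai$, cancelling the common factor $\mathbf{w}_p(\mai)^{1-2/p}$, and raising to the power $p/2$ yields the pointwise inequality
\[ |\mai^\top\vx|^p \;\le\; \mathbf{w}_p(\mai)\,\|\mathbf{B}\vx\|_2^p \qquad \text{for all }\vx, \]
so after maximizing over $\vx$ in the definition of $\sensgen{p}{\mai}$ it only remains to compare $\|\mathbf{B}\vx\|_2$ with $\|\mA\vx\|_p$.

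For that comparison I would use the identity $\|\mA\vx\|_p^p / \|\mathbf{B}\vx\|_2^p = \sum_{j}\mathbf{w}_p(\mathbf{a}_j)\,(\theta_j/\mathbf{w}_p(\mathbf{a}_j))^{p/2}$, where $\theta_j := (\mathbf{b}_j^\top\vx)^2/\|\mathbf{B}\vx\|_2^2$, which follows by expanding $\mathbf{b}_j = \mathbf{w}_p(\mathbf{a}_j)^{1/2-1/p}\mathbf{a}_j$. Four facts about the $\theta_j$'s do the rest: they are nonnegative; $\sum_j\theta_j = 1$; $\theta_j \le \tau_j(\mathbf{B}) = \mathbf{w}_p(\mathbf{a}_j)$ for each $j$ (since $\tau_j(\mathbf{B})$ is the maximum of exactly that Rayleigh quotient); and $\sum_j\mathbf{w}_p(\mathbf{a}_j) = d$ (Lewis weights are leverage scores of the rank‑$d$ matrix $\mathbf{B}$). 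When $p\le 2$ we have $\max(0,p/2-1)=0$ and want $\|\mathbf{B}\vx\|_2\le\|\mA\vx\|_p$: using $\theta_j\le\mathbf{w}_p(\mathbf{a}_j)$ and $1-p/2\ge 0$, each term satisfies $\mathbf{w}_p(\mathbf{a}_j)(\theta_j/\mathbf{w}_p(\mathbf{a}_j))^{p/2} = \theta_j\,(\mathbf{w}_p(\mathbf{a}_j)/\theta_j)^{1-p/2} \ge \theta_j$, so the sum is $\ge \sum_j\theta_j = 1$. When $p\ge 2$ we have $\max(0,p/2-1)=p/2-1$ and want $\|\mathbf{B}\vx\|_2^p\le d^{p/2-1}\|\mA\vx\|_p^p$: applying Jensen's inequality to the convex map $t\mapsto t^{p/2}$ against the mass‑$d$ measure $\{\mathbf{w}_p(\mathbf{a}_j)\}_j$ gives $\sum_j\mathbf{w}_p(\mathbf{a}_j)(\theta_j/\mathbf{w}_p(\mathbf{a}_j))^{p/2} \ge d\,(\tfrac1d\sum_j\theta_j)^{p/2} = d^{1-p/2}$. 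Feeding either bound back into the pointwise inequality and taking the maximum over $\vx$ gives $\sensgen{p}{\mai}\le d^{\,\max(0,p/2-1)}\mathbf{w}_p(\mai)$.

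The step I would flag is the $p<2$ regime: the naive route through Hölder's inequality using only $\sum_j\mathbf{w}_p(\mathbf{a}_j)=d$ and $\mathbf{w}_p(\mathbf{a}_j)\le1$ produces an estimate in the wrong direction (and with a spurious factor of $d$), because the exponent $1-2/p$ is negative; it is genuinely necessary to exploit the per‑row cap $\theta_j\le\mathbf{w}_p(\mathbf{a}_j)$, which is exactly where the Lewis fixed‑point structure (via $\tau_j(\mathbf{B})=\mathbf{w}_p(\mathbf{a}_j)$) is used. Everything else is a change of variables plus a one‑line convexity/monotonicity argument, and the two regimes meet consistently at $p=2$, where $\mathbf{B}=\mA$ and both bounds collapse to the familiar equality $\sensgen{2}{\mai}=\tau_i(\mA)$.
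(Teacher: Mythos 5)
The paper states this as a Fact cited from \cite{clarkson2019dimensionality, musco2022active} without reproducing a proof, so there is no in-paper argument to compare against. Your proof is correct and self-contained: the reduction to the Lewis-reweighted matrix $\mathbf{B}=\mathbf{W}^{1/2-1/p}\mA$, the use of $\tau_i(\mathbf{B})=\mathbf{w}_p(\mai)$ to obtain the pointwise bound $|\mai^\top\vx|^p\le\mathbf{w}_p(\mai)\|\mathbf{B}\vx\|_2^p$, and the comparison $\|\mathbf{B}\vx\|_2^p\le d^{\max(0,p/2-1)}\|\mA\vx\|_p^p$ via the identity $\|\mA\vx\|_p^p/\|\mathbf{B}\vx\|_2^p=\sum_j\mathbf{w}_p(\mathbf{a}_j)(\theta_j/\mathbf{w}_p(\mathbf{a}_j))^{p/2}$ all check out. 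The two-case split is handled correctly: for $p\ge 2$ Jensen against the mass-$d$ measure $\{\mathbf{w}_p(\mathbf{a}_j)\}$ gives the $d^{1-p/2}$ lower bound, and for $p\le 2$ the per-row leverage cap $\theta_j\le\tau_j(\mathbf{B})=\mathbf{w}_p(\mathbf{a}_j)$ makes each summand at least $\theta_j$ (with the $\theta_j=0$ terms vanishing harmlessly). Your flag on the $p<2$ regime is apt --- a naive H\"older bound from $\sum_j\mathbf{w}_p(\mathbf{a}_j)=d$ alone does not suffice, and using the fixed-point structure through $\theta_j\le\mathbf{w}_p(\mathbf{a}_j)$ is the right move. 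This matches the standard route one finds in the cited sources (e.g.\ the ``one-sided Lewis weight'' argument), so it is a faithful reconstruction rather than a different method.
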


\begin{fact}[Sampling via Lewis Weights~\cite{cohen2015lp, woodruff2023online}]\label{def:RandomLpSamplingMatrix} 
    Given $\ma\in \R^{n\times d}$ and $p>0$. Consider a random diagonal $\ell_p$ sampling matrix $\mS\in\R^{n\times n}$ with sampling probabilities $\{p_i\}$ proportional to the  $\ell_p$ Lewis weight of $\ma$, i.e., for each $i\in [n]$, the $i^\mathrm{th}$ diagonal entry is independently set to be \[\mathbf{S}_{i, i} = \twopartdef{1/p_i^{1/p}}{\text{ with probability } p_i}{0}{\text{ otherwise}}.\] Then, with high probability, $\mS$ with $O(\varepsilon^{-2}d^{\max(1, p/2)}(\log d)^2 \log(d/\varepsilon))$ rows is an $\ell_p$ subspace embedding for $\mA$ (cf.\ \cref{def:SubspaceEmbedding}).
\end{fact}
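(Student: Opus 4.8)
The plan is to prove the claim by the standard two-step recipe for sampling-based $\ell_p$ subspace embeddings: first a pointwise concentration bound for a single fixed $\vx$, obtained from a Bernstein-type inequality together with the per-row control that Lewis weights provide, and then a net-plus-chaining argument that promotes this pointwise bound to one holding simultaneously over all $\vx$. Throughout, $\mathbf{W}=\mathrm{diag}(\mathbf{w}_p(\mathbf{a}_1),\dots,\mathbf{w}_p(\mathbf{a}_n))$.

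\textbf{Step 1 (reduction and pointwise bound).} By $p$-homogeneity of $\|\mS\mA\vx\|_p$ and $\|\mA\vx\|_p$ it suffices to establish $\|\mS\mA\vx\|_p^p \approx_{\varepsilon}\|\mA\vx\|_p^p$ for every $\vx$ with $\|\mA\vx\|_p=1$. Write the target row count as $m:=\Theta(\varepsilon^{-2}d^{\max(1,p/2)}(\log d)^2\log(d/\varepsilon))$ and set the keep probabilities $p_i = m\,\mathbf{w}_p(\mathbf{a}_i)/d$, so that $\sum_i p_i = m$ since $\sum_i\mathbf{w}_p(\mathbf{a}_i)=d$. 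Then $Z(\vx):=\|\mS\mA\vx\|_p^p = \sum_{i\text{ sampled}} p_i^{-1}|\mathbf{a}_i^\top\vx|^p$ is a sum of independent terms with $\E Z(\vx)=\|\mA\vx\|_p^p$. The crucial input is \cref{fact:Sensitivities-LewisWeights}: $|\mathbf{a}_i^\top\vx|^p\le d^{\max(0,p/2-1)}\mathbf{w}_p(\mathbf{a}_i)\|\mA\vx\|_p^p$ for all $i$, which gives $p_i^{-1}|\mathbf{a}_i^\top\vx|^p\le \tfrac{1}{m}d^{\max(1,p/2)}\|\mA\vx\|_p^p$ almost surely, and the same inequality bounds the total variance by $\tfrac1m d^{\max(1,p/2)}\|\mA\vx\|_p^{2p}$. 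Bernstein's inequality then yields, for a fixed $\vx$ on the sphere, $\Pr[\,|Z(\vx)-\|\mA\vx\|_p^p|>\varepsilon\,]\le 2\exp(-c\,\varepsilon^2 m/d^{\max(1,p/2)})$.

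\textbf{Step 2 (uniformization).} Take a $\delta$-net $\mathcal{N}$ of the $d$-dimensional convex body $\{\vx:\|\mA\vx\|_p\le 1\}$, of cardinality $|\mathcal{N}|\le (O(1/\delta))^d$; union-bounding Step 1 over $\mathcal{N}$ forces $m=\Theta(\varepsilon^{-2}d^{\max(1,p/2)}(d\log(1/\delta)+\log(1/\delta_{\mathrm{fail}})))$, and choosing $\delta$ polynomially small in $\varepsilon/d$ accounts for the $(\log d)^2\log(d/\varepsilon)$ overhead in the statement. It then remains to transfer the embedding property from $\mathcal{N}$ to the whole ball. Since $p\ge 1$, $\|\cdot\|_p$ is a genuine norm, so any $\vx$ in the ball decomposes as a geometrically shrinking sum $\vx=\sum_{k\ge 0}\vx_k$ with each $\vx_k$ in a $\delta^k$-scaled copy of $\mathcal{N}$, and $\|\mS\mA\vx\|_p$ can be sandwiched via the triangle inequality from its (already controlled) values on the pieces. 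The delicate point is that $t\mapsto t^p$ is not Lipschitz, so for $p>2$ one must split the rows of $\mS\mA$, for each $\vx$, into \emph{heavy} rows (large $|\mathbf{a}_i^\top\vx|$) and \emph{light} rows: \cref{fact:Sensitivities-LewisWeights} forces at most $O(d^{p/2})$ rows to be heavy on the unit sphere, which is exactly the origin of the $d^{p/2}$ factor, and these few heavy rows are handled by an extra coarse net while the light rows are controlled by a truncated version of the Step-1 second-moment estimate. For $p=2$ the entire construction collapses to matrix Chernoff for leverage-score sampling, recovering \citep{drineas2006sampling, rudelson2007sampling}.

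The main obstacle is precisely this net-to-ball transfer when $p\ne 2$: a naive net argument fails because $\sum_i|\mathbf{a}_i^\top\vx|^p$ is not a Lipschitz function of $\vx$ at scale $\varepsilon$, so one genuinely needs either Dudley-type chaining over the $\ell_2$ ellipsoid induced by $\mathbf{A}^\top\mathbf{W}^{1-2/p}\mathbf{A}$ (the Lewis ellipsoid) or the explicit heavy/light decomposition above; carrying the constants and the extra logarithmic factors through this step, not the pointwise concentration, is where essentially all the difficulty lies. I would carry out the Bernstein computation and the heavy-row count explicitly, and cite \cite{cohen2015lp} for the full chaining argument in the regime $1\le p<2$ and \cite{fazel2022computing, woodruff2023online} for its extension to $p\ge 2$.
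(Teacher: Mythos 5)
The paper does not prove this statement: it is imported as a black-box \emph{Fact}, with the result attributed to the cited references~\citep{cohen2015lp, woodruff2023online} (and, for efficient Lewis weight computation when $p\geq 4$, implicitly to~\citep{fazel2022computing}). So there is no ``paper's own proof'' to match against; comparing your sketch to the actual arguments in those references is the right frame.

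Your Step~1 is correct and is indeed the starting point of the standard proof: setting $p_i \propto \mathbf{w}_p(\mathbf{a}_i)$, the sensitivity bound $\sensgen[\ma]{p}{\mai}\leq d^{\max(0,p/2-1)}\mathbf{w}_p(\mai)$ (\cref{fact:Sensitivities-LewisWeights}) gives exactly the uniform per-term bound and variance bound you state, and Bernstein yields the pointwise tail $\exp(-c\varepsilon^2 m/d^{\max(1,p/2)})$. (Minor technicality: one actually takes $p_i=\min(1,\,m\mathbf{w}_p(\mai)/d)$ to keep $p_i\leq 1$.) However, there is an internal inconsistency in Step~2 that is worth flagging: union-bounding over a $\delta$-net of size $(O(1/\delta))^d$ forces $m=\Omega(\varepsilon^{-2}d^{\max(1,p/2)}\cdot d\log(1/\delta))$, i.e., an extra factor of $d$ beyond what the fact claims. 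Choosing $\delta$ polynomially small in $\varepsilon/d$ does not remove that $d$; it only adjusts the logarithmic terms. The whole reason the references go through chaining (Dudley/Talagrand over the Lewis ellipsoid for $1\le p<2$, and a careful heavy/light split combined with a rounding argument for $p>2$) is precisely to shave that extra $d$. You name these tools in your last paragraph, but the sentence claiming the net-union-bound already ``accounts for'' the stated overhead contradicts this; as written, your naive-net sample complexity is $\widetilde{O}(\varepsilon^{-2}d^{1+\max(1,p/2)})$, not $\widetilde{O}(\varepsilon^{-2}d^{\max(1,p/2)})$. In short: your decomposition into pointwise Bernstein plus a uniformization step is the right skeleton, your identification of the chaining/heavy-light step as the crux is accurate, and deferring that step to~\citep{cohen2015lp, woodruff2023online} is exactly what the paper itself does; just be careful not to assert that the crude net argument already recovers the claimed bound.
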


\begin{theorem}\label{thm:OneShotResultForTotalSensEstGeneralP}
Given a matrix $\mA\in\R^{n\times d}$ and an approximation factor $\gamma\in(0, 1)$, there exists an algorithm, which returns a positive scalar $\widehat{s}$ such that, with a probability $0.99$, we have \[\totalsensgen{p}{\mA} \leq \widehat{s}\leq (1+O(\gamma)) \totalsensgen{p}{\mA}. \] Our algorithm's runtime is $\widetilde{O}\left(\nnz{\mA} + \frac{1}{\gamma^2}\cdot d^{|p/2-1|}\lpruntime(O(d^{\max(1, p/2)}, d, p)\right)$. 
\end{theorem}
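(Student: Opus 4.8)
The plan is to read \cref{alg_SumLpApprox-all_P} as an importance-sampling (Horvitz--Thompson) estimator built on top of a Lewis-weight $\ell_p$ subspace embedding, and to argue unbiasedness modulo the embedding, then concentration via a second-moment bound. First I would record that the sampling vector $v$ is a probability distribution: $\sum_i v_i = \tfrac1d\sum_i \mathbf{w}_p(\mai) = 1$, since $\ell_p$ Lewis weights sum to $d$. Next, in \cref{def:RandomLpSamplingMatrix} I would take the distortion $\varepsilon = \Theta(\gamma/p)$, so that $\mS_p\mA$ (with $O(\gamma^{-2}d^{\max(1,p/2)}\,\mathrm{polylog}(d))$ sampled rows) satisfies $\|\mS_p\mA\vx\|_p^p \approx_{O(\gamma)} \|\mA\vx\|_p^p$ for all $\vx$; dividing numerator and denominator of the sensitivity ratio, this transfers uniformly to sensitivities, $\sensgen[\mS_p\mA]{p}{\mai} \approx_{O(\gamma)} \sensgen[\mA]{p}{\mai}$ for each $i$, hence $\sum_i \sensgen[\mS_p\mA]{p}{\mai} \approx_{O(\gamma)} \totalsensgen{p}{\mA}$. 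Condition on this (high-probability) event. The estimator draws $i_1,\dots,i_m$ i.i.d.\ from $v$, sets $r_j = \sensgen[\mS_p\mA]{p}{\mathbf{a}_{i_j}}/v_{i_j}$, and outputs $\widehat s = \tfrac1m\sum_j r_j$; then $\E[r_j] = \sum_i v_i\cdot \sensgen[\mS_p\mA]{p}{\mai}/v_i = \sum_i \sensgen[\mS_p\mA]{p}{\mai}$, so $\E[\widehat s] = (1\pm O(\gamma))\totalsensgen{p}{\mA}$.

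The crux is the variance. By \cref{fact:Sensitivities-LewisWeights} applied to $\mA$ together with the embedding, $\sensgen[\mS_p\mA]{p}{\mai} \le O(1)\,\sensgen[\mA]{p}{\mai} \le O(d^{\max(0,p/2-1)})\,\mathbf{w}_p(\mai) = O(d^{1+\max(0,p/2-1)})\,v_i$, so each $r_j$ is bounded deterministically by $B := O(d^{1+\max(0,p/2-1)})$, and consequently $\E[r_j^2] = \sum_i \left(\sensgen[\mS_p\mA]{p}{\mai}\right)^2/v_i \le B\sum_i \sensgen[\mS_p\mA]{p}{\mai} = B\,\E[r_j]$; thus the relative variance of a single draw is at most $B/\E[r_j]$. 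To convert this into the stated sample size I would invoke the lower bound $\totalsensgen{p}{\mA} = \Omega(d^{\min(1,p/2)})$ (exact, $=d$, at $p=2$; in general $\Omega(d^{p/2})$ for $p\in[1,2]$ and $\Omega(d)$ for $p\ge2$), which via the embedding gives $\E[r_j] = \Omega(d^{\min(1,p/2)})$. Using the identity $1+\max(0,p/2-1)-\min(1,p/2) = |p/2-1|$, the relative variance of $r_j$ is $O(d^{|p/2-1|})$, so $\Var(\widehat s) = \Var(r_j)/m \le O(d^{|p/2-1|}/m)\,(\E[\widehat s])^2$. Choosing $m = \Theta(d^{|p/2-1|}/\gamma^2)$ makes this at most $\tfrac1{100}\gamma^2(\E[\widehat s])^2$, and Chebyshev gives $\widehat s = (1\pm\gamma)\E[\widehat s] = (1\pm O(\gamma))\totalsensgen{p}{\mA}$ with probability at least $0.99$; the one-sided form in the statement follows by rescaling the output by $1+c\gamma$ (or by a slightly asymmetric embedding).

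For the runtime I would add: computing all $\ell_p$ Lewis weights of $\mA$ costs $\widetilde{O}(\nnz{\mA} + \mathrm{poly}(d))$ via \cite{cohen2015lp} (for $p<4$) and \cite{fazel2022computing} (for $p\ge4$), each reducing to a constant number of leverage-score computations (\cref{fact:uniformSamplingLevScoresCost}); forming $\mS_p\mA$ and drawing the $m$ estimator samples adds $\widetilde{O}(\nnz{\mA})$; and the dominant term is the $m = O(\gamma^{-2}d^{|p/2-1|})$ evaluations of $\sensgen[\mS_p\mA]{p}{\mai}$ — by scale invariance each is an $\ell_p$ minimization on the $O(d^{\max(1,p/2)})\times d$ matrix $\mS_p\mA$, costing $\lpruntime(O(d^{\max(1,p/2)}),d,p)$ by \cref{def:LpRunTimeNotation}. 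Summing yields $\widetilde{O}(\nnz{\mA} + \gamma^{-2}d^{|p/2-1|}\lpruntime(O(d^{\max(1,p/2)}),d,p))$.

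I expect the main obstacle to be the variance step: the relative variance is governed by $B/\totalsensgen{p}{\mA}$, which forces the (non-trivial) lower bound $\totalsensgen{p}{\mA} = \Omega(d^{\min(1,p/2)})$ and a check that this bound is itself preserved, up to constants, under Lewis-weight sampling. A secondary technical point is tracking the embedding distortion $\varepsilon=\Theta(\gamma/p)$ so that $p$-th powers of norms — and hence the sensitivities and their sum — transfer with $1\pm O(\gamma)$ rather than constant error, which is what upgrades the conclusion from a constant-factor to a $(1+O(\gamma))$-approximation.
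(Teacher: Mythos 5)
Your proposal matches the paper's proof in all essential respects: you use the same Lewis-weight importance-sampling distribution $v_i = \mathbf{w}_p(\mai)/d$, the same unbiasedness observation modulo the $\ell_p$ subspace embedding $\mS_p\mA$, the same second-moment bound via \cref{fact:Sensitivities-LewisWeights} (your deterministic bound $r_j \le B$ with $\E[r_j^2] \le B\,\E[r_j]$ is just a repackaging of the paper's direct computation of $\sum_i \sensgen[\mS_p\mA]{p}{\mai}^2/v_i$), the same lower bound $\totalsensgen{p}{\mA} = \Omega(d^{\min(1,p/2)})$ from~\cite{wy2023ICML} to convert the relative variance into $O(d^{|p/2-1|})$, and the same Chebyshev step and runtime accounting. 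If anything, you are slightly more careful than the paper on two minor points that it glosses over: tracking the embedding distortion $\varepsilon = \Theta(\gamma/p)$ so that the transfer to $\ell_p$-th powers is $(1\pm O(\gamma))$ rather than a raw constant factor, and noting that the one-sided form of the guarantee requires rescaling the output upward.
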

\begin{proof} We achieve our theorem's guarantee via \cref{alg_SumLpApprox-all_P}. 
    Without loss of generality, we may assume $\mA$ to be full rank. Then, its Lewis weights satisfy $\sum_{i=1}^n \mathbf{w}_p(\mai) = d$. Per \cref{line:PickViInGeneralSumApproxAlg} of \cref{alg_SumLpApprox-all_P}, our sampling distribution is chosen to be $v_i= \frac{\mathbf{w}_p(\mai)}{d}$ for all $i\in [n]$. We sample from $\mA$ rows with replacement, with row $i$ picked with a probability of $v_i$. From the definition of $v_i$ in \cref{line:PickViInGeneralSumApproxAlg} and $r_j$ in \cref{line:ComputeRjForGeneralSumApproxAlg} and the fact that $\mS_p\mA$ is a constant factor $\ell_p$ subspace embedding of $\mA$, our algorithm's output satisfies the following unbiasedness condition: \[ \mathbb{E}\left(\frac{1}{m}\sum_{j\in[m]}r_{j}\right) = \mathbb{E}\left(\frac{1}{m}\sum_{j=1}^m\sum_{i_j=1}^n\frac{\sensgen[\mS_p\mA]{p}{\mathbf{a}_{i_j}}}{v_{i_j}}\cdot v_{i_j}\right) = \totalsensgen[\mS_p\mA]{p}{\mA}\leq 2\totalsensgen{p}{\mA}.\]
    By independence, we also have the following bound on the variance of $\frac{1}{m}\sum_{j\in[m]}r_j$: \[ \textrm{Var}\left(\frac{1}{m}\sum_{j\in[m]} r_j\right)  \leq  \frac{1}{m}\sum_{i=1}^n \frac{\sensgen[\mS_p\mA]{p}{\mai}^2}{v_i} = \frac{d}{m}\cdot \sum_{i=1}^n \frac{\sensgen[\mS_p\mA]{p}{\mai}^2}{\mathbf{w}_p(\mai)},\] with the final step holding by the choice of $v_i$ in \cref{line:PickViInGeneralSumApproxAlg}. In the case that $p\geq2$, we have 
    \[\textrm{Var}\left(\frac{1}{m}\sum_{j\in[m]} r_j\right)  
 \leq \frac{d}{m}\cdot \sum_{i=1}^n \frac{\sensgen[\mS_p\mA]{p}{\mai}^2}{\mathbf{w}_p(\mai)}  \leq  \frac{d}{m}\cdot \sum_{i\in[n]} \sensgen[\mS_p\mA]{p}{\mai} \cdot d^{p/2-1} \leq \frac{2d^{p/2}}{m} \totalsensgen{p}{\mA}, \] where the second inequality uses \cref{fact:Sensitivities-LewisWeights}. Therefore, applying Chebyshev's inequality on $\frac{1}{m}\sum_{j\in[m]}r_j$ (as defined in \cref{line:ComputeRjForGeneralSumApproxAlg}) with $m = O\left(\frac{d^{p/2}}{\totalsensgen{p}{\mA}\gamma^2}\right)$ gives us a $\gamma$-multiplicative accuracy in approximating $\totalsensgen{p}{\mA}$ (with the desired constant probability). For $p\geq 2$, we additionally have~\cite[Theorem $1.7$]{wy2023ICML}
 the lower bound $\totalsensgen{p}{\mA}\geq d$, which when plugged into the value of $m$ gives the claimed sample complexity. A similar argument may be applied for the case $p <2$; specifically, we have that 
 \[\textrm{Var}\left(\frac{1}{m} \sum_{j\in[m]} r_j\right)  
 \leq \frac{d}{{m}}\cdot \sum_{i=1}^n \frac{\sensgen[\mS_p\mA]{p}{\mai}^2}{\mathbf{w}_p(\mai)}  \leq \frac{d}{{m}} \cdot \sum_{i\in[n]} \sensgen[\mS_p\mA]{p}{\mai}  \leq \frac{2d}{m} \totalsensgen{p}{\mA}, \]  where the second step is by  
 $\mathbf{w}_p(\mai)\geq \sensgen{p}{\mai}$ from \cref{fact:Sensitivities-LewisWeights}. For $p<2$, we also have  $\totalsensgen{p}{\mA} \geq d^{p/2}$ from \cite[Theorem $1.7$]{wy2023ICML}. Applying Chebyshev's inequality with this fact gives a sample complexity of $m = O\left(d^{1-p/2}/\gamma^2\right)$. This completes the correctness guarantee.

\looseness=-1 \paragraph{Runtime.} We first compute all $\ell_p$ Lewis weights of $\mA\in \R^{n\times d}$ up to a constant multiplicative accuracy, the cost of which is $O\left(\frac{1}{1-|1-p/2|}\log(\log(n))\right)$ leverage score computations for $p<4$~ \cite{cohen2015lp} and $O(p^3 \log(np))$ leverage score computations for $p\geq 4$~\cite{fazel2022computing}. Next, we compute $m=O(d^{|1-p/2|})$ sensitivities with respect to $\mS_p \mA$. From \cite{cohen2015lp, woodruff2023online}, $\mS_p$ has, with high probability, $O(d^{p/2}\log d)$ rows when $p>2$ and $O(d\log d)$ rows when $p\leq 2$. Summing the cost of computing these $m$ sensitivities and that of computing the Lewis weights gives the claimed runtime. 
 
 \end{proof}

 \begin{remark}\label{remark_about_total_Lp_approx_algs}
\looseness=-1
We present in \cref{sec:SumAppendix}
an alternate algorithm, \cref{alg_SumL1Approx}, for estimating the total $\ell_p$ sensitivity for $p=1$. \cref{alg_SumL1Approx} uses recursive computations of leverage scores, in contrast to \cref{alg_SumLpApprox-all_P} which uses Lewis weights in a one-shot manner, and may be of independent interest.
\end{remark}

\subsection{Approximating the maximum $\ell_1$ sensitivity}\label{sec:maxL1SensEst}
\looseness=-1In this section, we present an algorithm that approximates $\|\sensgen{1}{\mA}\|_{\infty} = \max_{i\in [|\mA|]}\sensgen{1}{\mai}$, the maximum of the $\ell_1$ sensitivities of the rows of a matrix $\mA$. As alluded to in \cref{sec:introduction}, a first approach to this problem would be to simply estimate all $n$ sensitivities and compute their maximum. To do better than this, one idea, inspired by the random hashing approach of \cref{alg_ourSubroutine}, is as follows. 

\looseness=-1 If the matrix has a large number of high-sensitivity rows, then, intuitively, the appropriately scaled maximum sensitivity of a uniformly sampled subset of these rows should approximate $\|\sensgen{1}{\mA}\|_{\infty}$. 
Specifically, assume that the matrix has at least $\alpha$ rows of sensitivity at least $\|\sensgen{1}{\mA}\|_{\infty}/\sqrt{\alpha}$; uniformly sample $n/\alpha$ rows and return $\sensEst_a$, the (appropriately scaled) maximum of $\sensgen[\mS_1\mA]{1}{\mai}$, for the sampled rows $i$. Then, $\|\sensgen{1}{\mA}\|_{\infty} \leq \sensEst_a\leq O(\sqrt{\alpha}\|\sensgen{1}{\mA}\|_{\infty})$ with a constant probability. Here the upper bound guarantee is without any condition on the number of rows with large $\ell_1$ sensitivities. 

\looseness=-1In the other case, if the matrix does \emph{not} have too many high-sensitivity rows, we could estimate the  maximum sensitivity by hashing rows into small buckets via Rademacher combinations of uniformly sampled blocks of $\alpha$ rows each (just like in \cref{alg_ourSubroutine}). Then, $\sensEst_b$, the scaled maximum of the $\ell_1$ sensitivities of these rows satisfies $\|\sensgen{1}{\mA}\|_{\infty} \leq \sensEst_b\leq O(\sqrt{\alpha}\|\sensgen{1}{\mA}\|_{\infty})$. Here, it is the \emph{lower} bound that comes for free (i.e., without any  condition on the number of rows with large $\ell_1$ sensitivities).

\looseness=-1 Despite the above strategies working for each case, there is no way to combine these approaches without knowledge of whether the input matrix has a enough ``high-sensitivity'' rows or not. 
\looseness=-1 We therefore avoid this approach and instead present \cref{alg_Subroutine_MaxL1}, where we make use of $\mS_{\infty}\mA$, an $\ell_{\infty}$ subspace embedding of $\mA$. Our algorithm hinges on the recent development~\citep[Theorem 1.3]{woodruff2022high} on efficient construction of such an embedding with simply a subset of $\widetilde{O}(d)$ rows of $\mA$.

\begin{algorithm}
\caption{\textbf{Approximating the Maximum of $\ell_1$-Sensitivities}}\label{alg_Subroutine_MaxL1}

\begin{algorithmic}[1]
\Statex \textbf{Input: }{Matrix $\mA \in \R^{n \times d}$}
\Statex \textbf{Output: }{Scalar $\sensMaxEst\in \R_{\geq 0}$ that satisfies $\|\sensgen{1}{\mA}\|_{\infty} \leq \sensMaxEst\leq C\cdot \sqrt{d}\cdot\|\sensgen{1}{\mA}\|_{\infty}$} 

\State Compute, for $\mA$, an $\ell_{\infty}$ subspace embedding $\mS_{\infty}\mA \in \R^{O(d\log^2(d)) \times d}$ such that $\mS_{\infty}\mA$ is a subset of the rows of $\mA$ \cite[Theorem $1.3$]{woodruff2022high}\label[line]{line:ellInftySubspaceEmbedding}

\State Compute, for $\mA$, an $\ell_1$ subspace embedding $\mS_1 \mA\in  \R^{O(d) \times d}$\label[line]{line:ellOneSubspaceEmbedding}

\State Return $\sqrt{d}\|\sensgen[\mS_1\mA]{1}{\mS_\infty \mA}\|_{\infty}$\label{line:finalMaxSens} 
\end{algorithmic}
\end{algorithm}	

\begin{restatable}{theorem}{thmEllOneMaxApprox}\label{thm_L1_Max} Given a  matrix $\mA\in \R^{n\times d}$, there exists an algorithm, which in time $\Otil(\nnz{\mA} + d^{\omega+1})$, outputs a positive scalar $\sensMaxEst$ that satisfies \[\Omega(\|\sensgen{1}{\mA}\|_{\infty})\leq \sensMaxEst\leq O(\sqrt{d}\|\sensgen{1}{\mA}\|_{\infty}).\] 
\end{restatable}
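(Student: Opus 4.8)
The plan is to verify that \cref{alg_Subroutine_MaxL1} meets the stated guarantee, the only nontrivial part being the lower bound. Write $\sigma_{\max}:=\|\sensgen{1}{\mA}\|_\infty$ and let $\mathbf{p}_1,\dots,\mathbf{p}_m$ denote the rows of $\mS_\infty\mA$, with $m=\Otil(d)$; since $\mS_\infty\mA$ is a subset of the rows of $\mA$ (\cref{line:ellInftySubspaceEmbedding}), each $\mathbf{p}_j$ equals some row $\mai$ of $\mA$. I would first record the routine reduction to sensitivities with respect to $\mS_1\mA$: because $\mS_1\mA$ is a constant-factor $\ell_1$ subspace embedding of $\mA$, $\|\mS_1\mA\vx\|_1=\Theta(\|\mA\vx\|_1)$ for all $\vx$, and dividing numerator and denominator in the definition of sensitivity (exactly as in \cref{eq:CxL1NormEqualsAxL1Norm}) gives $\sensgen[\mS_1\mA]{1}{\mathbf{p}_j}=\Theta(1)\,\sensgen{1}{\mathbf{p}_j}$ for every $j$, where $\sensgen{1}{\mathbf{p}_j}$ is the genuine $\ell_1$ sensitivity of that row with respect to $\mA$. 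Hence the returned value satisfies $\sensMaxEst=\Theta(\sqrt{d})\cdot\max_{j\in[m]}\sensgen{1}{\mathbf{p}_j}$, and it suffices to show $\Omega(\sigma_{\max}/\sqrt{d})\le \max_{j}\sensgen{1}{\mathbf{p}_j}\le \sigma_{\max}$.

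The upper bound is immediate: each $\mathbf{p}_j$ is a row $\mai$ of $\mA$, so $\max_j\sensgen{1}{\mathbf{p}_j}\le\max_{i\in[n]}\sensgen{1}{\mai}=\sigma_{\max}$, whence $\sensMaxEst=O(\sqrt{d}\,\sigma_{\max})$.

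For the lower bound---the heart of the argument---I would exploit the $\ell_\infty$ subspace embedding property of $\mS_\infty\mA$, which is precisely the guarantee that \emph{every} direction, in particular the unknown direction realizing $\sigma_{\max}$, is witnessed up to a $\sqrt{d}$ factor by some row we actually retained. Concretely, let $i^\star\in[n]$ and $\vx^\star$ attain $\sigma_{\max}$, normalized so that $\|\mA\vx^\star\|_1=1$; then $|\mathbf{a}_{i^\star}^\top\vx^\star|=\sigma_{\max}$ and hence $\|\mA\vx^\star\|_\infty\ge\sigma_{\max}$. The embedding guarantee of \cite[Theorem 1.3]{woodruff2022high} (whose distortion is $O(\sqrt d)$, and where the subset-of-rows property makes the other side free) gives $\|\mS_\infty\mA\vx^\star\|_\infty\ge\Omega(\|\mA\vx^\star\|_\infty/\sqrt{d})\ge\Omega(\sigma_{\max}/\sqrt{d})$; this $\sqrt{d}$ distortion is the source of the final $\sqrt{d}$ loss, and the $\sqrt{d}$ multiplier in \cref{line:finalMaxSens} is exactly what offsets it. Since $\|\mS_\infty\mA\vx^\star\|_\infty=\max_j|\mathbf{p}_j^\top\vx^\star|$, some retained row $\mathbf{p}_{j_0}$ has $|\mathbf{p}_{j_0}^\top\vx^\star|\ge\Omega(\sigma_{\max}/\sqrt{d})$, and evaluating its sensitivity ratio at $\vx^\star$ yields $\sensgen{1}{\mathbf{p}_{j_0}}\ge |\mathbf{p}_{j_0}^\top\vx^\star|/\|\mA\vx^\star\|_1=|\mathbf{p}_{j_0}^\top\vx^\star|\ge\Omega(\sigma_{\max}/\sqrt{d})$. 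Combining with the first paragraph gives $\sensMaxEst\ge\Omega(\sigma_{\max})$.

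Finally, for the runtime I would bound the three steps: constructing $\mS_\infty\mA$ costs $\Otil(\nnz{\mA})$ up to lower-order terms by \cite[Theorem 1.3]{woodruff2022high}; constructing the $\ell_1$ subspace embedding $\mS_1\mA\in\R^{O(d)\times d}$ from a few leverage-score computations costs $\Otil(\nnz{\mA}+d^\omega)$ (cf.\ \cref{fact:uniformSamplingLevScoresCost}); and computing $\sensgen[\mS_1\mA]{1}{\mathbf{p}_j}$ for all $m=\Otil(d)$ rows is $\Otil(d)$ sensitivity computations on the $O(d)\times d$ matrix $\mS_1\mA$---it is crucial that we reduced to $\mS_1\mA$ first, so the per-sensitivity cost no longer carries an $\nnz{\mA}$ term---each costing $\Otil(d^\omega)$ by \cref{fact:CostOfOneSensitivity}, for $\Otil(d^{\omega+1})$ overall. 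Summing gives $\Otil(\nnz{\mA}+d^{\omega+1})$; all guarantees hold with high probability over the randomness of the two embeddings. The main obstacle is conceptual rather than computational: recognizing that an $\ell_\infty$ (not $\ell_1$ or $\ell_2$) subspace embedding is the right object, since it is the one that certifies, for the worst-case direction $\vx^\star$, that a surviving row nearly attains the maximum $|\mai^\top\vx^\star|$; once that is in place, both bounds follow from one line each of the embedding inequalities and the definition of sensitivity.
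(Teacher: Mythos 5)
Your proposal is correct and follows essentially the same route as the paper: reduce to $\sensgen[\mS_1\mA]{1}{\cdot}$ via the $\ell_1$ subspace embedding, and obtain the lower bound by hitting the maximizing direction $\vx^\star$ with the $\ell_\infty$ subspace embedding guarantee, losing exactly the $\sqrt{d}$ factor that the $\sqrt{d}$ multiplier in \cref{line:finalMaxSens} compensates for. The only difference is stylistic: the paper splits into cases according to whether $\mathbf{a}_{i^\star}$ happens to be among the rows retained in $\mS_\infty\mA$, while you observe (correctly) that the lower-bound chain through $\|\mS_\infty\mA\vx^\star\|_\infty$ and the upper-bound observation that every retained $\mathbf{p}_j$ is a genuine row of $\mA$ both hold unconditionally, so the case split is superfluous. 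Your version is slightly cleaner for that reason; both yield the identical $[\Omega(\sigma_{\max}),\,O(\sqrt{d}\,\sigma_{\max})]$ guarantee and the identical $\Otil(\nnz{\mA}+d^{\omega+1})$ runtime accounting.
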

\begin{proof}
We achieve our guarantee via \cref{alg_Subroutine_MaxL1}. 
First, by the $\ell_1$ subspace embedding property of $\mS_1\ma$, we have for all $\vx\in\R^d$ that \[\|\mS_1\mA \vx\|_1 = \Theta(\|\mA \vx\|_1).\numberthis\label{eq:CxL1NormEqualsAxL1NormMax} \] 
Next, we set some notation. Define $\vx^\star$ and $\mathbf{a}_{i^\star}$ as follows: \[\vx^\star,i^\star = \arg\max_{\vx\in \R^d, i\in [|\mA|]} \frac{|\mai^\top \vx|}{\|\mA\vx\|_1} \numberthis\label{eq:defXStarMaxAlgProof}\] 
Thus, $\vx^\star$ is the vector that realizes the maximum sensitivity of the matrix $\mA$, and the row $\mathbf{a}_{i^\star}$ is the row of $\mA$ with maximum sensitivity with respect to $\mA$. Suppose the matrix $\mS_{\infty}\mA$ --- which is a subset of the rows of $\mA$ --- contains the row $\mathbf{a}_{i^\star}$. Then we have
\begin{align*}
        \max_{i: \mathbf{c}_i \in \mS_{\infty}\mA }\sensgen{1}{\mathbf{c}_i}&= \max_{\vx\in \R^d, \mathbf{c}_k\in \mS_{\infty}\mA} \frac{|\mathbf{c}_k^\top \vx|}{\|\mS_1\mA\vx\|_1} =  \Theta(1) \max_{\vx\in \R^d, \mathbf{c}_k\in \mS_{\infty}\mA} \frac{| \mathbf{c}_k^\top \vx|}{\|\mA\vx\|_1} = \Theta(1) \max_{\vx\in \R^d} \frac{|\mathbf{a}_{i^\star}^\top \vx|}{\|\mA\vx\|_1}, \numberthis\label{eq:intermediateEq}
\end{align*} where the first step is by definition of $\ell_1$ sensitivity of $\mS_{\infty}\mA$ with respect to $\mS_1\mA$, the second step is by \cref{eq:CxL1NormEqualsAxL1NormMax}, and the third step by our assumption in this case that $\mathbf{a}_{i^\star}\in \mS_{\infty}\mA$. 
By definition of $\sensgen{1}{\mathbf{a}_{i^\star}}$ in \cref{eq:defXStarMaxAlgProof}, we have 
\[ \max_{\vx\in \R^d} \frac{|\mathbf{a}_{i^\star}^\top \vx|}{\|\mA\vx\|_1} = \|\sensgen{1}{\mA}\|_{\infty}\numberthis\label[ineq]{eq:L1ApproxUBintermediate1}.\]Then, combining \cref{eq:intermediateEq} and \cref{eq:L1ApproxUBintermediate1} gives the guarantee in this case. In the other case, suppose $\mathbf{a}_{i^\star}$ is not included in $\mS_{\infty} \mA$. Then we observe that the upper bound from the preceding inequalities still holds. For the lower bound, we observe that 
\[\|\sensgen[\mS_1\mA]{1}{\mS_\infty \mA}\|_{\infty}  = \max_{\vx\in \R^d, \mathbf{c}_j \in \mS_{\infty}\mA} \frac{\mathbf{c}_j^\top \vx}{\|\mS_1\mA\vx\|_{1}}\geq \max_{\vx\in \R^d} \frac{\|\mS_{\infty}\mA\vx\|_{\infty}}{\|\mS_1\mA\vx\|_1} = \Theta(1) \max_{\vx\in \R^d} \frac{\|\mS_{\infty}\mA\vx\|_{\infty}}{\|\mA\vx\|_1}, \numberthis\label[ineq]{ineq:MaxLowerBoundIntermediate1}\] where the the second step is by choosing a specific vector in the numerator and the third step uses \cref{eq:CxL1NormEqualsAxL1NormMax}. We further have, 
\[ \max_{\vx\in \R^d} \frac{\|\mS_{\infty}\mA \vx\|_{\infty}}{\|\mA\vx\|_1} \geq \frac{\|\mS_{\infty}\mA \vx^\star\|_{\infty}}{\|\mA \vx^\star\|_1} \geq \frac{\|\mA\vx^\star\|_{\infty}}{\sqrt{d}\|\mA\vx^\star\|_1} \geq \frac{|\mathbf{a}_{i^\star}^\top \vx^\star|}{\sqrt{d}\|\mA\vx^\star\|_1} = \frac{1}{\sqrt{d}}\|\sensgen{1}{\mA}\|_{\infty}, \numberthis\label[ineq]{ineq:MaxLowerBoundIntermediate2} \] where the first step is by choosing $\vx=\vx^\star$, the second step is by the distortion guarantee of $\ell_{\infty}$ subspace embedding~\cite{woodruff2023online}, and the final step is by definition of $\sensgen{1}{\mathbf{a}_{i^\star}}$. Combining \cref{ineq:MaxLowerBoundIntermediate1} and \cref{ineq:MaxLowerBoundIntermediate2} gives the claimed lower bound on  $\|\sensgen[\mS_1\mA]{1}{\mS_{\infty}\mA}\|_{\infty}$.  
The runtime follows from the computational cost and row dimension of  $\mS_{\infty}\mA$ from \cite{woodruff2023online} and the cost of computing $\ell_1$ sensitivities with respect to $\mS_1\mA$ as per \cref{fact:CostOfOneSensitivity}. 
\end{proof}

\section{Numerical experiments}\label{sec:ExperimentsMainBody}

\looseness=-1We demonstrate our fast sensitivity approximations on multiple real-world datasets in the UCI Machine Learning Dataset Repository \cite{asuncion2007uci}, such as the \texttt{wine} and \texttt{fires} datasets, for different $p$ and varying approximation parameters $\alpha$. We focus on the \texttt{wine} dataset here, with full details in \cref{sec:ExperimentsSectionAppendix}.

\paragraph{Experimental Setup.}  For each dataset and each $p$, we first apply the Lewis weight subspace embedding to compute a smaller matrix $\mS\mA$. Then, we run our sensitivity computation \cref{alg_ourSubroutine} and measure the average and maximum absolute log ratio $|\log(\sigma_{\text{approx}}/\sigma_{\text{true}})|$ to capture the relative error of the sensitivity approximation. Note that this is stricter than our guarantees, which give only an $O(\alpha)$ additive error; therefore we have no real upper bound on the maximum absolute log ratio due to small sensitivities. We plot an upper bound of $\log(\alpha^p)$, which is the worst case additive error approximation although it does provide relative error guarantees with respect to the average sensitivity.  We compare our fast algorithm for computing the total sensitivity with the naive brute-force method by approximating all sensitivities and then averaging.

\looseness=-1\paragraph{Analysis.} In practice, we found most real-world datasets have easy-to-approximate sensitivities with total $\ell_p$ sensitivity  about $2$-$5$ times lower than the theoretical upper bound. \cref{fig:wine_sens} shows that the average absolute log ratios for approximating the $\ell_p$ sensitivities are  much smaller than those suggested by the theoretical upper bound, even for large $\alpha$. Specifically, when $\alpha = 40$, we incur only a $16$x   accuracy deterioration in exchange for a $40$x faster algorithm. This is significantly better than the worst-case accuracy guarantee which for $p = 3$ would be $\alpha^p = 40^3$.

\looseness=-1More importantly, we find that empirical total sensitivities are much lower than the theoretical upper bounds suggest, often by a factor of at least $5$, especially for $p > 2$. This implies that real-world structure can be utilized for improved data compression and justifies the importance of sensitivity estimation for real-world datasets. Furthermore, our novel algorithm approximates the total sensitivity up to an overestimate within a factor of $1.3$, in less than a quarter of the time of the brute-force algorithm. Our observation generalizes to other real-world datasets (see \cref{sec:ExperimentsSectionAppendix}).

\begin{figure*}[t!]
\centering
\begin{subfigure}{.24\linewidth}
\includegraphics[width=\textwidth]{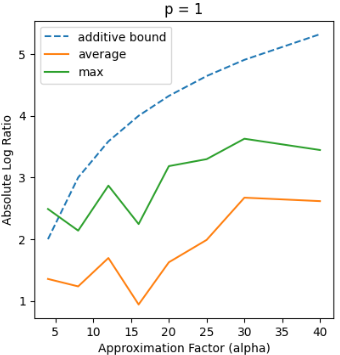}
\label{fig:wine_pa}
\end{subfigure}
\begin{subfigure}{.24\linewidth}
\includegraphics[width=\textwidth]{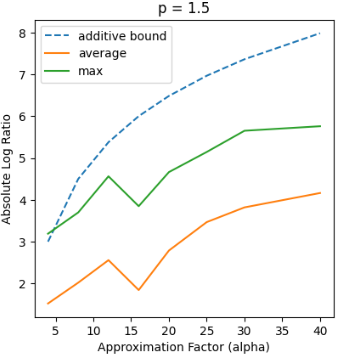}
\label{fig:wine_pb}
\end{subfigure}
\begin{subfigure}{.24\linewidth}
\includegraphics[width=\textwidth]{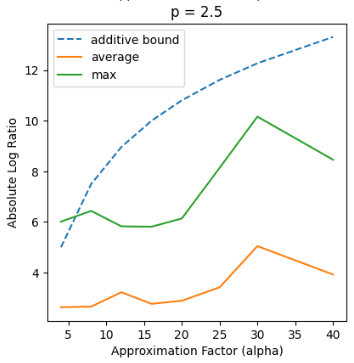}
\label{fig:wine_pc}
\end{subfigure}
\begin{subfigure}{.24\linewidth}
\includegraphics[width=\textwidth]{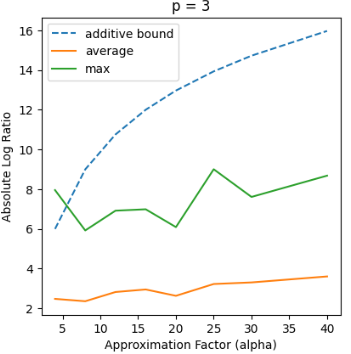}
\label{fig:wine_pd}
\end{subfigure}
  \caption{Average absolute log ratios for all $\ell_p$ sensitivity approximations for $\texttt{wine}$, with the theoretical additive bound (dashed). 
  }
\label{fig:wine_sens}
\end{figure*}

\begin{table}[ht] 
\centering
\resizebox{\linewidth}{!}{
\begin{tabular}{lccccc}
\toprule 
$p$ &  \thead{Total Sensitivity \\ Upper Bound} & Brute-Force/True &  Approximation & Brute-Force Runtime (s) & Approximate Runtime (s) \\
\midrule
1 & 14 & 5.2 & 6.4 & 667 & 105 \\
1.5 & 14 & 11.6 & 14.2 & 673 & 131 \\
2.5 & 27.1 & 13.8 & 14.9 & 693 & 209 \\
3 & 52.4 & 7.2 & 8.9 & 686 & 192 \\
\bottomrule
\end{tabular}
}
\caption{Runtime comparison for computing total sensitivities for the \texttt{wine} dataset, which has matrix shape $(177, 14)$.
}
\label{table:wine_total}
\end{table}

\section*{Acknowledgements}
\looseness=-1We are very grateful to: Sagi Perel, Arun Jambulapati, and Kevin Tian for helpful discussions about related work; Taisuke Yasuda for his generous help discussing results in $\ell_p$ sensitivity sampling; and our NeurIPS reviewers for their time, effort, and many constructive suggestions. Swati gratefully acknowledges funding for her student researcher position from Google Research and research assistantship at UW from the NSF award CCF-$1749609$ (via Yin Tat Lee). Part of this work was done while D.~Woodruff was at Google Research. D.~Woodruff also acknowledges support from a Simons Investigator Award. 

\clearpage
\bibliographystyle{unsrtnat}
\bibliography{main_sensitivities.bib}

\newpage

\appendix
\section{Omitted proofs: general technical results}\label{sec:AppendixGeneralResults}

\factCrudeSensApproxViaLevScores* 
\begin{proof}
    The proof of this claim follows from a simple application of standard norm inequalities, and we present one here for completeness. For any $\mathbf{u}\in \R^n$, we have $\|\mathbf{u}\|_2 \leq \|\mathbf{u}\|_1 \leq \sqrt{n}\|\mathbf{u}\|_2$. Therefore, for any $\vx\in \R^d$, we have 
    \[\frac{|\vx^\top \mai|}{\sqrt{n}\|\mA\vx\|_2}\leq \frac{|\vx^\top\mai|}{\|\mA\vx\|_1}\leq \frac{|\vx^\top \mai|}{\|\mA\vx\|_2}.\numberthis\label[ineq]{ineq:SandwichLemma1}\]Let $\vx_{\tau}$ be the vector that realizes the $i^\mathrm{th}$ leverage score, and $\vx_{\sigma}$ be the vector that realizes the $i^\mathrm{th}$ $\ell_1$ sensitivity (i.e. they are the maximizers of sensitivity). Then, we may conclude from \cref{ineq:SandwichLemma1} that 
    \[ \sqrt{\frac{|\vx_{\tau}^\top \mai|^2}{n \| \ma\vx_{\tau} \|_2^2}} \leq \frac{|\vx_{\tau}^\top \mai|}{\|\ma\vx_{\tau}\|_1} \leq \frac{|\vx_{\sigma}^\top \mai|}{\|\ma\vx_{\sigma}\|_1} \leq \sqrt{\frac{|\vx_{\sigma}^\top \mai|^2}{\|\ma\vx_{\sigma}\|_2^2}} \leq \sqrt{\frac{|\vx_{\tau}^\top \mai|^2}{\|\ma\vx_{\tau}\|_2^2}}, \] which gives the claimed result.
\end{proof}

\factCostOfComputingOneSensScore*
\begin{proof}
    Given an $n\times d$ matrix $\ma$, we first compute $\mS \ma$, its $\ell_1$ subspace embedding of size $O(d) \times d$, and compute the $i^{\mathrm{th}}$ sensitivity of $\ma$ as follows. 
    \[\sensgen[\mA]{1}{\mai} = \max_{\vx\in \R^d} \frac{|\mai^\top \vx|}{\|\mA\vx\|_1} = \max_{\vx\in\R^d}\frac{|\mai^\top\vx|}{\Theta(\|\mS\mA\vx\|_1)} \approx_{O(1)} \sensgen[\mS\mA]{1}{\mai},\] where the second step is by the definition of $\ell_1$ subspace embedding (cf. \cref{def:SubspaceEmbedding}). To compute $\sensgen[\mS\mA]{1}{\mai}$, we  need  to compute $\min_{\mathbf{a}_i^\top \vx= 1} \|\mS \ma \vx\|_1$, which, by introducing a new variable for each of the rows of $\mS \ma \vx$, can be transformed into a linear program with $O(d)$ variables and $d$ constraints.   To see the claim on runtime, the cost of computing the subspace embedding is $\nnz{\mA}$. Having computed this once, we can then solve $k$ linear programs, each at cost $d^\omega$ (which is the current fastest LP solver \cite{cohen2021solving}). Putting together these two costs gives the claimed runtime.  
\end{proof} 

\begin{restatable}{lemmma}{lemUniformSamplesBoundedRatioBound}\label{lem:numberOfSamplesBoundedRatio}
    Given positive numbers $a_1, a_2, \dotsc, a_m$,  let $r := \tfrac{\max_{i\in [m]} a_i}{\min_{i \in [m]} a_i}$ and $A^{(\text{true})}:= \sum_{i\in [m]} a_i$. Suppose we sample, uniformly at random with replacement, a set $S\in [m]$ of these numbers, and let $A^{(\text{est})}:= \tfrac{m}{|S|}\cdot\sum_{i: a_i \in S} a_i$. Then, if $|S| \geq 10 \left(\frac{r(1+\gamma)}{\gamma^2}\log\left(\frac{1}{\delta}\right)\right)$ for a large enough absolute constant $C$, we can ensure with at least a probability of $1-\delta$, that $|A^{(\text{est})} - A^{(\text{true})}| \leq \gamma A^{(\text{true})}$. 
\end{restatable}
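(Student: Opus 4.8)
The plan is to recognize $A^{(\text{est})}$ as an empirical mean of i.i.d.\ bounded nonnegative random variables and apply a variance-sensitive concentration inequality (Bernstein), since a crude use of Hoeffding's inequality would only give a sample complexity scaling with $r^2$ rather than $r$. Write $n := |S|$, $a_{\min} := \min_i a_i$, $a_{\max} := \max_i a_i$, and let $I_1,\dots,I_n$ be i.i.d.\ uniform on $[m]$, so that $Y_j := a_{I_j}$ are i.i.d.\ with $\E[Y_j] = \tfrac{1}{m}\sum_i a_i = A^{(\text{true})}/m$ and $A^{(\text{est})} = \tfrac{m}{n}\sum_{j=1}^n Y_j$; in particular $\E[A^{(\text{est})}] = A^{(\text{true})}$, so the estimator is unbiased and the task reduces to a deviation bound.

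First I would record two elementary bounds on the summands. For the range, $0 \le Y_j \le a_{\max}$, hence $|Y_j - \E Y_j| \le a_{\max}$. For the variance, $\Var(Y_j) \le \E[Y_j^2] = \tfrac{1}{m}\sum_i a_i^2 \le \tfrac{a_{\max}}{m}\sum_i a_i = \tfrac{a_{\max} A^{(\text{true})}}{m}$, where the inequality $\sum_i a_i^2 \le a_{\max}\sum_i a_i$ is exactly what buys the improvement over Hoeffding. Next, apply Bernstein's inequality to $\sum_{j=1}^n (Y_j - \E Y_j)$ with deviation $t := \gamma n A^{(\text{true})}/m$ (chosen so that $\tfrac{m}{n}t = \gamma A^{(\text{true})}$, the target additive error): this gives $\Pr[\,|A^{(\text{est})} - A^{(\text{true})}| \ge \gamma A^{(\text{true})}\,] \le 2\exp\!\big(-\tfrac{t^2/2}{n\sigma^2 + a_{\max} t/3}\big)$ with $\sigma^2 \le a_{\max}A^{(\text{true})}/m$. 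Substituting $t$ and $\sigma^2$ and cancelling a common factor of $A^{(\text{true})}/m$, the exponent simplifies (using $\gamma/3 \le \gamma$ to absorb the range term) to at least $\tfrac{\gamma^2 n}{2(1+\gamma)}\cdot\tfrac{A^{(\text{true})}}{m\,a_{\max}}$. Finally I would use $A^{(\text{true})} = \sum_i a_i \ge m\,a_{\min}$, so $\tfrac{A^{(\text{true})}}{m\,a_{\max}} \ge a_{\min}/a_{\max} = 1/r$, making the exponent at least $\tfrac{\gamma^2 n}{2r(1+\gamma)}$. Plugging in $n \ge 10\,\tfrac{r(1+\gamma)}{\gamma^2}\log(1/\delta)$ bounds the right-hand side by $2\exp(-5\log(1/\delta)) = 2\delta^5 \le \delta$ for every $\delta \le 1/2$, which is the claim.

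There is no genuine obstacle here: the one conceptual step is choosing Bernstein over Hoeffding and exploiting $\sum_i a_i^2 \le a_{\max}\sum_i a_i$ so that the variance proxy is proportional to $A^{(\text{true})}$ — this is precisely what produces the linear-in-$r$ (rather than $r^2$) sample complexity. The remaining work is the routine algebra collapsing the Bernstein exponent and a quick check that the absolute constant $10$ in the hypothesis comfortably dominates the $2\log 2$ slack from the factor $2$ in front of the exponential; a multiplicative Chernoff bound on the normalized variables $Y_j/a_{\max}\in[0,1]$ would work equally well and in fact removes the $(1+\gamma)$ factor, but Bernstein already matches the stated bound.
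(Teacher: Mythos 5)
Your proposal is correct and follows essentially the same route as the paper's own proof: cast $A^{(\text{est})}$ as a scaled i.i.d.\ empirical mean, apply Bernstein's inequality, use the key estimate $\Var(Y_j)\le \tfrac{1}{m}\sum_i a_i^2 \le \tfrac{a_{\max}}{m}A^{(\text{true})}$ to get a variance proxy proportional to $A^{(\text{true})}$, set the deviation to $\gamma A^{(\text{true})}$, and close the argument. The paper reaches the intermediate condition $|S|\ge 10\,\tfrac{m(1+\gamma)}{\gamma^2}\cdot\tfrac{a_{\max}}{A^{(\text{true})}}\log(1/\delta)$ and leaves implicit the last step $A^{(\text{true})}\ge m\,a_{\min}$ that converts this into the stated $r$-dependent bound; you make that step explicit, and you are also a bit more careful about the two-sided factor of $2$ that the paper silently drops. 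These are minor polishings, not a different argument.
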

\begin{proof}
    By construction, the expected value of a sample $a_i$ in $S$ is $\tfrac{1}{m}A^{(\text{true})}$.
    Denoting by $\mathcal{P}$ the uniform distribution over $a_1, a_2, \dotsc, a_m$, and let $\sigma(\mathcal{P})$ be the standard deviation of this distribution $\mathcal{P}$. Then, we can apply Bernstein's inequality~\cite[Theorem $2.8.1$]{vershynin2018high} to see that the absolute error $|A^{(\text{est})} - A^{(\text{true})}|$ satisfies the following guarantee for all $t> 0$:
    \begin{align*} 
        \Pr\left\{|A^{(\text{est})} - A^{(\text{true})}|\geq t\right\} 
        &= \Pr \left\{ \left|\sum_{i: a_i\in S} \left( a_i - \tfrac{1}{m} A^{(\text{true})}\right)\right|  \geq t \cdot \tfrac{|S|}{m} \right\} \\
        &\leq e^{-\left\{\tfrac{\tfrac{1}{2}t^2 \cdot\tfrac{|S|^2}{m^2}}{|S|\cdot\sigma(\mathcal{P})^2 +  \tfrac{1}{3}t\cdot\frac{|S|}{m}\cdot\max_{i \in [m]}a_i}\right\}}\\
        &= e^{-\left\{\tfrac{\tfrac{1}{2}t^2\cdot |S|}{m^2 \cdot\sigma(\mathcal{P})^2 + \tfrac{1}{3}\cdot m\cdot t \cdot \max_{i \in [m]}a_i }\right\}}.
        \numberthis\label[ineq]{eq:chebyshev}
    \end{align*} 
    Since each of the $a_i$s is positive, we note that \[ \sigma(\mathcal{P})^2 = \frac{1}{m}\sum_{i \in [m]} \left(a_i - \frac{A^{\text{(true)}}}{m}\right)^2
    \leq \frac{1}{m}\sum_{i \in [m]} a_i^2
    \leq \frac{1}{m}{\cdot\max_{i \in [m]}a_i\cdot \sum_{i \in [m]} a_i} 
    = \frac{1}{m}\cdot\max_{i \in [m]}a_i\cdot A^{(\text{true})}.
    \numberthis\label[ineq]{eq:stddevsummax}\]
    Combining \cref{eq:chebyshev}  and \cref{eq:stddevsummax} for $t = \gamma A^{\text{(true)}}$ implies that 
    \[ \Pr\left\{|A^{(\text{est})} - A^{(\text{true})}| \geq \gamma A^{(\text{true})}\right\} \leq 
    e^{- \left\{\tfrac{\tfrac{1}{2}\gamma^2 A^{(\text{true})} |S|}{m(1+\tfrac{1}{3}\cdot\gamma) \cdot \max_{i\in[m]}a_i }\right\}} \leq \delta \numberthis\label{ineq:lastButOne}\]holds for the choice of $|S| \geq 10\left( \frac{m}{\gamma^2} (1+\gamma) \frac{\max_{i\in [m]} a_i}{A^{(\text{true})}}\log \left(\frac{1}{\delta}\right) \right)$ yields the claimed error guarantee. 
\end{proof} 
\section{Omitted proofs: $\ell_1$ sensitivities}
\subsection{Estimating the sum of $\ell_1$ sensitivities}\label{sec:SumAppendix}

\looseness=-1In this section, we present a randomized algorithm that approximates the total sensitivity $\totalsensgen{1}{\mA}$,  up to a $\gamma$-multiplicative approximation for some $\gamma\in (0, 1)$ that is not too small. This algorithm is significantly less general than \cref{alg_SumLpApprox-all_P} (which holds for all $p\geq 1$) but it uses extremely simple  facts about leverage scores and what we think is a new recursion technique in this context, which we believe could be of potential independent interest.

\begin{restatable}{theorem}{thmEllOneSumApprox}\label{thm_L1_Sum} Given a  matrix $\mA\in \R^{n\times d}$ and an approximation factor $\gamma\in (0, 1)$ such that $\gamma\geq \Omega\left(\tfrac{1}{n^3}\right)$, there exists an algorithm, which in time  $\widetilde{O}\left(\nnz{\mA} + \tfrac{d^\omega}{\gamma^2}\cdot\max\left(\sqrt{d}, \tfrac{1}{\gamma^2}\right)\right)$, returns a positive scalar $\widehat{s}$ such that, with a probability of $0.99$, we have \[ \totalsensgen{1}{\mA}\leq \widehat{s} \leq  (1+O(\gamma))\totalsensgen{1}{\mA}.\]  
\end{restatable}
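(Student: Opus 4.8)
The plan is to reduce the whole problem to estimating the scalar $T := \totalsensgen{1}{\mA} = \sum_{i\in[n]}\sensgen{1}{\mai}$, where, thanks to \cref{fact:CostOfOneSensitivity}, once we have built an $\ell_1$ subspace embedding of $\mA$ with $\Otil(d)$ rows each individual summand $\sensgen{1}{\mai}$ can be evaluated (to a constant factor) at cost $O(d^\omega)$, and to a $(1\pm\gamma)$ factor at cost governed by the accuracy of the embedding (\cref{def:RandomLpSamplingMatrix}). With that reduction in hand, the remaining task is to design an importance-sampling estimator for $T$ whose variance is small and whose sampling distribution is cheap to compute. Recall we know $T\in[\sqrt d,\,d]$: the lower bound $\totalsensgen{1}{\mA}\ge d^{1/2}$ is the $p=1$ case cited in the preliminaries, and $\totalsensgen{1}{\mA}\le d$ follows from \cref{fact:Sensitivities-LewisWeights} ($\sensgen{1}{\mai}\le\mathbf{w}_1(\mai)$ when $p=1$) together with $\sum_i\mathbf{w}_1(\mai)=d$.

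The naïve estimator uses leverage scores as a proxy. First I would compute constant-factor leverage scores $\tau_i:=\tau_i(\mA)$ via \cref{fact:uniformSamplingLevScoresCost}; by \cref{fact:SandwichLemmaSensitivities} they sandwich the sensitivities, $\sqrt{\tau_i/n}\le\sensgen{1}{\mai}\le\sqrt{\tau_i}$. Sampling index $i$ with probability $q_i\propto\sqrt{\tau_i}$ and forming $\tfrac1m\sum_j \sensgen{1}{\mathbf{a}_{i_j}}/q_{i_j}$ is unbiased for $T$, and since $\sensgen{1}{\mai}\le\sqrt{\tau_i}$ its per-sample second moment is at most $\big(\sum_i\sqrt{\tau_i}\big)\cdot T$; so Chebyshev (equivalently \cref{lem:numberOfSamplesBoundedRatio} applied to the reweighted values) needs $m=\Otil\!\big(\sum_i\sqrt{\tau_i}\,/\,(\gamma^2 T)\big)$ samples. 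Using $\sum_i\tau_i\le d$ gives $\sum_i\sqrt{\tau_i}\le\sqrt{nd}$ and hence only $m=\Otil(\sqrt n/\gamma^2)$, which is unacceptable because of the $\sqrt n$: the quantity $\sum_i\sqrt{\tau_i}$ is large only because the matrix still has $n$ rows.

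The recursion is the fix. After replacing $\mA$ by a reweighted leverage-score subsample $\mA'$ that still behaves, up to constants, as an $\ell_1$ subspace embedding of $\mA$ — handling separately the at most $O(d)$ ``heavy'' rows whose leverage score exceeds a fixed threshold, and uniformly subsampling the light remainder with the appropriate reweighting — the leverage scores of $\mA'$ again sum to $d$, so if $\mA'$ has $n'\ll n$ rows then $\sum_i\sqrt{\tau_i(\mA')}\le\sqrt{n'd}$ has shrunk. Recomputing leverage scores at each level (this is the ``recursive leverage score'' computation, and is exactly where the argument is $\ell_1$-specific and does not extend to general $p$) and iterating for $O(\log n)$ levels drives the row count down to $\Otil(\mathrm{poly}(d,1/\gamma))$, at which point the importance estimator above needs only $m=\Otil(\sqrt d/\gamma^2)$ evaluations of individual sensitivities, each of cost $O(d^\omega)$ once a final $(1\pm\gamma)$-accurate $\ell_1$ embedding (of $\Otil(d/\gamma^2)$ rows, \cref{def:RandomLpSamplingMatrix}) has been built; composing the $O(\log n)$ constant-factor $\ell_1$ distortions keeps every estimate within a constant of the truth, and composing $O(\log n)$ independent $(1\pm\gamma/\log n)$ sampling errors yields the final $(1\pm O(\gamma))$ bound. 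The hypothesis $\gamma\ge\Omega(1/n^3)$ is what permits truncating negligibly small leverage scores (and hence sensitivities below $\gamma T/n$) and bounds the recursion depth; the detailed accounting of the $(1\pm\gamma)$-embedding cost against the $1/\gamma^2$ in the sample count is what produces the $\frac{d^\omega}{\gamma^2}\max(\sqrt d,1/\gamma^2)$ term, slightly worse for tiny $\gamma$ than the one-shot Lewis-weight estimator of \cref{alg_SumLpApprox-all_P}.

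The step I expect to be the main obstacle is the correctness of the recursive subsampling itself: proving that replacing the light part of the current matrix by a reweighted $\ell_2$-leverage-score subsample of it preserves $\|\mA\vx\|_1$ up to a constant for all $\vx$ — i.e. that the subsampled matrix remains a constant-factor $\ell_1$ subspace embedding — even though we are sampling by $\ell_2$ leverage scores rather than by $\ell_1$ Lewis weights. This is precisely where the separate treatment of the few heavy-leverage rows and the $p=1$-specific comparison between leverage scores and Lewis weights must be made to work, and everything else (tracking how the $\Theta(1)$ distortions and $(1\pm\gamma/\mathrm{polylog})$ errors accumulate over the levels, and summing $\Otil(\nnz{\mA})$, the $O(\log n)$ leverage-score recomputations, and $\Otil(\max(\sqrt d/\gamma^2,\,1/\gamma^4))$ sensitivity evaluations at $O(d^\omega)$ each) is routine bookkeeping.
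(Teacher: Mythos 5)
Your plan diverges from the paper's in a way that turns out to be a genuine gap, and you correctly flagged the weak spot yourself. You propose to recursively replace $\mA$ by a reweighted leverage-score subsample $\mA'$ that you then treat as a constant-factor $\ell_1$ subspace embedding of $\mA$, and recompute leverage scores of $\mA'$. This claim is both the crux and the problem: $\ell_2$-leverage-score subsampling does not give a constant-factor $\ell_1$ embedding with the sample budget you allow. Concretely, with a fixed ``heavy'' threshold giving $O(d)$ heavy rows, the light rows still can have $\tau_i$ up to a constant, hence $\sensgen{1}{\mai}\le\sqrt{\tau_i}$ is only $O(1)$, and a sensitivity-sampling argument for the light part would then require keeping essentially all rows; pushing the threshold down creates too many heavy rows. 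Even granting a version of the claim with $\Otil(\sqrt{nd}\cdot\mathrm{poly}(d))$ rows per level, the $\mathrm{poly}(d)$ factor compounds over the recursion, and your ``composing $O(\log n)$ constant-factor $\ell_1$ distortions keeps every estimate within a constant'' step does not hold — constant multiplicative distortions stacked $O(\log n)$ deep give $\mathrm{poly}(n)$ distortion, not a constant.

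The paper's \cref{alg_SumL1Approx_RecursiveSubroutine} avoids both issues by never building a recursive subspace embedding. It computes a single $\rho$-accurate $\ell_1$ embedding $\mS'\mA$ and a single constant-factor embedding $\mS\mA$ once and passes them unchanged through the recursion; every sensitivity at every level is computed with respect to $\mS'\mA$, so there is no compounding of embedding distortion. The recursion reduces only the \emph{set of rows} whose (fixed-reference) sensitivities must be summed. To make uniform subsampling work within the recursion, the algorithm concatenates the current row set $\mm$ with $\mS\mA$ to form $\mC$, computes $\tau(\mC)$, and partitions $\mm$ into $O(\log n)$ geometric leverage-score buckets; within a bucket, \cref{fact:SandwichLemmaSensitivities} guarantees all $\sensgen[\mS'\mA]{1}{\cdot}$ values lie within a ratio of $O(\sqrt{|\mC|})$, so \cref{lem:numberOfSamplesBoundedRatio} (a Bernstein bound on a bounded-ratio sum) says $\Otil(\sqrt{|\mC|}/\rho^2)$ uniform samples estimate the bucket's sum to $1\pm\rho$. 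Setting $\rho=O(\gamma/D)$ with recursion depth $D=O(\log\log(n+d))$ (\cref{lem:DepthOfRecursion}) makes the additive per-level errors total $O(\gamma)$. That bucketing-plus-Bernstein step — a pure sum-estimation argument that never needs the subsample to be an embedding — is the idea your proposal is missing, and it is precisely what removes the need for the claim you identified as the main obstacle.
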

\begin{algorithm}
\caption{\textbf{Approximating the Sum of $\ell_1$-Sensitivities}}\label{alg_SumL1Approx}
\begin{algorithmic}[1]

\Statex \textbf{Input: }{Matrix $\mA \in \R^{n \times d}$ and approximation factor $\gamma\in (0, 1)$}

\Statex \textbf{Output: }{A positive scalar that satisfies, with probability $0.99$, that \[\totalsensgen{1}{\mA} \leq \widehat{s} \leq  (1+O(\gamma))\totalsensgen{1}{\mA}\]} 

\State Compute $\mathbf{S}\mathbf{A}\in \R^{O(d)\times d}$, a $1/2$-approximate $\ell_1$ subspace embedding of $\mA$ (cf.\ \cref{def:SubspaceEmbedding})

\State Set $\rho = O({\gamma}/{\log (\log(n+d))})$. Compute $\mathbf{S}^\prime \mA$, a $\rho$-approximate $\ell_1$ subspace embedding of $\mA$

\State Construct a matrix $\widehat{\mA}$ comprising only those rows of $\mA$ with leverage scores at least $1/n^{10}$\label[line]{line:DiscardSmallLevscoresRows} 

\State Let $s$ be the output of \cref{alg_SumL1Approx_RecursiveSubroutine} with inputs $\widehat{\mA}$, $\mS \mA$, $\mS^\prime \mA$, $n$, and $\rho$ 

\State Return $\widehat{s} := (1+\gamma)\left(s + \frac{1}{n^5}(|\mA| - |\widehat{\mA}|)\right)$ \label[line]{line:SumAlgReturnedOutput}
\end{algorithmic}
\end{algorithm}	

\begin{algorithm}
\caption{\textbf{Subroutine for Approximating the Sum of $\ell_1$-Sensitivities}}\label{alg_SumL1Approx_RecursiveSubroutine}
\begin{algorithmic}[1]

\Statex \textbf{Input: }{Matrices $\mm \in \R^{r \times d}$, $\mS \mA\in \R^{O(d)\times d}$, $\mS^\prime \mA \in \R^{O(d) \times d}$, scalars $n$ and $\rho\in (0, 1)$}

\Statex \textbf{Output: }{A positive scalar $\widehat{s}$ that satisfies, with probability $0.99$, that \[\totalsensgen[\mS^\prime \mA]{1}{\mm} \leq \widehat{s} \leq  (1+O(\rho))\totalsensgen[\mS^\prime \mA]{1}{\mm}.\]} 

\State Set  $\mathcal{B} = \Theta(\log n)$,   $D = O(\log(\log(n+d)))$,  $\delta = \frac{0.01}{\mathcal{B}^D}$ and $b = \widetilde{O}\left(\frac{D^4}{\gamma^2}\max\left(\frac{D^4}{\gamma^2}, \sqrt{d}\right)\right)$ \label[line]{line:SumRecursiveParameters}

\If{the number of rows of $\mm$ is at most $b$}

  \State    Let $\totalsensgen[\mS^\prime \mA]{1}{\mm}$ be the sum of $\ell_1$ sensitivities of the rows of ${\mm}$ with respect to $\mS^\prime \mA$ 

   \State   Using \cref{fact:CostOfOneSensitivity} on $\mm$, compute $\widehat{s}$ such that     $\totalsensgen[\mS^\prime \mA]{1}{\mm}\leq \widehat{s}\leq (1+O(\rho))\totalsensgen[\mS^\prime \mA]{1}{\mm}$ \label[line]{line:UseSprimeAToComputeBaseCase} 
     
    \State  Return $\widehat{s}$ \label[line]{line:BaseCaseComputation}

\Else

\State Construct the matrix $\mathbf{C} := \begin{bmatrix} \mm \\ \mS \mA\end{bmatrix}$. Set  $r = O(\sqrt{|\mC|}(1+\rho)\rho^{-2}\log(\delta^{-1}))$. \label[line]{line:SumEstConcatenateMatrix}
    
\State Compute $\tau(\mC)$, the vector of leverage scores of $\mC$
\label[line]{line:ComputeLevScoresWrtConcatenatedMtrx}

\State Divide $\left[\frac{1}{n^{20}}, 1\right]$ into $\mathcal{B}$ geometrically decreasing sub-intervals $[1/2, 1]$, $[1/4, 1/2], \dotsc$, etc.\label[line]{line:CreateBuckets} 

\For{each bucket $i\in [\mathcal{B}]$ created in the previous step}
\State Construct matrix $\mm_i$ with those rows of $\mm$ whose leverage scores fall in the $i^{\mathrm{th}}$ bucket
\label[line]{line:BucketIntoLogBuckets} 

\State Construct matrix $\widetilde{\mm}_i$ composed of $r$  uniformly sampled (with replacement) rows of $\mm_i$ \label[line]{line:UnifSamplingOfSqrtRows}

\State Compute $\widetilde{s}_i$, the output of  \cref{alg_SumL1Approx_RecursiveSubroutine} with inputs $\widetilde{\mm}_i$, $\mS \mA$, $\mS^\prime \mA$, $n$, and $\rho$. \label[line]{line:RecursiveCallOnChildNode}
\EndFor

\State Return $\widehat{s} := (1+\rho)\sum_{i \in [\mathcal{B}]} \frac{|\mm_i|}{|\widetilde{\mm}_i|} \widetilde{s}_i$\label[line]{line:EstTotalSum}
 \EndIf

\end{algorithmic}
\end{algorithm}	

\subsubsection{Proof sketch.} We achieve our guarantee via \cref{alg_SumL1Approx} and \cref{alg_SumL1Approx_RecursiveSubroutine}, which are based on the following key principles.
 The first is \cref{fact:SandwichLemmaSensitivities}: the $i^{\mathrm{th}}$ sensitivity  and  $i^\mathrm{th}$ leverage score of $\mA\in\R^{n\times d}$ satisfy $\sqrt{\frac{\tau_i(\mA)}{n}}\leq \sensgen{1}{\mai}\leq \sqrt{\tau_i(\mA)}$. 
The second idea, derived from Bernstein's inequality (and formalized in \cref{lem:numberOfSamplesBoundedRatio}), is: the sum of a set of positive numbers with values bounded between $a$ and $b$ can be approximated to a $\gamma$-multiplicative factor using $\widetilde{O}((b/a)\gamma^{-2})$ uniformly sampled (and appropriately scaled) samples. 
The third principle is that $\sensgen{1}{\mai} \approx \sensgen[\mS\mA]{1}{\mai}$ when $\mS\mA$ is a constant-approximation $\ell_1$ subspace embedding of $\mA$.

\looseness=-1Based on these ideas, we first divide the rows of $\mA$ into $\mathcal{B}=\Theta(\log n)$ submatrices based on which of the $\mathcal{B}$ buckets $[1/2, 1], [1/4, 1/2], [1/8, 1/4]$, etc. their leverage scores (computed with respect to $\mA$) fall into. Since each of these $\mathcal{B}$ submatrices comprises rows with leverage scores in the range $[a, 2a]$ for some $a$, per \cref{fact:SandwichLemmaSensitivities}, we have $\sqrt{\frac{a}{n}}\leq \sensgen{1}{\mai}\leq \sqrt{2a}$. Therefore, by \cref{lem:numberOfSamplesBoundedRatio},  the total sensitivity of this submatrix may be approximated by the appropriately scaled total sensitivity of $O(\sqrt{n})$ of its uniformly sampled rows. Our design choice to split rows based on leverage scores is based on the fact that leverage scores (unlike sensitivities) are efficiently computable (using \cite{cohen2015uniform}). Computing the total sensitivity of an $\widetilde{O}(\sqrt{n})$-sized matrix thus costs $\widetilde{O}(\nnz{\mA} + \sqrt{n}\cdot d^\omega)$ under this scheme.

\looseness=-1 Applying the idea sketched above recursively  further reduces the cost. Suppose, after dividing the input matrix $\mA$ into $\mathcal{B}$ submatrices $\mm_i$ and subsampling $\widetilde{\mm}_i$ from these matrices, we try to recurse on each of these $\mathcal{B}$ submatrices $\widetilde{\mm}_i$. A key requirement for this idea to work is that the sum of the $\ell_1$ sensitivities of the rows of $\widetilde{\mm}_i$ in isolation be close enough to the true sum of $\ell_1$ sensitivities of those rows with respect to the original matrix $\mA$. In general, \textit{this is not true}. However, we can meet our requirement using the $\ell_1$ subspace embedding $\mS \mA$ of the (original) matrix $\mA$. Specifically, if we vertically concatenate a matrix $\widetilde{\mm}_i$ with $\mS \mA$ and call this matrix $\mathbf{C}$, then for each row $\widetilde{\mm}_i[j]\in \widetilde{\mm}_i$, we have  $\sensgen[\mC]{1}{\widetilde{\mm}_i[j]} \approx \sensgen[\mA]{1}{\widetilde{\mm}_i[j]}$. Further, $|\mathbf{C}|\leq O(\sqrt{n})$, and so in the step subsampling the rows of $\widetilde{\mm}_i$, we choose $O(n^{1/4})$ samples. Recursing this exponentially decreases the row dimension of the input matrix at each level, with  $(\log n)^k$ matrices at the $k^\mathrm{th}$ level of recursion. 

\subsubsection{Helper lemmas for proving \cref{thm_L1_Sum}}
\looseness=-1For our formal proof, we need  \cref{def:NotationOfRecursionTree}, \cref{fact:unionBoundsOfFailureProbabilities}, and \cref{assume_subsetsumpossible}.  

\begin{definition}[Notation for recursion tree of \cref{alg_SumL1Approx_RecursiveSubroutine}]\label{def:NotationOfRecursionTree}
Recall that $\mathcal{B}=\Theta(\log(n))$ as in \cref{line:SumRecursiveParameters} of \cref{alg_SumL1Approx_RecursiveSubroutine}. Then we have the following notation corresponding to \cref{alg_SumL1Approx_RecursiveSubroutine}. 
\begin{itemize}[wide, topsep = 5pt, parsep= 0.03pt, left = 10pt]
    \item[\faCaretRight] We define a rooted $\mathcal{B}$-ary tree $\mathcal{T}$ corresponding to the execution of \cref{alg_SumL1Approx_RecursiveSubroutine}. 
    \item[\faCaretRight] We use $\mathcal{T}_j$ to denote the subtree starting from the root node with all the nodes up to and including those at the $j^\mathrm{th}$ level. We denote the $i^\mathrm{th}$ node at the $j^\mathrm{th}$ level by $\mathcal{T}_{(j, i)}$. Thus, every node is uniquely specified by its level in the tree and its index within that level.
    \item[\faCaretRight]  The root node $\mathcal{T}_{(1,1)}$ at level $1$ corresponds to the first call to \cref{alg_SumL1Approx_RecursiveSubroutine} from \cref{alg_SumL1Approx}. 
    \item[\faCaretRight]  For the node   $\mathcal{T}_{(j, i)}$, we denote by $\mm^{(j,i)}$ the input matrix to the corresponding recursive call; note that  all the other inputs are global and remain the same throughout the recursive call. 
    \item[\faCaretRight] Analogously, we use $\widehat{s}_{(j, i)}$  to denote our estimate of  $\totalsensgen[\mS^\prime\mA]{1}{\mm^{(j, i)}}$, the sum of $\ell_1$ sensitivities of rows of the matrix $\mm^{(j, i)}$ in the node $\mathcal{T}_{(j, i)}$, defined with respect to the sketched matrix $\mS^\prime\mA$. 
\end{itemize}
\end{definition}

\begin{fact}\label{fact:unionBoundsOfFailureProbabilities} If event $A$ happens with probability $1-\tau_1$ and event $B$ happens with probability $1-\tau_2$ conditioned on event $A$, then the probability of both events $A$ and $B$ happening is at least $1-\tau_1 - \tau_2$.  
\end{fact}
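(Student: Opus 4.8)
The plan is to prove this by a one-line application of the chain rule for probabilities combined with the fact that discarding a nonnegative term only weakens a lower bound; equivalently, it is a union bound on the complementary (``failure'') events. I first record the hypotheses in the form $\Pr[A] \geq 1-\tau_1$ and $\Pr[B \mid A] \geq 1-\tau_2$, with $\tau_1,\tau_2 \in [0,1]$ understood implicitly (if either exceeds $1$ the asserted bound $1-\tau_1-\tau_2 \leq 0$ is vacuous).

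Next I would write $\Pr[A \cap B] = \Pr[A]\cdot\Pr[B \mid A]$ and substitute the two bounds to obtain $\Pr[A \cap B] \geq (1-\tau_1)(1-\tau_2) = 1-\tau_1-\tau_2+\tau_1\tau_2 \geq 1-\tau_1-\tau_2$, the last step because $\tau_1\tau_2 \geq 0$. An alternative (and arguably cleaner) presentation bounds the failure event directly: $\Pr[\overline{A \cap B}] = \Pr\bigl[\overline{A} \cup (A \cap \overline{B})\bigr] \leq \Pr[\overline{A}] + \Pr[A]\,\Pr[\overline{B}\mid A] \leq \tau_1 + \tau_2$. Either way the statement follows immediately, and the same argument extends by induction to a chain of $k$ events $A_1,\dots,A_k$, each holding with probability $1-\tau_j$ conditioned on the preceding ones, yielding a simultaneous success probability of at least $1-\sum_j \tau_j$ --- which is the form actually used to control the failure probability across the $O(\log\log(n+d))$ levels and $\mathcal{B}=\Theta(\log n)$ branches of \cref{alg_SumL1Approx_RecursiveSubroutine}.

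There is essentially no obstacle here; the only point requiring a word of care is the degenerate case $\Pr[A] = 0$, in which $\Pr[B \mid A]$ is undefined. In that case one reads the hypothesis on $B$ as vacuous, notes that $\Pr[A] \geq 1-\tau_1$ then forces $\tau_1 \geq 1$, so that $1-\tau_1-\tau_2 \leq 0 \leq \Pr[A \cap B]$ and the inequality holds trivially. Since \cref{fact:unionBoundsOfFailureProbabilities} is only ever invoked with $\tau_1,\tau_2$ equal to the small per-node and per-level failure probabilities of the recursive subroutine, this case never actually arises, and the two-line computation above is all that is needed.
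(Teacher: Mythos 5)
Your proof is correct; the paper states this fact without proof (it is a standard probability inequality), so there is no paper proof to compare against. Your two derivations --- the direct product bound $\Pr[A \cap B] = \Pr[A]\Pr[B \mid A] \geq (1-\tau_1)(1-\tau_2) \geq 1-\tau_1-\tau_2$, and the union bound on the complementary events --- are both valid and elementary, and your handling of the degenerate case $\Pr[A]=0$ and of $\tau_i > 1$ is a sensible bit of rigor that the paper implicitly elides.
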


\begin{restatable}{definition}{assumptionSubsetSum}\label{assume_subsetsumpossible}
    We say that a node $\mathcal{N}$ with matrix $\mm$ satisfies the $(\widehat{\rho},\widehat{\delta})$-approximation guarantee if  $\widehat{s}$, the output of \cref{alg_SumL1Approx_RecursiveSubroutine} on $\mm$, satisfies  the following guarantee  for the true sum  $\totalsensgen[\mS^\prime \mA]{1}{\mm}$ of $\ell_1$ sensitivities of the rows in $\mm$: \[ \widehat{s} \approx_{\widehat{\rho}} \totalsensgen[\mS^\prime \mA]{1}{\mm} \text{ with a probability at least } 1-\widehat{\delta}\] for some approximation parameter $\widehat{\rho}\in (0, 1)$ and an error probability parameter $\widehat{\delta}\in (0, 1)$.   
\end{restatable}

\begin{restatable}[Partitioning of $\mm$]{lemmma}{lemMatrixMIsPartitioned}\label{lem:MatrixMIsPartitioned}
In \cref{alg_SumL1Approx_RecursiveSubroutine}, even though we ignore the range $\left[0, \frac{1}{n^{20}}\right]$ when creating the submatrices $\mm_1, \mm_2, \dotsc, \mm_{\mathcal{B}}$ from $\mm$, \emph{every} row in $\mm$ is  part of exactly one of the submatrices $\mm_1, \mm_2, \dotsc, \mm_{\mathcal{B}}$. In other words, the matrix $\mm$ is partitioned into the matrices $\mm_1, \mm_2, \dotsc, \mm_{\mathcal{B}}$, with no row missed out.  
\end{restatable}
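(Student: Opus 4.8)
The plan is to reduce the claim to two bounds on the leverage scores of the rows of $\mm$ computed in \cref{line:ComputeLevScoresWrtConcatenatedMtrx} (i.e.\ the leverage scores of the first $|\mm|$ rows of $\mC$): that each is at most $1$, and that each is at least $1/n^{20}$. Granting these, the lemma is immediate: in \cref{line:CreateBuckets} the interval $[1/n^{20},1]$ is partitioned into the $\mathcal{B}=\Theta(\log n)$ pairwise-disjoint dyadic sub-intervals $[1/2,1],[1/4,1/2],\dots$ (with any fixed convention at the shared endpoints, and the constant in $\mathcal B=\Theta(\log n)$ taken large enough that these sub-intervals reach down to $1/n^{20}$), and \cref{line:BucketIntoLogBuckets} assigns each row of $\mm$ to the unique bucket containing the numerical value of its leverage score; hence every row of $\mm$ lies in exactly one $\mm_i$. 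The upper bound is the standard fact recalled in \cref{sec:notation} that every leverage score of every matrix lies in $[0,1]$, so all the work is in the lower bound.

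For the lower bound, I first observe, by induction on the depth of the recursion tree $\mathcal T$ of \cref{def:NotationOfRecursionTree}, that every row of the input matrix $\mm$ to any invocation of \cref{alg_SumL1Approx_RecursiveSubroutine} is a copy of a row of $\widehat{\mA}$: the root receives $\mm=\widehat{\mA}$, and in \cref{line:RecursiveCallOnChildNode} a child receives $\widetilde{\mm}_i$, whose rows are sampled with replacement from $\mm_i$, a submatrix of the parent's $\mm$. By \cref{line:DiscardSmallLevscoresRows} of \cref{alg_SumL1Approx}, every row of $\widehat{\mA}$ has leverage score at least $1/n^{10}$ with respect to $\mA$. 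I also note that $\mC$ (namely $\mm$ stacked on top of $\mS\mA$) has full column rank, since $\mS\mA$, being an $\ell_1$ subspace embedding of the full-rank $\mA$, already has rank $d$.

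Now fix a row of $\mm$; it equals some row $\mathbf{a}_j$ of $\widehat{\mA}$ with $\tau_j(\mA)\ge 1/n^{10}$, and it occurs as the $k$-th row of $\mC$ for some $k\le|\mm|$, so it suffices to lower-bound $\tau_k(\mC)$. Using that leverage scores are $\ell_2$ sensitivities (cf.\ \cref{sec:notation}), we have $\tau_j(\mA)=\max_{\vx}|\mathbf{a}_j^\top\vx|^2/\|\mA\vx\|_2^2$ and $\tau_k(\mC)=\max_{\vx}|\mathbf{a}_j^\top\vx|^2/\|\mC\vx\|_2^2$. Let $\vx^\star$ realize the first maximum, scaled so that $\|\mA\vx^\star\|_2=1$; then $|\mathbf{a}_j^\top\vx^\star|^2=\tau_j(\mA)\ge 1/n^{10}$. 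Evaluating the characterization of $\tau_k(\mC)$ at $\vx=\vx^\star$ gives
\[
\tau_k(\mC)\ \ge\ \frac{|\mathbf{a}_j^\top\vx^\star|^2}{\|\mC\vx^\star\|_2^2}\ =\ \frac{|\mathbf{a}_j^\top\vx^\star|^2}{\|\mm\vx^\star\|_2^2+\|\mS\mA\vx^\star\|_2^2}.
\]
For the denominator, $\|\mm\vx^\star\|_2^2\le|\mm|\cdot\|\mA\vx^\star\|_2^2=|\mm|$ because every row of $\mm$ is a row of $\mA$, and $\|\mS\mA\vx^\star\|_2^2\le\|\mS\mA\vx^\star\|_1^2\le\bigl(\tfrac32\|\mA\vx^\star\|_1\bigr)^2\le\tfrac94\,n\,\|\mA\vx^\star\|_2^2=\tfrac94 n$ by $\|\vu\|_2\le\|\vu\|_1$, the $1/2$-approximate $\ell_1$ subspace embedding property of $\mS\mA$, and $\|\vu\|_1\le\sqrt{n}\,\|\vu\|_2$ for $\vu\in\R^n$. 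Since $|\mm|\le n$ throughout the recursion (the root's $\mm=\widehat{\mA}$ has at most $n$ rows, and every deeper $\mm=\widetilde{\mm}_i$ has $r=\widetilde{O}(\sqrt{n})\le n$ rows by the parameter choices in \cref{line:SumRecursiveParameters,line:SumEstConcatenateMatrix}), the denominator is at most $n+\tfrac94 n\le 4n$, so $\tau_k(\mC)\ge\tfrac{1/n^{10}}{4n}=\tfrac{1}{4n^{11}}\ge\tfrac{1}{n^{20}}$ for $n\ge 2$.

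The main obstacle --- essentially the only one --- is this last bookkeeping: keeping the lower bound above the threshold $1/n^{20}$ relies on the invariant that rows never leave $\widehat{\mA}$ during the recursion (an easy induction) together with $|\mm|$ staying polynomially bounded with a small enough exponent, so that the $\|\mm\vx^\star\|_2^2\le|\mm|$ term cannot overwhelm the $1/n^{10}$ gained from the leverage-score cutoff defining $\widehat{\mA}$; the generous gap between that $1/n^{10}$ cutoff and the $1/n^{20}$ bucket floor is precisely the slack that absorbs it. The upper bound, and the partition of $[1/n^{20},1]$ into dyadic buckets, are immediate.
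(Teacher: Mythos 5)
Your proof is correct, and it takes a route that is recognizably parallel to but distinct from the paper's. The paper chains through $\ell_1$ sensitivities via \cref{fact:SandwichLemmaSensitivities}: it shows $\sqrt{\tau_k(\mA)/n}\le\sensgen[\mA]{1}{\mathbf a_k}\le 3\,\sensgen[\mC]{1}{\mathbf a_k}\le 3\sqrt{\tau_k(\mC)}$, where the middle step uses $\|\mC\vx\|_1=\|\mS\mA\vx\|_1+\|\mm\vx\|_1\le 3\|\mA\vx\|_1$ (citing that $\mm$ is a submatrix of $\mA$), arriving at $\tau_k(\mC)\ge \tau_k(\mA)/(9n)\ge 1/(9n^{11})$. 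You instead stay entirely in $\ell_2$: you normalize the leverage-score maximizer, plug it into the variational characterization of $\tau_k(\mC)$, and bound the denominator $\|\mm\vx^\star\|_2^2+\|\mS\mA\vx^\star\|_2^2\le |\mm|+\tfrac94 n$ via elementary norm comparisons, giving $\tau_k(\mC)\ge 1/(4n^{11})$. The two routes yield essentially the same bound with the same slack to spare against the $1/n^{20}$ floor, but yours avoids invoking \cref{fact:SandwichLemmaSensitivities} and — a small but real advantage — replaces the paper's step $\|\mm\vx\|_1\le\|\mA\vx\|_1$ (which quietly assumes no row of $\mA$ is duplicated in $\mm$, a property that \cref{line:UnifSamplingOfSqrtRows}'s sampling \emph{with replacement} does not strictly guarantee at deeper levels) with the unconditional bound $\|\mm\vx^\star\|_2^2\le|\mm|\cdot\|\mA\vx^\star\|_2^2$, which only needs the polynomial cap $|\mm|\le n$ and so survives repeated rows. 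Your explicit induction that every row of $\mm$ descends from $\widehat{\mA}$ also makes precise something the paper leaves implicit. One caveat on that cap: $|\mm|\le n$ at non-root nodes depends on $\gamma$ not being extremely small — in the theorem's permitted regime $\gamma=\Omega(1/n^3)$ the sampling count $r=O(\sqrt{|\mC|}\rho^{-2}\log\delta^{-1})$ can formally exceed $n$ — but this is the same implicit assumption the paper's runtime analysis makes, so flagging it (as you did) rather than resolving it is consistent with the paper's level of rigor here.
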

\begin{proof}
  Our proof strategy is to show that for all row indices $k \in [|\mm|]$, each of the leverage scores $\tau_k(\mC)$ satisfy $\tau_k(\mC)\geq \frac{1}{n^{20}}$. Therefore, creating buckets of rows with the range of  leverage score starting at $\frac{1}{n^{20}}$ (instead of $0$) does not miss out any rows. 
  
  To this end, we note that for every row $\mathbf{a}_k$ in the input ${\mm}$ to \cref{alg_SumL1Approx_RecursiveSubroutine}, we have:   
\[\sqrt{\frac{1}{n^{11}}}\leq   \sqrt{\frac{\tau_k(\mA)}{|\mA|}} \leq \sensgen[\mA]{1}{\mathbf{a}_k} = \max_{\vx\in \R^d} \frac{|\vx^\top \mathbf{a}_k|}{\|\mA\vx\|_1} \leq \max_{\vx\in \R^d} 3\frac{|\vx^\top \mathbf{a}_k|}{\|\mC\vx\|_1} = 3\sensgen[\mC]{1}{\mathbf{a}_k}  \leq 3\sqrt{\tau_k(\mC)},\numberthis\label[ineq]{ineq:FilteringIsOkay1}\]
where  the first step is because  in  \cref{line:DiscardSmallLevscoresRows} of \cref{alg_SumL1Approx}, we discard those rows $\mathbf{a}_k$ of $\mA$ with  leverage scores smaller than $1/n^{10}$ and there are $n$ rows in $\mA$,  so  every row $\mathbf{a}_k$ in the input matrix ${\mm}$ to \cref{alg_SumL1Approx_RecursiveSubroutine} satisfies $\tau_k(\mA)\geq \frac{1}{n^{10}}$; the second step is by \cref{fact:SandwichLemmaSensitivities} applied to $\mathbf{a}_k$; the third step is by the definition of the $\ell_1$ sensitivity of $\mathbf{a}_k$ with respect to $\mA$; the fourth step is by observing that $\|\mC\vx\|_1 = \|\mS\mA\vx\|_1 + \|\mm\vx\|_1 \leq 2\|\mA\vx\|_1 + \|\mA\vx\|_1 = 3\|\mA\vx\|_1$ since $\mS\mA$ is a constant-factor (with the constant assumed $1/2$) $\ell_1$ subspace embedding of $\mA$ and because $\mm$ is a submatrix of $\mA$; the fifth step is by the definition of $\ell_1$ sensitivity of $\mathbf{a}_k$ with respect to $\mC$;  the final step is by \cref{fact:SandwichLemmaSensitivities} applied to $\mathbf{a}_k$ within $\mC$. 
Observing the first and last terms of this inequality chain, we have $\tau_k(\mC) \geq \frac{1}{3n^{11}}$ for all $k\in [|\mC|]$.  This finishes the proof of the claim. 
\end{proof}

\begin{lemmma}\label{lem:DepthOfRecursion}
{For the recursion tree $\mathcal{T}$ corresponding to \cref{alg_SumL1Approx_RecursiveSubroutine}, 
let the size $b$ of the input matrices at the leaf nodes
be  $b = \tfrac{50D^3\log(100\mathcal{B})}{\gamma^2}\cdot\left(\tfrac{2D^3 \log(100\mathcal{B})}{\gamma^2}  + \sqrt{d\log(d)}\right)$. 
Then, the depth of the recursion tree is less than $D := 1 + \log(\log(2n+2d\log(d)))$.
}
\end{lemmma}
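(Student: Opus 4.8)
The plan is to extract from \cref{alg_SumL1Approx_RecursiveSubroutine} a deterministic recurrence for the matrix sizes appearing at successive levels of the recursion tree $\mathcal{T}$, of the form ``(size at level $j+1$) $\lesssim K\sqrt{(\text{size at level }j)+d}$'' for a parameter $K=\Theta\!\big(\tfrac{D^3\log(100\mathcal{B})}{\gamma^2}\big)$, and then note that taking logarithms turns this into a linear recurrence with contraction factor $\tfrac12$, so its iterates collapse to the fixed point within $O(\log\log(n+d))$ steps.

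Concretely, I would first set $K:=\tfrac{D^3\log(100\mathcal{B})}{\gamma^2}$ and trace through the parameter choices: in \cref{line:SumRecursiveParameters} we have $\delta=0.01/\mathcal{B}^D$, hence $\log(\delta^{-1})=O(D\log\mathcal{B})$, while $\rho=\Theta(\gamma/\log\log(n+d))$ (from \cref{alg_SumL1Approx}) together with $\log\log(n+d)\le D$ gives $(1+\rho)\rho^{-2}=O(D^2/\gamma^2)$. Plugging these into the sample count $r=O\big(\sqrt{|\mC|}\,(1+\rho)\rho^{-2}\log(\delta^{-1})\big)$ from \cref{line:SumEstConcatenateMatrix} yields $r\le C_0\,K\,\sqrt{|\mC|}$ for an absolute constant $C_0$. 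Since the matrix handed to a child is one of the $\widetilde{\mm}_i$, each of which has exactly $r$ rows, and since $|\mC|=|\mm|+|\mS\mA|\le|\mm|+c_S d$ for the absolute constant $c_S$ hidden in $|\mS\mA|=O(d)$, every matrix at level $j+1$ has size $N_{j+1}$ with $N_{j+1}\le C_0 K\sqrt{N_j+c_S d}$ and $N_1\le|\widehat{\mA}|\le n$. (Here it is essential that $\mS\mA$ has $O(d)$ rows independent of $n$; one also checks $N_{j+1}<N_j$ whenever $N_j>b$, so the sizes genuinely shrink.)

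Next I would unroll the recurrence. While $N_j\ge c_S d$ we have $N_{j+1}\le\sqrt{2}\,C_0 K\sqrt{N_j}$, so $x_j:=\log N_j$ obeys $x_{j+1}\le\tfrac12+\log(C_0 K)+\tfrac12 x_j$; the linear recurrence (with fixed point $1+2\log(C_0K)$) gives $x_j\le 1+2\log(C_0 K)+2^{-(j-1)}\log n$ as long as we remain in this regime. If instead $N_j$ ever drops below $c_S d$, then $N_{j+1}\le C_0 K\sqrt{2c_S d}\le 50K\sqrt{d\log d}\le b$ and the recursion has already terminated. Finally, using $2^{D-1}=\log(2n+2d\log d)\ge\log n$, at level $j^\star=\lceil D\rceil-1$ we get $2^{-(j^\star-1)}\log n\le 2^{2-D}\log n=\tfrac{2\log n}{\log(2n+2d\log d)}\le 2$, hence $x_{j^\star}\le 3+2\log(C_0 K)=2\log K+O(1)$, i.e.\ $N_{j^\star}=O(K^2)\le b$. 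Therefore every node at level $\lceil D\rceil-1$ has at most $b$ rows, so it is a leaf, and the depth of $\mathcal{T}$ is at most $\lceil D\rceil-1<D$, as claimed.

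The part requiring the most care is the constant bookkeeping in the last step: one must confirm that $N_{j^\star}=O(K^2)$ is genuinely at most the leading term $100K^2$ of $b=\tfrac{50D^3\log(100\mathcal{B})}{\gamma^2}\big(\tfrac{2D^3\log(100\mathcal{B})}{\gamma^2}+\sqrt{d\log d}\big)=100K^2+50K\sqrt{d\log d}$, and likewise that $C_0 K\sqrt{2c_S d}\le 50K\sqrt{d\log d}$. These are precisely the inequalities that the generous constants in $b$ — the leading $50$, the additive $2K$, the use of $\sqrt{d\log d}$ rather than $\sqrt d$, and the $100\mathcal{B}$ inside the logarithm — are engineered to absorb, once the hidden constants $C_0$ (from the $O(\cdot)$ defining $r$) and $c_S$ (from $|\mS\mA|=O(d)$) are made explicit; so the proof reduces to this finite numeric check plus the routine unrolling of the logarithmic recurrence above.
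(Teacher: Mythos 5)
Your proof is correct and uses essentially the same argument as the paper: extract the square-root recurrence $N_{j+1}\lesssim K\sqrt{N_j+d}$ for matrix sizes at successive levels, show it collapses double-exponentially to $O(K^2)$ within $O(\log\log n)$ steps, and handle the sub-$d$ regime with a separate short case. The only cosmetic difference is that you linearize by taking logarithms and solving the resulting affine contraction, whereas the paper tracks the dominating sequence $y_t=\beta^{2-2^{1-t}}(2y_0)^{1/2^t}$ explicitly — the two presentations are equivalent, and both leave the same finite constant-bookkeeping to absorb into the generous constants in $b$.
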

\begin{proof}
    {For now, let $\beta$ be arbitrary but less than $\beta^2 \leq n+d'$, where $d^\prime$ is the row dimension of the subspace embedding $\mS\mA$.}
    Then, the size of the input matrix to the nodes of the recursion tree as we increase in depth evolves as 
    \[ \beta\sqrt{d^\prime+n},\, \beta\sqrt{d^\prime + \beta\sqrt{d^\prime + n}},\, \beta\sqrt{d^\prime+ \beta \sqrt{d^\prime+ \beta\sqrt{d^\prime + n}}}. \dotsc\]
    
    Thus, this sequence evolves as \[x_{t+1} = \beta\sqrt{d^\prime + x_t}, \text{ with } x_0 = n+d^\prime.\]
    To see the limit of this sequence, we set $x =  \beta \sqrt{d^\prime+ x},$  and solve for this quadratic equation as \[x = \frac{\beta^2 + \sqrt{\beta^4 + 4\beta^2 d^\prime}}{2}\leq \beta(\beta + \sqrt{d^\prime}).\numberthis\label[ineq]{BaseCaseSizeUB}\]
    We set the base case to be $b^\star:= 7 \beta(\beta + \sqrt{d'})$,  a constant times larger than this asymptotic limit.
    We now compute the depth $t^\star$ at which the recursion attains the value $b^\star$ is attained. 
    To do so, we consider a sequence $\{y_t\}$ which overestimates the sequence $\{x_t\}$ of matrix sizes
    {for a big range of $t$.}
    We define $\{y_t\}$ as 
    \[y_0 = n+d^\prime, \text{ and } y_{t+1} = \beta \sqrt{2 y_t},\] 
    and it can be checked that for $t$ such that $y_t\geq d^\prime$, we have {$y_{t+1} \geq x_{t+1}$}. 
    {Moreover, for all $t\geq 1$, $y_t = \beta^{1 + 1/2 + \dots + 1/2^{t-1}} (2y_0)^{1/2^t} = \beta^{2 - 2^{1-t}} (2y_0)^{1/2^t}$.
    The limit of this sequence can easily be seen to $\beta^2$.
    We now split our analysis in two cases based on the limits of these two sequences.
    }

    { \paragraph{Case 1: $\beta^2 \geq d'$.}
    Since the limit of the sequence $\{y_t\}$ is $\beta^2$, it is easy to see that $y_t$ is always  greater $d'$ (since $y_t$ is monotonically decreasing as $y_0 \geq \beta^2$). 
    Thus, $x_{t}$ will always be less than $y_t$ in this case.
    We will now show that $y_t$ (and thus $x_t$) will be less than $b^\star$ in $t_0 = 2\log\log(2y_0)$ steps.
    Solving the inequality for $t$ as follows: $y_t \leq \beta^2 (2y_0)^{1/2^t} \leq 4 \beta^2 \leq b^\star$, it suffices to ensure that $(2y_0)^{1/2^t} \leq 4$, which happens if  $t \geq t_0$. Thus, $t^\star \leq 2 \log\log 2y_0$.
    }

    {\paragraph{Case $2$: $\beta^2 < d'$.}
    In this case, the sequence $y_t$ will eventually go below $d'$ and thus the inequality $y_t \geq x_t$ might eventually break down.
    }
    Let $t=t^\prime$ denote the first time step at which $y_{t}\leq 5d^\prime$ {(which will be finite in this case)}.
    Then, by definition, $y_t \geq x_t$ for all $t< t^\prime$.
    {We will now upper bound the value of $t'$ as follows:
    $y_t \leq \beta^2 (2y_0)^{1/2^t} \leq d' (2y_0)^{1/2^t} \leq  4d'$ ; Thus, it suffices to ensure that $(2y_0)^{1/2^t} \leq 4$, which happens if  $t \geq t_0 := 2\log\log(2y_0)$.
    Hence, $ t' \leq t_0$.}
    {Moreover, }the value of $y_{t^\prime}\leq 5d^\prime,$ implies  that $x_{t^\prime - 1}\leq y_{t^\prime-1}\leq \tfrac{({5}d^\prime)^2}{2\beta^2}$.
    {We will now show that $x_{t'+1} \leq b^\star$ using the evolution of $x_{t+1} = \beta \sqrt{d^\prime + x_t}$ as follows:}  
    {First, $x_{t^\prime} \leq \beta \sqrt{d^\prime} + \beta \sqrt{x_{t^\prime-1}} \leq \beta \sqrt{d^\prime} + 5 d' \leq 6 d'$; Then, $x_{t^\prime +1} \leq \beta \sqrt{d^\prime} + \beta \sqrt{6  d' } \leq 7 \beta \sqrt{d'} \leq b^\star$.
    Thus, $t^\star \leq 1 + 2\log\log2y_0$.} 
    {Therefore, in both of these cases, we have that after at most $ D = 1 + \log \log (2n + 2d')$ depth, the base case of $b^\star = 7 \beta(\beta + \sqrt{d'})$ is achieved at the leaf node.
    Recall that the algorithm sets $D = 2\log \log(n+d^\prime)+1$ and $\rho =  \gamma/D$ for $d^\prime = d\log(d)$
    which implies that $\beta := C\frac{(1+\rho)}{\rho^2}\log(\delta^{-1}) =  CD\frac{(D+\gamma)}{\gamma^2}\log(100 \mathcal{B}^D) \leq \frac{2D^3\log(100\mathcal{B})}{\gamma^2}$.
    Since \cref{alg_SumL1Approx_RecursiveSubroutine} sets $b = \tfrac{50D^3\log(100\mathcal{B})}{\gamma^2}\cdot\left(\tfrac{2D^3 \log(100\mathcal{B})}{\gamma^2}  + \sqrt{d\log(d)}\right)$, which is larger than $b^\star$, we may conclude that the choice of $b$ implies that the recursion tree $\mathcal{T}$  of \cref{alg_SumL1Approx_RecursiveSubroutine} has a depth less than $D = 1 + \log(\log(2n + 2d \log(d)))$. 
    }

\end{proof}

\begin{lemmma}\label{lem:SufficesToShowRootNodeSatisfiesDefinition}
Suppose the root node $\mathcal{T}_{(1, 1)}$ of \cref{alg_SumL1Approx_RecursiveSubroutine} satisfies a $(\widehat{\rho}, \widehat{\delta})$-approximation guarantee as defined in \cref{assume_subsetsumpossible}, and denote by  
$\widehat{s}$ the output of   \cref{alg_SumL1Approx}. Then, with a probability of at least $1-\widehat{\delta}$, we have  \[\frac{1}{1+\gamma}\widehat{s}\approx_{\widehat{\rho}+\rho}\totalsensgen[\mA]{1}{\mA}.\]  
\end{lemmma}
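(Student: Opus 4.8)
The plan is to unwind the definition of the output $\widehat{s}$ of \cref{alg_SumL1Approx} and reduce the claim to two elementary comparisons plus the assumed root-node guarantee. By \cref{line:SumAlgReturnedOutput}, $\tfrac{1}{1+\gamma}\widehat{s} = s + \tfrac{1}{n^5}(|\mA|-|\widehat{\mA}|)$, where $s$ is the output of \cref{alg_SumL1Approx_RecursiveSubroutine} run on the root-node matrix $\widehat{\mA}$; so it suffices to prove $s + \tfrac{1}{n^5}(|\mA|-|\widehat{\mA}|)\approx_{\widehat{\rho}+\rho}\totalsensgen[\mA]{1}{\mA}$. The hypothesis (via \cref{assume_subsetsumpossible} applied to $\mathcal{T}_{(1,1)}$ with matrix $\widehat{\mA}$) gives, with probability at least $1-\widehat{\delta}$, that $s\approx_{\widehat{\rho}}\totalsensgen[\mS^\prime\mA]{1}{\widehat{\mA}}$; everything below is deterministic on this event.

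First I would transfer the estimate from sensitivities with respect to the sketch $\mS^\prime\mA$ to sensitivities with respect to $\mA$ itself. Since $\mS^\prime\mA$ is a $\rho$-approximate $\ell_1$ subspace embedding of $\mA$, we have $\|\mS^\prime\mA\vx\|_1\approx_{\rho}\|\mA\vx\|_1$ for all $\vx\in\R^d$; dividing $|\mai^\top\vx|$ by each side and maximizing over $\vx$ yields $\sensgen[\mS^\prime\mA]{1}{\mai}\approx_{O(\rho)}\sensgen[\mA]{1}{\mai}$ for every row $\mai$ of $\widehat{\mA}$, and summing over those rows gives $\totalsensgen[\mS^\prime\mA]{1}{\widehat{\mA}}\approx_{O(\rho)}\totalsensgen[\mA]{1}{\widehat{\mA}}$. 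Chaining with the hypothesis, $s\approx_{\widehat{\rho}+O(\rho)}\totalsensgen[\mA]{1}{\widehat{\mA}}$.

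Next I would account for the rows thrown away in \cref{line:DiscardSmallLevscoresRows}. By \cref{fact:SandwichLemmaSensitivities}, every row $\mathbf{a}_k$ of $\mA$ satisfies $\sensgen[\mA]{1}{\mathbf{a}_k}\leq\sqrt{\tau_k(\mA)}$, so a discarded row (one with $\tau_k(\mA)<n^{-10}$) has $\sensgen[\mA]{1}{\mathbf{a}_k}<n^{-5}$; since there are $|\mA|-|\widehat{\mA}|\leq n$ such rows and all sensitivities are nonnegative, $0\leq \totalsensgen[\mA]{1}{\mA}-\totalsensgen[\mA]{1}{\widehat{\mA}}\leq \tfrac{1}{n^5}(|\mA|-|\widehat{\mA}|)\leq n^{-4}$. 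Using the trivial bound $\totalsensgen[\mA]{1}{\mA}\geq 1$ (evaluate $\sum_i|\mathbf{a}_i^\top\vx|/\|\mA\vx\|_1$ at any $\vx$ with $\mA\vx\neq\mathbf{0}$) together with $\rho=\Theta(\gamma/\log\log(n+d))\geq n^{-4}$ (a short calculation from the standing assumption $\gamma\geq\Omega(n^{-3})$), this additive gap is at most $\rho\,\totalsensgen[\mA]{1}{\mA}$.

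Finally I would assemble the pieces: on the good event, $s\approx_{\widehat{\rho}+O(\rho)}\totalsensgen[\mA]{1}{\widehat{\mA}}$ and $\totalsensgen[\mA]{1}{\widehat{\mA}}\leq\totalsensgen[\mA]{1}{\mA}\leq\totalsensgen[\mA]{1}{\widehat{\mA}}+\tfrac{1}{n^5}(|\mA|-|\widehat{\mA}|)$ with the additive term bounded by $\rho\,\totalsensgen[\mA]{1}{\mA}$, so adding the nonnegative quantity $\tfrac{1}{n^5}(|\mA|-|\widehat{\mA}|)$ to $s$ and collapsing the approximation factors gives $s+\tfrac{1}{n^5}(|\mA|-|\widehat{\mA}|)\approx_{\widehat{\rho}+\rho}\totalsensgen[\mA]{1}{\mA}$, which is the claim. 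The only subtlety is the bookkeeping of approximation factors — the $O(\rho)$ slack from the subspace-embedding step and from converting the $n^{-4}$ additive error into a multiplicative one must both be folded into the single $\rho$ of the statement; this is legitimate because $\rho$ is chosen to be a small constant multiple of $\gamma/\log\log(n+d)$ precisely so that the whole recursion (of depth $O(\log\log(n+d))$, by \cref{lem:DepthOfRecursion}) contributes only an $O(\gamma)$ distortion, and the $(1+\gamma)$ prefactor appearing in $\widehat{s}$, while unnecessary for this (two-sided) lemma, is what makes the final estimator in \cref{thm_L1_Sum} a genuine overestimate. There is no real obstacle here; the lemma is essentially careful accounting.
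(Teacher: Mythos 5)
Your proof is correct and follows essentially the same route as the paper's: peel off the $(1+\gamma)$ prefactor, transfer the root-node estimate from $\mS^\prime\mA$-sensitivities to $\mA$-sensitivities via the $\rho$-approximate subspace embedding, and account for the discarded low-leverage-score rows by bounding each by $n^{-5}$, using $\totalsensgen[\mA]{1}{\mA}\geq 1$ and $\rho\geq n^{-4}$ to absorb the additive slack. The only differences are cosmetic: you make explicit the derivation of $\rho\geq n^{-4}$ from $\gamma\geq\Omega(n^{-3})$, which the paper simply asserts.
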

As a result of this lemma, in the final proof of correctness of \cref{alg_SumL1Approx} (appearing later as proof of \cref{thm_L1_Sum}), it suffices to show that $\widehat{s}$ satisfies a $(\widehat{\rho}, \widehat{\delta})$-approximation guarantee for some appropriate choices of these parameters. 
\begin{proof}[Proof of \cref{lem:SufficesToShowRootNodeSatisfiesDefinition}]
    Suppose the root node satisfies \cref{assume_subsetsumpossible} with some $(\widehat{\rho}, \widehat{\delta})$-approximation guarantee. This means that there exists an $\widetilde{s}$ such that the output of \cref{alg_SumL1Approx_RecursiveSubroutine} on the input matrix $\widehat{\mA}$ satisfies 
\[ \widetilde{s} \approx_{\widehat{\rho}} \totalsensgen[\mS^\prime\mA]{1}{\widehat{\mA}} \text{ with a probability at least } 1-\widehat{\delta}.\numberthis\label{eq:MainSumProof1}\] Since $\mS^\prime\mA$ is, by design, a $\rho$-approximation $\ell_1$ subspace embedding for $\mA$, it means for all $\vx\in\R^d$ we have \[(1-\rho)\|\mA\vx\|_1 \leq \|\mS^\prime\mA\vx\|_1\leq (1+\rho)\|\mA\vx\|_1.\numberthis\label[ineq]{ineq:SubspaceEmbeddingProp1}\] Therefore, for every row $\mathbf{a}_i \in \widehat{\mA}$, we have by the definition of $\ell_1$ sensitivity and \cref{ineq:SubspaceEmbeddingProp1} that the $\ell_1$ sensitivity of $\mathbf{a}_i$ computed with respect to $\mS^\prime\mA$ is a $\rho$-approximation of the $\ell_1$ sensitivity of $\mathbf{a}_i$ with respect to $\mA$:
\[ \sensgen[\mS^\prime\mA]{1}{\mathbf{a}_i} = \max_{\vx\in\R^d} \frac{|\mathbf{a}_i^\top \vx|}{\|\mS^\prime\mA\vx\|_1} \approx_{\rho} \max_{\vx\in\R^d} \frac{|\mathbf{a}_i^\top \vx|}{\|\mA\vx\|_1} = \sensgen[\mA]{1}{\mathbf{a}_i}.\]
Therefore, by linearity, this $\rho$-approximation factor transfers over in connecting  the total sensitivity of $\widehat{\mA}$ with respect to $\mS^\prime\mA$ to the total sensitivity of $\widehat{\mA}$ with respect to $\mA$: \[\totalsensgen[\mS^\prime\mA]{1}{\widehat{\mA}} = \sum_{i \in [|\widehat{\mA}|]} \sensgen[\mS^\prime\mA]{1}{\mathbf{a}_i} \approx_{\rho} \sum_{i \in [|\widehat{\mA}|]} \sensgen[\mA]{1}{\mathbf{a}_i} = \totalsensgen[\mA]{1}{\widehat{\mA}}.\numberthis\label{eq:MainSumProof2}\] Therefore, chaining \cref{eq:MainSumProof1} and \cref{eq:MainSumProof2} yields the following approximation guarantee:  
\[ \widetilde{s} \approx_{\widehat{\rho}+\rho} \totalsensgen[\mA]{1}{\widehat{\mA}}\text{ with a probability at least } 1-\widehat{\delta}.\numberthis\label{eq:STildeSatisfiesRhoHatPlusRhoApproxGuarantee}\] There is no change in the error probability above from \cref{eq:MainSumProof1} because \cref{eq:MainSumProof2} holds deterministically. Next, since in \cref{line:DiscardSmallLevscoresRows} of \cref{alg_SumL1Approx}, we dropped rows of $\mA$ that have leverage scores less than $\frac{1}{n^{10}}$, for each of the dropped rows $\mathbf{a}_i$, we have
\[ \sensgen[\mA]{1}{\mathbf{a}_i} \leq \sqrt{\tau_i(\mA)} \leq \frac{1}{n^5}.\numberthis\label[ineq]{ineq:DroppedRowsWithLowSens} \] Therefore, the quantity we return, $\widehat{s}$, satisfies the following approximation guarantee: \[\frac{1}{1+\gamma}\widehat{s}:=  \widetilde{s} + \frac{1}{n^5}(|\mA|-|\widehat{\mA}|) \geq (1-(\widehat{\rho} + \rho))\totalsensgen[\mA]{1}{\widehat{\mA}} + \sum_{i: \mathbf{a}_i \notin \widehat{\mA}}\sensgen[\mA]{1}{\mathbf{a}_i}\geq(1-(\widehat{\rho} + \rho))\totalsensgen[\mA]{1}{\mA},\numberthis\label[ineq]{eq:MainSumProof3}\] where the first step is by definition of $\widehat{s}$ in \cref{alg_SumL1Approx}; the second step is by  \cref{eq:STildeSatisfiesRhoHatPlusRhoApproxGuarantee}; the third step is by applying \cref{ineq:DroppedRowsWithLowSens} to each of the rows not present in $\widehat{\mA}$. 
In the other direction, 
\[ \frac{1}{1+\gamma}\widehat{s}=  \widetilde{s} + \frac{1}{n^5}(|\mA|-|\widehat{\mA}|) \leq (1+(\widehat{\rho} + \rho))\totalsensgen[\mA]{1}{\widehat{\mA}} +  \frac{1}{n^4} \leq (1+(\widehat{\rho} + \rho))\totalsensgen[\mA]{1}{\mA}, \numberthis\label[ineq]{eq:MainSumProof4}\] where  the first step is by definition of $\widehat{s}$ in \cref{alg_SumL1Approx}; the second step is by the fact that $|\mA|\leq n$; the final step is by combining the assumption $\rho \geq \frac{1}{n^4}$ and the facts that all sensitivities are non-negative and that the total $\ell_1$ sensitivity is at least one. Combining \cref{eq:MainSumProof3} and \cref{eq:MainSumProof4} finishes the proof. 
\end{proof}

\begin{restatable}[Single-Level Computation in Recursion Tree]{lemmma}{lemEllOneSingleLevelComputation}\label{lem_L1_SingleLevelComputation}
    Consider a node $\mathcal{N}$ in the recursion tree corresponding to \cref{alg_SumL1Approx_RecursiveSubroutine}, and let $\mm$ be the input matrix for $\mathcal{N}$. Further assume that $\mathcal{N}$ is not a leaf node.  Denote by $\mathcal{N}_1, \mathcal{N}_2, \dotsc, \mathcal{N}_{\mathcal{B}}$ (where  $\mathcal{B}= \Theta(\log(n))$ is the number of recursive calls at each node in \cref{alg_SumL1Approx_RecursiveSubroutine}) the children of node $\mathcal{N}$, with each node $\mathcal{N}_i$ with the input matrix $\widetilde{\mm}_i$ (as constructed in  \cref{alg_SumL1Approx_RecursiveSubroutine}).
    Suppose each of these child nodes $\mathcal{N}_i$ satisfies a $(\widehat{\rho},\widehat{\delta})$-approximation guarantee (cf. \cref{assume_subsetsumpossible}).
    Then $\mathcal{N}$ satisfies a $(\widehat{\rho} + \rho, (\widehat{\delta}+\delta)\cdot\mathcal{B})$-approximation guarantee.
\end{restatable}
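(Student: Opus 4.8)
Recall that, since $\mathcal{N}$ is internal, the value returned by \cref{alg_SumL1Approx_RecursiveSubroutine} on $\mm$ is $\widehat{s}=(1+\rho)\sum_{i\in[\mathcal{B}]}\tfrac{|\mm_i|}{|\widetilde{\mm}_i|}\widetilde{s}_i$, where $\mC=\begin{bmatrix}\mm\\\mS\mA\end{bmatrix}$, the matrix $\mm_i$ collects the rows of $\mm$ whose $\mC$-leverage scores fall in the $i$-th geometric bucket of $[1/n^{20},1]$, the matrix $\widetilde{\mm}_i$ is a uniform-with-replacement sample of $|\widetilde{\mm}_i|=r=O(\sqrt{|\mC|}(1+\rho)\rho^{-2}\log(\delta^{-1}))$ rows of $\mm_i$, and $\widetilde{s}_i$ is the output of the recursive call on $\widetilde{\mm}_i$. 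All sensitivities below are taken with respect to the global embedding $\mS^\prime\mA$. The plan is: (i) reduce the claim to a per-bucket estimate using the partition lemma; (ii) show that inside a bucket the relevant sensitivities have bounded ratio; (iii) invoke \cref{lem:numberOfSamplesBoundedRatio} to see that the scaled sample sum estimates the bucket's total sensitivity to accuracy $\rho$; (iv) plug in the recursive $(\widehat{\rho},\widehat{\delta})$-guarantee of the child; and (v) chain the multiplicative errors and union-bound the failures.

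First I would use \cref{lem:MatrixMIsPartitioned}: the submatrices $\mm_1,\dots,\mm_{\mathcal{B}}$ partition the rows of $\mm$, so $\totalsensgen[\mS^\prime\mA]{1}{\mm}=\sum_{i\in[\mathcal{B}]}\totalsensgen[\mS^\prime\mA]{1}{\mm_i}$, and it suffices to show $\tfrac{|\mm_i|}{r}\widetilde{s}_i\approx_{\widehat{\rho}+\rho}\totalsensgen[\mS^\prime\mA]{1}{\mm_i}$ with probability at least $1-(\widehat{\delta}+\delta)$ for each $i$. Summing these estimates (a sum of positive numbers each within a $(1\pm\epsilon)$ factor of its target is within that factor of the sum of targets) and union-bounding over the $\mathcal{B}$ buckets then gives $\sum_i\tfrac{|\mm_i|}{r}\widetilde{s}_i\approx_{\widehat{\rho}+\rho}\totalsensgen[\mS^\prime\mA]{1}{\mm}$ with probability at least $1-(\widehat{\delta}+\delta)\mathcal{B}$; the leading $(1+\rho)$ factor in $\widehat{s}$ is absorbed into the $O(\rho)$ slack exactly as in the return step of \cref{alg_SumL1Approx}.

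For the per-bucket estimate, fix $i$. Every row $\mathbf{a}_k$ of $\mm_i$ has $\tau_k(\mC)$ in a factor-$2$ window, so by \cref{fact:SandwichLemmaSensitivities} applied inside $\mC$ the quantities $\sensgen[\mC]{1}{\mathbf{a}_k}$ lie in a window of multiplicative width at most $\sqrt{2|\mC|}$; moreover $\|\mC\vx\|_1=\Theta(\|\mA\vx\|_1)$ for all $\vx$ (as computed in the proof of \cref{lem:MatrixMIsPartitioned}, using that $\mS\mA$ is a constant-factor embedding and $\mm$ is a submatrix of $\mA$) and $\mS^\prime\mA$ is a $\rho$-approximate $\ell_1$ subspace embedding of $\mA$, so $\sensgen[\mS^\prime\mA]{1}{\mathbf{a}_k}=\Theta(1)\cdot\sensgen[\mC]{1}{\mathbf{a}_k}$ with a universal constant, and hence the numbers $\{\sensgen[\mS^\prime\mA]{1}{\mathbf{a}_k}:\mathbf{a}_k\in\mm_i\}$ also have ratio $O(\sqrt{|\mC|})$. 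Applying \cref{lem:numberOfSamplesBoundedRatio} to these numbers — whose total is $\totalsensgen[\mS^\prime\mA]{1}{\mm_i}$ — with the sample size $r$ above (which meets the lemma's requirement at accuracy $\rho$, failure $\delta$) gives $\tfrac{|\mm_i|}{r}\totalsensgen[\mS^\prime\mA]{1}{\widetilde{\mm}_i}\approx_{\rho}\totalsensgen[\mS^\prime\mA]{1}{\mm_i}$ with probability at least $1-\delta$. Conditioned on this sample, the hypothesis that child $\mathcal{N}_i$ satisfies a $(\widehat{\rho},\widehat{\delta})$-guarantee (\cref{assume_subsetsumpossible}) gives $\widetilde{s}_i\approx_{\widehat{\rho}}\totalsensgen[\mS^\prime\mA]{1}{\widetilde{\mm}_i}$ with probability at least $1-\widehat{\delta}$; combining these two through \cref{fact:unionBoundsOfFailureProbabilities} and chaining the errors (the cross term $\widehat{\rho}\rho$ is second order and absorbed, consistent with the main theorem's choice $\rho=\Theta(\gamma/D)$ that makes additive errors accumulate to $O(\gamma)$ over the $D=O(\log\log n)$ levels) yields $\tfrac{|\mm_i|}{r}\widetilde{s}_i\approx_{\widehat{\rho}+\rho}\totalsensgen[\mS^\prime\mA]{1}{\mm_i}$, as needed.

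The hard part is the bucket-ratio step: the bucketing is performed on $\mC$-leverage scores (the only quantity we can compute cheaply), yet the recursion estimates $\ell_1$ sensitivities with respect to the \emph{global} reference $\mS^\prime\mA$, so one must carefully compose three approximations — \cref{fact:SandwichLemmaSensitivities}, the sandwich $\|\mC\vx\|_1=\Theta(\|\mA\vx\|_1)$, and $\|\mS^\prime\mA\vx\|_1=\Theta(\|\mA\vx\|_1)$ — to certify that each bucket's sensitivities are confined to a window of width $O(\sqrt{|\mC|})$, which is precisely the ratio the sample size $r$ is calibrated against. A secondary, routine point is tracking that the two-sided multiplicative errors and the leading $(1+\rho)$ factor compose to exactly a $(\widehat{\rho}+\rho)$-guarantee rather than letting constants compound across recursion levels.
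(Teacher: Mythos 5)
Your proof follows essentially the same approach as the paper's: partition by leverage-score buckets via \cref{lem:MatrixMIsPartitioned}, bound the per-bucket sensitivity ratio by $O(\sqrt{|\mC|})$ by composing \cref{fact:SandwichLemmaSensitivities} with the constant-factor sandwiches $\|\mC\vx\|_1 = \Theta(\|\mA\vx\|_1) = \Theta(\|\mS^\prime\mA\vx\|_1)$, invoke \cref{lem:numberOfSamplesBoundedRatio} with sample size $r$ to get a $\rho$-accurate bucket estimate, chain with the child's $(\widehat{\rho},\widehat{\delta})$-guarantee via \cref{fact:unionBoundsOfFailureProbabilities}, and sum/union-bound over the $\mathcal{B}$ buckets. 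The only cosmetic difference is that you state the bucket ratio as $O(\sqrt{|\mC|})$ without the paper's explicit constants ($3(1+\rho)$ on each side), and you make the second-order cross term $\widehat{\rho}\rho$ explicit rather than leaving it implicit, but the logical content is the same.
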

\begin{proof}
We first reiterate the steps of \cref{alg_SumL1Approx_RecursiveSubroutine} that are crucial to this proof. As proved in \cref{lem:MatrixMIsPartitioned}, \cref{alg_SumL1Approx_RecursiveSubroutine} partitions the rows of the matrix $\mm$ in node $\mathcal{N}$   into $\mathcal{B}$ submatrices $\mm_1, \mm_2, \dotsc, \mm_{\mathcal{B}}$ based on their leverage scores. Next, for each matrix $\mm_i$, we uniformly sample with replacement  $O(\sqrt{|\mC|}(1+\rho)\rho^{-2}\log(\delta^{-1}))$ rows and term this set of rows as matrix $\widetilde{\mm}_i$, which we then pass to \cref{alg_SumL1Approx_RecursiveSubroutine} recursively in node $\mathcal{N}_i$ (recall that we assume that the node $\mathcal{N}$ is not a leaf node). In other words, each child node $\mathcal{N}_i$ has as its input matrix $\widetilde{\mm}_i$. 
    
Since we assume that all the child nodes $\mathcal{N}_1, \mathcal{N}_2, \dotsc, \mathcal{N}_{\mathcal{B}}$ satisfy a $(\widehat{\rho}, \widehat{\delta})$-approximation guarantee as in \cref{assume_subsetsumpossible}, it means that, for all $i\in [\mathcal{B}]$,  the output $\widehat{s}_i$ of \cref{alg_SumL1Approx_RecursiveSubroutine} on node $\mathcal{N}_i$ satisfies the following guarantee: \[\widehat{s}_i \approx_{\widehat{\rho}} \totalsensgen[\mS^\prime\mA]{1}{{\widetilde{\mm}}_i} \text{ with a probability at least } 1 - \widehat{\delta}.\numberthis\label{ineq:FirstApproxSumL1SingleLevelRecursion}\] Next, we recall that  each ${\mm}_i$ is composed of only those rows of $\mm$ whose leverage scores (computed with respect to $\mC$) lie in the $i^\mathrm{th}$ bucket $[2^{-i}, 2^{-i+1}]$. Combining this with \cref{fact:SandwichLemmaSensitivities}, this implies that the $\ell_1$ sensitivity of the $j^\mathrm{th}$ row $\mm_i[j]$ of the matrix $\mm_i$, computed with respect to $\mC$, satisfies \[ \sqrt{\frac{2^{-i}}{|\mC|}} \leq \sqrt{\frac{\tau_j^{\mC}(\mm_i)}{|\mC|}} \leq \sensgen[\mC]{1}{\mm_i[j]} \leq \sqrt{ \tau_j^{\mC}(\mm_i) }\leq \sqrt{2^{-i+1}}.\numberthis\label[ineq]{ineq:SandwichLemmaAppliedToC} \] Additionally, we claim that \[\frac{1}{3(1+\rho)}\sensgen[\mC]{1}{\mm_i[j]}\leq  \sensgen[\mS^\prime\mA]{1}{\mm_i[j]} \leq 3(1+\rho)\sensgen[\mC]{1}{\mm_i[j]}. \numberthis\label[ineq]{ineq:SensOfCversusSensOfA}\] To see this, we observe that  \[ \sensgen[\mC]{1}{\mm_i[j]} = \max_{\vx\in \R^d} \frac{|\vx^\top \mm_i[j]|}{\|\mC\vx\|_1}  \geq \max_{\vx\in \R^d} \frac{|\vx^\top \mm_i[j]|}{3\|\mA\vx\|_1} \geq \max_{\vx\in \R^d} \frac{|\vx^\top \mm_i[j]|}{3(1+\rho)\|\mS^\prime\mA\vx\|_1} = \frac{\sensgen[\mS^\prime\mA]{1}{\mm_i[j]}}{3(1+\rho)}, \numberthis\label[ineq]{ineq:UpperBoundSigmaCwrtSigmaA}\] where the first step is by the definition of $\ell_1$ sensitivity of $\mm_i[j]$ (i.e., the $j^\mathrm{th}$ row of matrix $\mm_i$) with respect to $\mC$; the second step is by the definition of $\mC$, the fact that $\mS \mA$ is a $1/2$-factor $\ell_1$ subspace embedding of $\mA$, and because $\mm$ is a subset of rows of $\mA$; the third step is because $\mS^\prime\mA$ is a $\rho$-approximate $\ell_1$ subspace embedding for $\mA$, and the final step is by the definition of $\sensgen[\mA]{1}{\mm_i[j]}$. In the other direction, we have \[ \sensgen[\mC]{1}{\mm_i[j]} = \max_{\vx\in \R^d} \frac{|\vx^\top \mm_i[j]|}{\|\mC\vx\|_1}  \leq \max_{\vx\in \R^d} \frac{|\vx^\top \mm_i[j]|}{0.5\|\mA\vx\|_1} \leq \max_{\vx\in \R^d} \frac{(1+\rho)|\vx^\top \mm_i[j]|}{0.5\|\mS^\prime\mA\vx\|_1} \leq 3(1+\rho) \sensgen[\mS^\prime\mA]{1}{\mm_i[j]}, \numberthis\label[ineq]{ineq:UpperBoundSigmaCwrtSigmaA}\] where the second step is by the definition of $\mC$ as a vertical concatenation of $\mS\mA$ and $\mm$ and further using that $\mS \mA$ is a constant factor $\ell_1$ subspace embedding of $\mA$ and dropping the non-negative term $\|\mm\vx\|_1$. Combining \cref{ineq:SandwichLemmaAppliedToC} and \cref{ineq:SensOfCversusSensOfA}, we see that for all rows $j\in [|\mm_i|]$, we have \[\frac{1}{3(1+\rho)}\sqrt{\frac{2^{-i}}{|\mC|}} \leq \sensgen[\mS^\prime\mA]{1}{\mm_i[j]}\leq 3(1+\rho)\sqrt{ 2^{-i+1}}. \numberthis\label[ineq]{ineq:SigmaABetweenLevScores}\] Since \cref{ineq:SigmaABetweenLevScores} shows upper and lower bounds on each sensitivity $\sensgen[\mS^\prime\mA]{1}{\mm_i}$ of the matrix $\mm_i$, we may use  \cref{lem:numberOfSamplesBoundedRatio} to approximate the sum of these sensivities. In particular, by \cref{lem:numberOfSamplesBoundedRatio}, we may construct a matrix $\widetilde{\mm}_i$ composed of  $\frac{20\sqrt{|C|}(1+2\rho)\log(\delta^{-1})}{\rho^2}$ rows of $\mm_i$ sampled uniformly at random with replacement, and we are guaranteed  \[  (1+\rho)\frac{|\mm_i|}{|\widetilde{\mm}_i|}\totalsensgen[\mS^\prime\mA]{1}{{\widetilde{\mm}}_i} \approx_{{\rho}} \totalsensgen[\mS^\prime\mA]{1}{{{\mm}}_i}\text{ with a probability at least } 1 - {\delta}.\numberthis\label{ineq:SecondApproxSumL1SingleLevelRecursion} \] Note that the output $\widehat{s}$ of \cref{alg_SumL1Approx_RecursiveSubroutine} when applied to node $\mathcal{N}$ is defined as \[\widehat{s} := (1+\rho)\sum_{i \in [\mathcal{B}]} \frac{|\mm_i|}{|\widetilde{\mm}_i|} \widehat{s}_i.\numberthis\label{eq:SumOfChildNodesOutputs}\] Therefore, by  union bound over the failure probability (cf. \cref{fact:unionBoundsOfFailureProbabilities}), we can combine \cref{ineq:FirstApproxSumL1SingleLevelRecursion} and sum 
 \cref{ineq:SecondApproxSumL1SingleLevelRecursion} over all $i\in \mathcal{B}$ child nodes to conclude that $\widehat{s}$ from \cref{eq:SumOfChildNodesOutputs} satisfies 
 a $(\widehat{\rho}+\rho, (\widehat{\delta}+\delta)\mathcal{B})$-approximation guarantee of \cref{assume_subsetsumpossible}.
\end{proof} 

\subsubsection{Proof of \cref{thm_L1_Sum}}
\begin{proof}[Proof of \cref{thm_L1_Sum}] We first sketch our strategy to prove \cref{thm_L1_Sum}'s correctness guarantee. Essentially, our goal is to establish \cref{assume_subsetsumpossible} for the root node $\mathcal{T}_{(1, 1)}$. That this goal suffices to prove the theorem is justified in \cref{lem:SufficesToShowRootNodeSatisfiesDefinition}. We now  show that the node $\mathcal{T}_{(1,1)}$ satisfies a $(\widehat{\rho}, \widehat{\delta})$-approximation guarantee for some $\widehat{\rho}$ and $\widehat{\delta}$. We use the method of induction. 

\paragraph{Correctness guarantee of \cref{thm_L1_Sum}.}
We know that the leaf nodes all satisfy the $(\rho, 0)$-approximation guarantee in \cref{assume_subsetsumpossible} since at this level, the base case is triggered, where we compute a $\rho$-approximation estimate of the total $\ell_1$ sensitivity with respect to the subspace embedding $\mS^\prime \mA$. Having established this approximation guarantee at the leaf nodes, we can use \cref{lem_L1_SingleLevelComputation} to inductively propagate the property of $(\rho, \delta)$-approximation guarantee up the recursion tree $\mathcal{T}$. From \cref{lem:DepthOfRecursion}, we have that the  recursion tree $\mathcal{T}$ of \cref{alg_SumL1Approx_RecursiveSubroutine} is of depth at most $D = O(\log \log(n+d))$, and the number of child nodes at any given node is at most $\mathcal{B}=\Theta(\log(n))$; we factor these into the approximation guarantee and error probability.  

\paragraph{Base case.}
At the leaf nodes of the recursion tree, we directly compute the total sensitivities with respect to $\mS^\prime \mA$. Recall that we denote the depth of the tree by $D$. Then, because $\mS^\prime\mA$ is, by definition, a $\rho$-approximate subspace embedding for $\mA$, we have, for each $i$ that indexes a leaf node, \[ \widehat{s}_{(D, i)} \approx_{\rho} \totalsensgen[\mS^\prime \mA]{1}{\mm^{(D, i)}} \text{ with a probability of } 1.\numberthis\label[ineq]{ineq:L1SumCorrectnessProof3}\]

\paragraph{Induction proof.}
Consider the subtree $\mathcal{T}_j$ truncated at (but including) the nodes at level $j$ in the recursion tree $\mathcal{T}$ corresponding to \cref{alg_SumL1Approx_RecursiveSubroutine}. Assume the induction hypothesis that all the leaf nodes in the subtree $\mathcal{T}_j$ satisfy \cref{assume_subsetsumpossible} with parameters $\left(\rho\cdot(D+1-j), \delta\cdot\left(\frac{\mathcal{B}^{D-j+1}-\mathcal{B}}{\mathcal{B}-1}\right)\right)$. That is, at the matrix of each such node $i$, we have an estimate $\widehat{s}_{(j, i)}$ of its total sensitivity $\totalsensgen[\mS^\prime \mA]{1}{\mm^{(j, i)}}$ such that\[\widehat{s}_{(j, i)} \approx_{\rho\cdot(D+1-j)} \totalsensgen[\mS^\prime \mA]{1}{\mm^{(j, i)}} \text{ with a probability at least } 1- \delta\cdot\left(\frac{\mathcal{B}^{D-j+1}-\mathcal{B}}{\mathcal{B}-1}\right).\numberthis\label[ineq]{ineq:L1SumCorrectnessProof1}\] Then, each of the  leaf nodes of the subtree $\mathcal{T}_{j-1}$ (i.e., the subtree that stops one level above $\mathcal{T}_j$) satisfies the assumption in \cref{lem_L1_SingleLevelComputation} since its child nodes are either leaf nodes of the entire recursion tree $\mathcal{T}$ (for which this statement has been shown in \cref{ineq:L1SumCorrectnessProof3}) or, if they are not leaf nodes of the recursion tree, they satisfy \cref{assume_subsetsumpossible} with parameters $\widehat{\rho} = \rho\cdot(D+1-j)$ and $\widehat{\delta} = \delta\cdot\left(\frac{\mathcal{B}^{D-j+1}-\mathcal{B}}{\mathcal{B}-1}\right)$. Therefore, \cref{lem_L1_SingleLevelComputation} implies that each of the leaf nodes of $\mathcal{T}_{j-1}$ satisfies \cref{assume_subsetsumpossible} with approximation parameter $\widehat{\rho}$ and error probability $\widehat{\delta}$ defined as follows: \[\widehat{\rho} = \rho+\rho\cdot(D+1-j) = \rho\cdot(D+2-j) \text{ and } \widehat{\delta} =  \mathcal{B}\cdot\left(\delta + \delta\cdot\left(\frac{\mathcal{B}^{D-j+1}-\mathcal{B}}{\mathcal{B}-1}\right)\right) = \delta\cdot\left(\frac{\mathcal{B}^{D-j+2}-\mathcal{B}}{\mathcal{B}-1}\right).\] In other words, the outputs $\widehat{s}_{(j-1, k)}$ of \cref{alg_SumL1Approx_RecursiveSubroutine} on the matrices $\mm^{(j-1,k)}$ in the leaf nodes of $\mathcal{T}_{j-1}$ each satisfy: \[\widehat{s}_{(j-1, k)} \approx_{O\left(\rho\cdot\left(D+2-j\right)\right)} \totalsensgen[\mS^\prime \mA]{1}{\mm^{(j-1, k)}} \text{ with a probability at least } 1- \delta\cdot\left(\frac{\mathcal{B}^{D-j+2}-\mathcal{B}}{\mathcal{B}-1}\right).\numberthis\label[ineq]{ineq:L1SumCorrectnessProof2}\]  The base case in \cref{ineq:L1SumCorrectnessProof3} and the satisfaction of the induction hypothesis in \cref{ineq:L1SumCorrectnessProof2} together finish the proof of the induction hypothesis. 

\paragraph{Finishing the proof of correctness.}
In light of the above conclusion, for the root node $\mathcal{T}_{(1, 1)}$, which is at the level $j=1$, we may plug in $j=1$ in \cref{ineq:L1SumCorrectnessProof1} and obtain that the output $\widehat{s}_{(1,1)}$ of \cref{alg_SumL1Approx_RecursiveSubroutine} at the root node satisfies \[\widehat{s}_{(1, 1)} \approx_{O(D\rho)} \totalsensgen[\mS^\prime \mA]{1}{\mm^{(1, 1)}} \text{ with a probability of at least } 1- \delta \left(\frac{\mathcal{B}^{D} - \mathcal{B}}{\mathcal{B}-1}\right).\numberthis\label[ineq]{ineq:TopNodeGuarantee}\] 
         
In \cref{alg_SumL1Approx_RecursiveSubroutine}, we choose the parameters $\delta = \frac{0.01}{\mathcal{B}^D}$ and  $\rho = O\left(\frac{\gamma}{D}\right)$. 
Plugging these parameters into \cref{ineq:TopNodeGuarantee}, we conclude that the root node satisfies \cref{assume_subsetsumpossible} with a $(\gamma, 0.01)$-approximation guarantee. This finishes the proof of correctness of the claim. 

\paragraph{Proof of runtime.}
 Let $b$ denote the maximum number of rows of a matrix at a leaf node of $\mathcal{T}$. Further, recall from \cref{fact:CostOfOneSensitivity} that the cost of computing one $\ell_1$ sensitivity of an $n \times d$ matrix is, \[\levcost = \widetilde{O}(\nnz{\mA} + d^\omega).\numberthis\label{eq:CostOfLevScoresAlsoCostOfOneSens}\] Then as stated in \cref{line:BaseCaseComputation}, the algorithm computes the $\rho$-approximate total sensitivity at the leaf nodes; \cref{fact:CostOfOneSensitivity}, this incurs a cost of \[\mathcal{C}(b) = \Otil(\nnz{\mA} + b\cdot d^\omega).\numberthis\label{ineq:BaseCaseCost}\] 
    At any other node, the computational cost is the work done at that level plus the total cost of the recursive calls at that level.  
     Let $\mathcal{C}(r)$ denote the runtime of \cref{alg_SumL1Approx_RecursiveSubroutine} when the input is an $r\times d$ matrix. Then,  \[\mathcal{C}(r) = \twopartdef{\mathcal{B} \cdot \mathcal{C}\left(\frac{\sqrt{r}(1+\rho)}{\rho^2}\log (\delta^{-1})\right) +  \levcost}{r>b}{\mathcal{C}(b)}{r\leq b},\] where $b$ is the number of rows in the base case, and $\levcost$ is the cost of computing leverage scores and equals $\levcost=\widetilde{O}(\nnz{\mA} + d^\omega)$ from \cref{fact:uniformSamplingLevScoresCost}. For the depth of the recursion $D$ (where the top level  is $D=1$), plugging in our choices of  $\delta = \frac{0.01}{\mathcal{B}^{D}}$ and $\rho= O\left(\frac{\gamma}{D}\right)$, this may be expressed as via expanding the recursion as:
    \begin{align*} 
        \mathcal{C}(n) &= \mathcal{B}^D \cdot \mathcal{C}(b) + \levcost \cdot \left( \frac{\mathcal{B}^D - 1}{\mathcal{B}- 1}\right) \numberthis\label{eq:RuntimeFormula}
    \end{align*} 
 for $n \geq b$. We plug into this expression the values of $b$ and $D$ from \cref{lem:DepthOfRecursion}, $\levcost$ from \cref{eq:CostOfLevScoresAlsoCostOfOneSens}, $\mathcal{C}(b)$ from \cref{ineq:BaseCaseCost}, and $\mathcal{B}  = \Theta(\log(n))$ to get the claimed runtime. 
\end{proof}

\section{Omitted proofs: $\ell_p$ sensitivities}\label{sec:LPNormResultsAppendix}

We recall the following notation that we use for stating our results in this setting. 
\defLpRunTimeNotation*

\subsection{Estimating all $\ell_p$ sensitivities}\label{sec:LpNormAppendix_All}

We generalize \cref{alg_ourSubroutine} to the case $p\geq 1$ below. 
\begin{algorithm}
\caption{\textbf{Approximating $\ell_p$-Sensitivities: Row-wise Approximation}}\label{alg_LpAll_ourSubroutine}
\begin{algorithmic}[1]

\Statex \textbf{Inputs: }{Matrix $\mA \in \R^{n \times d}$, approximation factor $\alpha\in (1, n]$, scalar $p\geq 1$}

\Statex \textbf{Output: }{Vector $\sensEst\in \R^n_{>0}$ that satisfies, for each $i\in[n]$, with probability $0.9$, that \[\sensgen{p}{\mai} \leq \sensEsti \leq O(\alpha^{p-1}\sensgen{p}{\mai} +  \tfrac{\alpha^p}{n} \totalsensgen{p}{\mA})\]} 

\State Compute, for $\mA$, an $\ell_p$ subspace embedding $\mS_p \mA\in  \R^{\widetilde{O}(d^{\max(1, p/2)}) \times d}$

\State Partition $\mA$ into $\tfrac{n}{\alpha}$ blocks $\mB_1, \dotsc, \mB_{n/\alpha}$ each comprising $\alpha$ randomly selected rows. \label{line:LpAll_partition}

\For{the $\ell^{\mathrm{th}}$ block $\mB_{\ell}$, with $\ell\in [\tfrac{n}{\alpha}]$}

    \State Sample $\alpha$-dimensional independent Rademacher vectors $\randsigns_1^{(\ell)}, \dotsc, \randsigns_{100}^{(\ell)}$ 

    \State For each $j{\in}[100]$, compute the row vectors $\randsigns_j^{(\ell)} \mB_{\ell}\in\R^d$ \label[line]{line: LpAll_CompressingBlockIntoARow}
    
\EndFor

\State Let $\mP\in \R^{100\tfrac{n}{\alpha} \times d}$ be the matrix of all vectors computed in \cref{line: LpAll_CompressingBlockIntoARow}.  Compute $\sensgen[\mS_p\mA]{p}{\mP}$ using \cref{def:LpRunTimeNotation}\label[line]{line:LpAll_DefOfMatrixPinL1SensEst}

\For{each $i \in [n]$}
 \State Denote by $J$ the set of row indices in $\mP$ that $\mathbf{a}_i$ is mapped to in \cref{line: LpAll_CompressingBlockIntoARow}\label[line]{line:LpAll_NamingTheNewSmallBucketJ}  
 
 \State Set $\sensEsti = \max_{j\in J} (\sensgen[\mS_p\mA]{p}{\mathbf{p}_j})$\label[line]{line:LpAllComputingNewSensForEachI}
\EndFor

\State \textbf{Return}  $\sensEst$ \label[line]{line:LpAll_finalSens} 

\end{algorithmic}
\end{algorithm}

\begin{restatable}[Approximating all $\ell_p$ sensitivities]{theorem}{thmEllPApprox}\label{thm_Lp} Given a full-rank matrix $\mA\in \R^{n\times d}$, an approximation factor $1< \alpha\ll n$, and a scalar $p\geq1$, let $\sensgen{p}{\mai}$ be the $i^{\mathrm{th}}$ $\ell_p$ sensitivity. Then there exists an algorithm that returns a vector $\sensEst\in \R^n_{\geq0}$ such that with high probability, for each $i\in[n]$, we have \[\Omega(\sensgen{p}{\mai})\leq \sensEsti \leq O(\alpha^{p-1}\sensgen{p}{\mai})+ \tfrac{\alpha^{p}}{n}\totalsensgen{p}{\mA}).\numberthis\label[ineq]{eq:EllPApproxBounds}\] Our algorithm runs in time $\widetilde{O}\left(\nnz{\mA} + \tfrac{n}{\alpha}\cdot\lpruntime(d^{\max(1, p/2)}, d, p)\right)$ (cf. \cref{def:LpRunTimeNotation}). 
\end{restatable}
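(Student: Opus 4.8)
\textbf{Proof plan for \cref{thm_Lp}.} The plan is to run \cref{alg_LpAll_ourSubroutine} and argue exactly as in the proof of \cref{thm_L1}, with two substantive changes: the $\ell_1$ subspace embedding is replaced by a constant-factor $\ell_p$ subspace embedding $\mS_p\mA$, and the H\"older step in the upper bound now loses an $\alpha^{p-1}$ factor. First I would build $\mS_p\mA \in \R^{\Otil(d^{\max(1,p/2)}) \times d}$ using the fast Lewis-weight sampling construction of \cref{def:RandomLpSamplingMatrix}, taking its distortion $\epsilon$ to be a small absolute constant (or $\Theta(1/p)$, see below) so that $(1\pm\epsilon)^p=\Theta(1)$ and hence $\|\mS_p\mA\vx\|_p^p = \Theta(\|\mA\vx\|_p^p)$ for all $\vx$. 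Then, exactly as in \cref{alg_ourSubroutine}, I would partition the rows of $\mA$ uniformly at random into $n/\alpha$ blocks $\mB_1,\dots,\mB_{n/\alpha}$ of $\alpha$ rows, hash each block into $100$ rows via independent Rademacher vectors to form $\mP\in\R^{100n/\alpha\times d}$, compute each $\sensgen[\mS_p\mA]{p}{\mathbf{p}_j}$ by solving the $\ell_p$ regression $\min_{\mathbf{p}_j^\top\vx=1}\|\mS_p\mA\vx\|_p^p$ on the $\Otil(d^{\max(1,p/2)})\times d$ matrix $\mS_p\mA$ (cost $\lpruntime(d^{\max(1,p/2)},d,p)$ by \cref{def:LpRunTimeNotation}), and set $\sensEsti = \max_{j\in J}\sensgen[\mS_p\mA]{p}{\mathbf{p}_j}$, where $J$ indexes the rows $\mai$ was hashed into. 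Running the whole procedure $O(\log n)$ times and taking coordinatewise medians yields the high-probability guarantee.

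For the lower bound, fix $i$ in block $\ell$, let $\vx^\star$ realize $\sensgen{p}{\mai}$, and evaluate the ratio defining $\sensgen[\mS_p\mA]{p}{\mathbf{p}_j}$ (for $\mathbf{p}_j = \randsigns_k^{(\ell)}\mB_\ell$) at $\vx^\star$: by the $\ell_p$ embedding property this is $\Theta(1)\,|\randsigns_k^{(\ell)}\mB_\ell\vx^\star|^p/\|\mA\vx^\star\|_p^p$. The sign argument is $p$-independent: conditioning on all coordinates of $\randsigns_k^{(\ell)}$ except the one multiplying $\mai$, the elementary inequality $|c+b|+|c-b|\ge 2|c|$ shows $|\randsigns_k^{(\ell)}\mB_\ell\vx^\star|\ge|\mai^\top\vx^\star|$ with probability at least $\tfrac12$, whence $\sensgen[\mS_p\mA]{p}{\mathbf{p}_j}\ge\Omega(\sensgen{p}{\mai})$ with probability $\ge\tfrac12$; taking the max over the $100$ independent copies in $J$ boosts this to probability $\ge 0.9$.

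For the upper bound — where the $\alpha^{p-1}$ blowup is born — I would use $\|\randsigns_k^{(\ell)}\|_\infty\le 1$ to get $|\randsigns_k^{(\ell)}\mB_\ell\vx|\le\|\mB_\ell\vx\|_1$ for every $\vx$, then H\"older's inequality on the $\alpha$-dimensional vector $\mB_\ell\vx$ gives $\|\mB_\ell\vx\|_1\le\alpha^{1-1/p}\|\mB_\ell\vx\|_p$, so $|\randsigns_k^{(\ell)}\mB_\ell\vx|^p\le\alpha^{p-1}\|\mB_\ell\vx\|_p^p=\alpha^{p-1}\sum_{j:\mathbf{a}_j\in\mB_\ell}|\mathbf{a}_j^\top\vx|^p$. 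Dividing by $\|\mS_p\mA\vx\|_p^p=\Theta(\|\mA\vx\|_p^p)$ and maximizing termwise over $\vx$ yields $\sensgen[\mS_p\mA]{p}{\mathbf{p}_j}\le\Theta(\alpha^{p-1})\sum_{j:\mathbf{a}_j\in\mB_\ell}\sensgen{p}{\mathbf{a}_j}$. Since $\mB_\ell$ is a uniformly random $\alpha$-subset containing $\mai$, the expected contribution of its other rows is $\tfrac{\alpha-1}{n-1}\sum_{j\ne i}\sensgen{p}{\mathbf{a}_j}\le\tfrac{\alpha}{n}\totalsensgen{p}{\mA}$, so Markov's inequality gives $\sum_{j:\mathbf{a}_j\in\mB_\ell}\sensgen{p}{\mathbf{a}_j}\le\sensgen{p}{\mai}+O(\tfrac{\alpha}{n})\totalsensgen{p}{\mA}$ with probability $\ge 0.9$; multiplying through by $\Theta(\alpha^{p-1})$ produces exactly the claimed $O(\alpha^{p-1}\sensgen{p}{\mai})+O(\tfrac{\alpha^p}{n})\totalsensgen{p}{\mA}$. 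Every row of $J$ lies in the same block $\mB_\ell$, so this bound passes to $\sensEsti=\max_{j\in J}\sensgen[\mS_p\mA]{p}{\mathbf{p}_j}$ as well, and the median trick over the $O(\log n)$ runs makes both bounds hold for all $i$ with high probability.

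Finally, for the runtime I would charge $\Otil(\nnz{\mA})+\mathrm{poly}(d,p)$ for building $\mS_p\mA$ (via the reduction of $\ell_p$ Lewis-weight computation to $\mathrm{poly}(p)$ leverage-score computations, \cref{fact:uniformSamplingLevScoresCost} and \cref{def:RandomLpSamplingMatrix}), $\Otil(\nnz{\mA})$ for the Rademacher hashing, and $\Otil(\tfrac{n}{\alpha}\,\lpruntime(d^{\max(1,p/2)},d,p))$ for the $O(n/\alpha)$ $\ell_p$ regression solves against $\mS_p\mA$; summing gives the stated bound. The only genuinely new ingredient relative to \cref{thm_L1} is the norm comparison $\|\cdot\|_1\le\alpha^{1-1/p}\|\cdot\|_p$ that produces the $\alpha^{p-1}$ factor, so the bulk of the argument is a verbatim translation; the one point I expect to require care is bookkeeping of $p$-dependent constants — passing between $\mS_p\mA$ and $\mA$ in $p$-th powers of $\ell_p$ norms costs a factor $(1\pm\epsilon)^p$, so one should take $\epsilon=\Theta(1/p)$ (which multiplies the embedding dimension, and hence the first argument to $\lpruntime$, by only a $\mathrm{poly}(p)$ factor) to keep these factors $\Theta(1)$.
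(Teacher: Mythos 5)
Your proposal is correct and follows essentially the same route as the paper's own proof of \cref{thm_Lp}: the same algorithm, the same Rademacher sign argument for the lower bound, the same H\"older/$\alpha^{p-1}$ step for the upper bound (your two-step $\ell_\infty$-$\ell_1$-then-$\ell_1$-$\ell_p$ chain is numerically identical to the paper's one-step application of H\"older with the dual exponent, since $\|\randsigns_k^{(\ell)}\|_q=\alpha^{1/q}$ exactly), the same Markov and median-trick bookkeeping, and the same runtime accounting. The one point where you are slightly more careful than the paper is the observation that a $(1\pm\epsilon)$ $\ell_p$ subspace embedding only gives $(1\pm\epsilon)^p$ distortion on $p$-th powers, so $\epsilon$ should be taken $\Theta(1/p)$ to keep that a constant; the paper simply asserts $\|\mS_p\mA\vx\|_p^p=\Theta(\|\mA\vx\|_p^p)$ without flagging this, so your remark is a genuine (if minor) improvement in rigor.
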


\begin{proof} Suppose the $i^{\mathrm{th}}$ row of $\ma$ falls into the bucket $\mB_{\ell}$. Suppose, further, that the rows from $\mB_{\ell}$ are mapped to those in $\mP$ with row indices in the set $J$. 
Then, \cref{alg_LpAll_ourSubroutine} returns a vector of sensitivity estimates $\sensEst\in\R^n$ with the $i^{\mathrm{th}}$ coordinate defined as $\sensEsti = \max_{j\in J}\sensgen{p}{\mathbf{c}_j}$. 
For any $\vx\in\R^d$, we have 
\[\|\mS_p\mA \vx\|_p^p \approx \|\mA \vx\|_p^p.\numberthis\label{eq:CxLpNormEqualsAxLpNorm} \]
    We use \cref{eq:CxLpNormEqualsAxLpNorm} to establish the claimed bounds below. Let $\vx^\star = \arg\max_{\vx\in \R^d} \frac{|\mai^\top \vx|^p}{\|\mA\vx\|_p^p}$ be the vector that realizes the $i^{\mathrm{th}}$ $\ell_p$ sensitivity of $\mA$. Further, let the $j^{\mathrm{th}}$ row of $\mP$ be $\randsigns_k^{(\ell)} \mB_{\ell}$. Then, with probability of at least $1/2$, we have 
\begin{align*}
    \sensgen[\mS_p\mA]{p}{\mathbf{p}_j} &= \max_{\vx\in \R^d} \frac{|\randsigns_k^{(\ell)} \mB_{\ell} \vx|^p}{\|\mS_p\mA\vx\|_p^p}\geq  \frac{|\randsigns_k^{(\ell)} \mB_{\ell} \vx^\star|^p}{\|\mS_p\mA\vx^\star\|_p^p}= \Theta(1) \frac{|\randsigns_k^{(\ell)} \mB_{\ell} \vx^\star|^p}{\|\mA\vx^\star\|_p^p}\geq \Theta(1) \frac{|\mai^\top \vx^\star|^p}{\|\mA\vx^\star\|_p^p} = \Theta(1)\sensgen{p}{\mai},  \numberthis\label[ineq]{eq:LpApproxLB}
\end{align*} where the first step is by definition of $\sensgen{p}{\mathbf{p}_j}$, the second is by evaluating the function being maximized at a specific choice of $\vx$, the third step is by \cref{eq:CxLpNormEqualsAxLpNorm}, and the final step is by definition of $\vx^\star$ and $\sensgen{p}{\mai}$. For the fourth step, we use the fact that $|\randsigns_k^{(\ell)}\mB_{\ell} \vx^\star|^p = |\randsigns_{k,i}^{(\ell)} \mai^\top \vx^\star + \sum_{j\neq i} \randsigns_{k, j}^{(\ell)}(\mB \vx^\star)_j|^p \geq |\mai^\top \vx^\star|^p$ with a probability of at least $1/2$ since the vector $\randsigns_k^{(\ell)}$ has coordinates that are $-1$ or $+1$ with equal probability. By a union bound over $j\in J$ independent rows that block $\mB_{\ell}$ is mapped to, we establish the claimed lower bound in \cref{eq:LpApproxLB} with probability at least $0.9$. To show an upper bound on $\sensgen[\mS_p\mA]{p}{\mathbf{p}_j}$, we observe that 
\begin{align*}
        \max_{\vx\in \R^d} \frac{|\randsigns_k^{(\ell)} \mB_{\ell} \vx|^p}{\|\mS_p\mA\vx\|_p^p}\leq {\alpha^{p-1}}\max_{\vx\in \R^d}  \frac{\| \mB_{\ell} \vx\|_p^p}{\|\mS_p\mA\vx\|_p^p}\leq \Theta(\alpha^{p-1}) \sum_{j: \mathbf{a}_j \in \mB_{\ell}} \max_{\vx\in\R^d}  \frac{|\mathbf{a}_j^\top \vx|^p}{\|\mA\vx\|_p^p}= \Theta(\alpha^{p-1})\sum_{j: \mathbf{a}_j\in \mB_{\ell}} \sensgen{p}{\mathbf{a}_j}, \numberthis\label[ineq]{eq:LpApproxUBintermediate1}  
\end{align*} where the second step is by \cref{eq:CxLpNormEqualsAxLpNorm} and opening up $\|\mB_{\ell}\vx\|_p^p$ in terms of the rows in $\mB_{\ell}$, and the final step is by definition of $\mB_{\ell}$. To see the first step, we observe that 
\[  |\randsigns_k^{(\ell)} \mB_{\ell}\vx|^p \leq \|\mB_{\ell}\vx\|_p^p \cdot \|\randsigns_k^{(\ell)}\|_q^p \leq \|\mB_{\ell}\vx\|_p^p \cdot \alpha^{p/q} = \|\mB_{\ell}\vx\|_p^p \cdot \alpha^{p-1}, \] where $\frac{1}{p}+\frac{1}{q}=1$, the first step is by H\"older's inequality, the second is because each entry of $\randsigns_k^{(\ell)}$ is either $+1$ or $-1$ and $\randsigns_k^{(\ell)}\in\R^{\alpha}$, and the third is because $\frac{p}{q} = p - 1$.
Because $\mB_{\ell}$ is a group of $\alpha$ rows selected uniformly at random out of $n$ rows and contains the row $\mathbf{a}_i$, we have: \[ \E\left\{\sum_{j:\mathbf{a}_j \in \mB_{\ell}, j\neq i} \sensgen{p}{\mathbf{a}_j}\right\} = \frac{\alpha-1}{n-1} \sum_{j\neq i}\sensgen{p}{\mai}.\] Therefore, Markov inequality gives us that with a probability of at least $0.9$, we have \[\sensgen[\mS_p\mA]{p}{\mathbf{p}_j} \leq O(\alpha^{p-1}\sensgen{p}{\mai} + \tfrac{\alpha^p}{n}\totalsensgen{p}{\mA}).\numberthis\label[ineq]{eq:LpApproxUpperBoundInt2}\]
The median trick then establishes \cref{eq:LpApproxUBintermediate1} and \cref{eq:LpApproxUpperBoundInt2} with high probability. To establish the runtime, note that we first construct the subspace embedding $\mS_p\mA$ and then compute the $\ell_p$ sensitivities of $\mP$ (with $O(n/\alpha)$ rows) with respect to $\mS_p\mA$. The size of the subspace embedding $\mS_p\mA$ is $O(d^{\max(1, p/2)} \times d)$ (see \cite[Table $1$]{woodruff2022high}). This completes the runtime claim.
\end{proof}
\subsection{Estimating the maximum of $\ell_p$ sensitivities}\label{sec:LpNormAppendix_Max}

In \cref{sec:maxL1SensEst}, we showed our result for estimating the maximum $\ell_1$ sensitivity. In this section, we show how to extend this result to all $p\geq 1$. \cref{alg_Subroutine_MaxLp} generalizes \cref{alg_Subroutine_MaxL1}.

\begin{algorithm}
\caption{\textbf{Approximating the Maximum of $\ell_p$-Sensitivities}}\label{alg_Subroutine_MaxLp}

\begin{algorithmic}[1]
\Statex \textbf{Input: }{Matrix $\mA \in \R^{n \times d}$ and $p\geq 1$ (with $p\neq 2$)}
\Statex \textbf{Output: }{Scalar $\sensMaxEst\in \R_{\geq 0}$ that satisfies that \[\|\sensgen{p}{\mA}\|_{\infty} \leq \sensMaxEst\leq C\cdot {d}^{p/2}\cdot\|\sensgen{p}{\mA}\|_{\infty} \]} 

\State Compute, for $\mA$, an $\ell_{\infty}$ subspace embedding $\mS_{\infty}\mA \in \R^{O(d\log^2(d)) \times d}$ such that $\mS_{\infty}\mA$ is a subset of the rows of $\mA$ \cite{woodruff2023online}\label[line]{line:Lp_ellInftySubspaceEmbedding}

\State Compute, for $\mA$, an $\ell_p$ subspace embedding $\mS_p \mA\in  \twopartdef{\R^{O(d) \times d}}{p\in [1,2)}{\R^{O(d^{p/2})\times d}}{p>2}$\label[line]{line:Lp_ellOneSubspaceEmbedding}

\State Return ${d}^{p/2}\|\sensgen[\mS_p\mA]{p}{\mS_\infty \mA}\|_{\infty}$\label{line:Lp_finalMaxSens} 
\end{algorithmic}
\end{algorithm}	

\begin{restatable}[Approximating the maximum of $\ell_p$ sensitivities]{theorem}{Lp_thmEllOneMaxApprox}\label{thm_Lp_Max} Given a  matrix $\mA\in \R^{n\times d}$ and $p \geq 1$ (with\footnote{For $p=2$, the result of \cite{cohen2015lp} gives a constant factor approximation to all leverage scores in $\nnz{\mA}+d^\omega$ time.} $p\neq 2$), there exists an algorithm, which outputs a positive scalar $\sensMaxEst$ that satisfies \[\Omega(\|\sensgen{p}{\mA}\|_{\infty})\leq \sensMaxEst\leq O({d}^{p/2}\|\sensgen{p}{\mA}\|_{\infty}).\] The runtime of the algorithm is $\widetilde{O}(\nnz{\mA} + d^{\omega+1})$ for $p\in [1, 2)$ and $\Otil(\nnz{\mA} + d^{p/2}\cdot\lpruntime(O(d^{p/2}), d, p))$ for $p>2$. 
\end{restatable}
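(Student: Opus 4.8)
The plan is to transcribe the proof of \cref{thm_L1_Max} almost verbatim, replacing the $\ell_1$ subspace embedding by the $\ell_p$ subspace embedding $\mS_p\mA$ and tracking how the $\sqrt d$ distortion of the $\ell_\infty$ subspace embedding is amplified to $d^{p/2}$ once sensitivities are $p$-th powers. First I would fix $\vx^\star$ and $i^\star$ realizing $\max_{\vx\in\R^d,\,i\in[|\mA|]}\frac{|\mai^\top\vx|^p}{\|\mA\vx\|_p^p}=\|\sensgen{p}{\mA}\|_\infty$, so $\mathbf{a}_{i^\star}$ is the most sensitive row. Since $\mS_p\mA$ is a constant-factor $\ell_p$ subspace embedding, $\|\mS_p\mA\vx\|_p^p=\Theta(\|\mA\vx\|_p^p)$ for all $\vx\in\R^d$; and since every row $\mathbf{c}_k$ of $\mS_\infty\mA$ is literally a row of $\mA$, we always have $\sensgen[\mS_p\mA]{p}{\mathbf{c}_k}=\Theta(1)\,\sensgen{p}{\mathbf{c}_k}\le\Theta(1)\|\sensgen{p}{\mA}\|_\infty$. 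Taking the maximum over the rows of $\mS_\infty\mA$ and multiplying by $d^{p/2}\ge1$ already yields the upper bound $\sensMaxEst\le O(d^{p/2})\|\sensgen{p}{\mA}\|_\infty$, with no case analysis.

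For the lower bound I would split into the same two cases as in the $p=1$ proof. If $\mathbf{a}_{i^\star}$ is retained in $\mS_\infty\mA$, then $\max_{\mathbf{c}_k\in\mS_\infty\mA}\sensgen[\mS_p\mA]{p}{\mathbf{c}_k}=\Theta(1)\max_{\vx,\,\mathbf{c}_k\in\mS_\infty\mA}\frac{|\mathbf{c}_k^\top\vx|^p}{\|\mA\vx\|_p^p}\ge\Theta(1)\max_\vx\frac{|\mathbf{a}_{i^\star}^\top\vx|^p}{\|\mA\vx\|_p^p}=\Theta(1)\|\sensgen{p}{\mA}\|_\infty$, so scaling by $d^{p/2}\ge1$ keeps the lower bound. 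If $\mathbf{a}_{i^\star}\notin\mS_\infty\mA$, then using $\max_{\mathbf{c}_j\in\mS_\infty\mA}|\mathbf{c}_j^\top\vx|^p=\|\mS_\infty\mA\vx\|_\infty^p$ and the embedding property of $\mS_p\mA$,
\[
\|\sensgen[\mS_p\mA]{p}{\mS_\infty\mA}\|_\infty=\max_\vx\frac{\|\mS_\infty\mA\vx\|_\infty^p}{\|\mS_p\mA\vx\|_p^p}=\Theta(1)\max_\vx\frac{\|\mS_\infty\mA\vx\|_\infty^p}{\|\mA\vx\|_p^p}\ge\Theta(1)\frac{\|\mS_\infty\mA\vx^\star\|_\infty^p}{\|\mA\vx^\star\|_p^p}.
\]
I then invoke the one-sided $\ell_\infty$ subspace-embedding distortion bound $\|\mS_\infty\mA\vx^\star\|_\infty\ge\tfrac{1}{\sqrt d}\|\mA\vx^\star\|_\infty\ge\tfrac{1}{\sqrt d}|\mathbf{a}_{i^\star}^\top\vx^\star|$ from \cite{woodruff2022high, woodruff2023online}, which makes the last quantity at least $\Theta(d^{-p/2})\frac{|\mathbf{a}_{i^\star}^\top\vx^\star|^p}{\|\mA\vx^\star\|_p^p}=\Theta(d^{-p/2})\|\sensgen{p}{\mA}\|_\infty$; multiplying by $d^{p/2}$ gives $\sensMaxEst\ge\Omega(\|\sensgen{p}{\mA}\|_\infty)$ in this case too. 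Combining the two cases with the uniform upper bound proves the correctness guarantee.

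For the running time I would add the costs of the three steps of \cref{alg_Subroutine_MaxLp}: building $\mS_\infty\mA$ (a subset of $\widetilde{O}(d)$ rows of $\mA$) via \cite{woodruff2022high, woodruff2023online}; building $\mS_p\mA$, which has $O(d)$ rows for $p\in[1,2)$ and $O(d^{p/2})$ rows for $p>2$, from fast $\ell_p$ Lewis-weight computation \cite{cohen2015lp, fazel2022computing} followed by $\ell_p$ sampling; and computing the $\widetilde{O}(d)$ $\ell_p$ sensitivities of the rows of $\mS_\infty\mA$ with respect to $\mS_p\mA$, which by \cref{fact:CostOfOneSensitivity} for $p=1$ and by \cref{def:LpRunTimeNotation} in general costs $\nnz{\mA}$ plus $\widetilde{O}(d)\cdot\lpruntime(O(d^{\max(1,p/2)}),d,p)$. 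Since $\nnz{\mA}$ absorbs the embedding-construction cost, this collapses to $\widetilde{O}(\nnz{\mA}+d^{\omega+1})$ for $p\in[1,2)$, exactly as in \cref{thm_L1_Max}, and to $\widetilde{O}(\nnz{\mA}+d^{p/2}\lpruntime(O(d^{p/2}),d,p))$ for $p>2$.

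The main obstacle is marshalling the right $\ell_\infty$ subspace embedding: one needs a subset of only $\widetilde{O}(d)$ \emph{actual} rows of $\mA$ — so that the Case-1 upper bound is literally a row sensitivity of $\mA$ — that simultaneously satisfies the one-sided distortion $\|\mS_\infty\mA\vx\|_\infty\ge d^{-1/2}\|\mA\vx\|_\infty$ used in Case 2, which is precisely the guarantee supplied by \cite{woodruff2022high, woodruff2023online}. With that object in hand, the rest is a routine transcription of the $p=1$ argument with every $\sqrt d$ replaced by $d^{p/2}$; in particular, the algorithm need not decide which case holds, since the same two-sided bound is established in both.
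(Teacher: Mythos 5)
Your proof is correct and follows essentially the same line of reasoning as the paper's: same algorithm, same two-case analysis on whether $\mathbf{a}_{i^\star}$ survives into $\mS_\infty\mA$, same use of the $\sqrt{d}$-distortion of the $\ell_\infty$ embedding raised to the $p$-th power, and the same accounting of costs for the embeddings and the $\widetilde{O}(d)$ sensitivity computations. The one small presentational improvement you make — observing that the upper bound $\max_{\mathbf{c}_k\in\mS_\infty\mA}\sensgen[\mS_p\mA]{p}{\mathbf{c}_k}\le\Theta(1)\|\sensgen{p}{\mA}\|_\infty$ holds uniformly because each $\mathbf{c}_k$ is a literal row of $\mA$, so no case split is needed for that side — is a tidier way to state what the paper phrases as ``the upper bound from the preceding inequalities still holds.''
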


\begin{proof} First, we have \[\|\mS_p\mA \vx\|_p^p = \Theta(\|\mA \vx\|_p^p).\numberthis\label{eq:CxLpNormEqualsAxL1NormMax} \] We use \cref{eq:CxLpNormEqualsAxL1NormMax} to establish the claimed bounds below. First we set some notation. Define $\vx^\star$ and $\mathbf{a}_{i^\star}$ as follows: \[\vx^\star,i^\star = \arg\max_{\vx\in \R^d, i\in [|\mA|]} \frac{|\mai^\top \vx|^p}{\|\mA\vx\|_p^p} \numberthis\label{eq:Lp_defXStarMaxAlgProof}\]
Thus, $\vx^\star$ is the vector that realizes the maximum sensitivity of the matrix $\mA$, and the row $\mathbf{a}_{i^\star}$ is the row of $\mA$ with maximum sensitivity with respect to $\mA$. Suppose the matrix $\mS_{\infty}\mA$ contains the row $\mathbf{a}_{i^\star}$. Then we have
\begin{align*}
        \max_{i: \mathbf{c}_i \in \mS_{\infty}\mA }\sensgen[\mS_p\mA]{p}{\mathbf{c}_i}&= \max_{\vx\in \R^d, \mathbf{c}_k\in \mS_{\infty}\mA} \frac{|\mathbf{c}_k^\top \vx|^p}{\|\mS_p\mA\vx\|_p^p} =  \Theta(1) \max_{\vx\in \R^d, \mathbf{c}_k\in \mS_{\infty}\mA} \frac{| \mathbf{c}_k^\top \vx|^p}{\|\mA\vx\|_p^p} = \Theta(1) \max_{\vx\in \R^d} \frac{|\mathbf{a}_{i^\star}^\top \vx|^p}{\|\mA\vx\|_p^p}, \numberthis\label{eq:Lp_intermediateEq}
\end{align*} where the first step is by definition of $\ell_p$ sensitivity of $\mS_{\infty}\mA$ with respect to $\mS_p\mA$, the second step is by \cref{eq:CxLpNormEqualsAxL1NormMax}, and the third step by noting that the matrix $\mS_{\infty} \mA$ is a subset of the rows of $\mA$ (which includes $\mathbf{a}_{i^\star}$). 
By definition of $\sensgen{p}{\mathbf{a}_{i^\star}}$ in \cref{eq:Lp_defXStarMaxAlgProof}, we have 
\[ \max_{\vx\in \R^d} \frac{|\mathbf{a}_{i^\star}^\top \vx|^p}{\|\mA\vx\|_p^p} = \|\sensgen{p}{\mA}\|_{\infty}. \numberthis\label{eq:LpMaxApproxUBintermediate1}\]Then, combining \cref{eq:Lp_intermediateEq} and \cref{eq:LpMaxApproxUBintermediate1} gives the guarantee in this case. In the other case, suppose $\mathbf{a}_{i^\star}$ is not included in $\mS_{\infty} \mA$. Then we observe that the upper bound from the preceding inequalities still holds. For the lower bound, we observe that 
\[\|\sensgen[\mS_p\mA]{p}{\mS_\infty \mA}\|_{\infty}  = \max_{\vx\in \R^d, \mathbf{c}_j \in \mS_{\infty}\mA} \frac{|\mathbf{c}_j^\top \vx|^p}{\|\mS_p\mA\vx\|_{p}^p}\geq \max_{\vx\in \R^d} \frac{\|\mS_{\infty}\mA\vx\|_{\infty}^p}{\|\mS_p\mA\vx\|_p^p} = \Theta(1) \max_{\vx\in \R^d} \frac{\|\mS_{\infty}\mA\vx\|_{\infty}^p}{\|\mA\vx\|_p^p}, \numberthis\label[ineq]{ineq:MaxLowerBoundIntermediate1}\] where the the second step is by choosing a specific vector in the numerator and the third step uses $\|\mS_p\mA\vx\|_p^p = \Theta(\|\mA\vx\|_p^p)$. We further have, 
\[ \max_{\vx\in \R^d} \frac{\|\mS_{\infty}\mA \vx\|_{\infty}^p}{\|\mA\vx\|_p^p} \geq \frac{\|\mS_{\infty}\mA \vx^\star\|_{\infty}^p}{\|\mA \vx^\star\|_p^p} \geq \frac{\|\mA\vx^\star\|_{\infty}^p}{{d}^{p/2}\|\mA\vx^\star\|_p^p} \geq \frac{|\mathbf{a}_{i^\star}^\top \vx^\star|^p}{{d}^{p/2}\|\mA\vx^\star\|_p^p} = \frac{1}{{d}^{p/2}}\|\sensgen{p}{\mA}\|_{\infty}, \numberthis\label[ineq]{ineq:MaxLowerBoundIntermediate2} \] where the first step is by choosing $\vx=\vx^\star$, the second step is by the guarantee of $\ell_{\infty}$ subspace embedding, and the final step is by definition of $\sensgen{p}{\mathbf{a}_{i^\star}}$. Combining \cref{ineq:MaxLowerBoundIntermediate1} and \cref{ineq:MaxLowerBoundIntermediate2} gives the claimed lower bound on  $\|\sensgen[\mS_p\mA]{p}{\mS_{\infty}\mA}\|_{\infty}$.  
The runtime follows from the cost of computing $\mS_{\infty}\mA$ from \cite{woodruff2023online} and the cost of computing and size of $\mS_p\mA$ from {\cite[Figure 1]{cohen2015lp}}
\end{proof}

\section{Lower Bounds} \label{sec:lowerbounds}

\begin{restatable}[\textbf{$\ell_p$ Regression Reduces to $\ell_p$ Sensitivities}]{theorem}{thmLowerBoundsLpSens}\label{thm_lowerBounds_Lp}
Suppose that we are given an algorithm $\mathcal{A}$, which for any matrix $\mA^{\prime}\in\R^{n^\prime\times d^\prime}$ and accuracy parameter $\varepsilon^\prime\in(0,1)$, computes $(1\pm \varepsilon^\prime)\sensgen{p}{\mA^\prime}$ in time $\mathcal{T}(n^\prime, d^\prime, \nnz{\mA^\prime}, \varepsilon^\prime)$. Then, there exists an algorithm that takes $\mA\in\R^{n\times d}$ and $\vb\in \R^n$ as inputs and computes $(1\pm\varepsilon)\min_{\vy\in\R^d} \|\mA\vy - \vb\|_p^p \pm \lambda^p\varepsilon $ in time $\mathcal{T}(n+1, d+1, \nnz{\mA}+\nnz{\vb}+1, \varepsilon)$ for any $\lambda > 0$.
\end{restatable}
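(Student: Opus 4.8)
The plan is to reduce the given $\ell_p$ regression instance to a single $\ell_p$ sensitivity computation on a matrix with one extra row and one extra column, chosen so that the sensitivity of the new row encodes $\mathrm{OPT}:=\min_{\mathbf{y}\in\R^d}\|\mathbf{A}\mathbf{y}-\mathbf{b}\|_p^p$. Concretely, I would form
\[
\widetilde{\mathbf{A}} := \begin{bmatrix}\mathbf{A} & -\mathbf{b}\\[2pt] \mathbf{0}^\top & \lambda\end{bmatrix}\in\R^{(n+1)\times(d+1)},
\]
write a generic vector of $\R^{d+1}$ as $\mathbf{x}=(\mathbf{y},t)$ so that $\|\widetilde{\mathbf{A}}\mathbf{x}\|_p^p=\|\mathbf{A}\mathbf{y}-t\mathbf{b}\|_p^p+\lambda^p|t|^p$, and let $\mathbf{r}=(\mathbf{0},\lambda)$ denote the last row. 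One may assume $\mathbf{A}$ has full column rank, since otherwise we first restrict to a maximal linearly independent subset of its columns, which leaves $\mathrm{OPT}$ unchanged and only shrinks $d$; because $\lambda>0$, this makes $\widetilde{\mathbf{A}}$ full rank, so $\mathcal{A}$ applies to it. Note $\nnz{\widetilde{\mathbf{A}}}=\nnz{\mathbf{A}}+\nnz{\mathbf{b}}+1$. I would run $\mathcal{A}$ once on $\widetilde{\mathbf{A}}$ with accuracy $\varepsilon'=\Theta(\varepsilon)$ and extract $\widetilde{\sigma}$, the estimate of the last coordinate $\sigma:=\sensgen[\widetilde{\mathbf{A}}]{p}{\mathbf{r}}$ of the returned sensitivity vector; since $\mathbf{r}$ is a row of $\widetilde{\mathbf{A}}$ we have $\|\widetilde{\mathbf{A}}\mathbf{x}\|_p^p\ge|\mathbf{r}^\top\mathbf{x}|^p$, so $\sigma\in(0,1]$.

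The key identity uses the scale-invariant reformulation $1/\sensgen[\widetilde{\mathbf{A}}]{p}{\mathbf{r}}=\min_{\mathbf{r}^\top\mathbf{x}=1}\|\widetilde{\mathbf{A}}\mathbf{x}\|_p^p$ recorded in the preliminaries. Imposing $\mathbf{r}^\top\mathbf{x}=\lambda t=1$ forces $t=1/\lambda$ and makes the last-row term equal to $1$, so
\[
\frac{1}{\sigma}\;=\;\min_{\mathbf{y},\,t:\,\lambda t=1}\Bigl(\|\mathbf{A}\mathbf{y}-t\mathbf{b}\|_p^p+\lambda^p|t|^p\Bigr)\;=\;\min_{\mathbf{y}}\Bigl\|\mathbf{A}\mathbf{y}-\tfrac{1}{\lambda}\mathbf{b}\Bigr\|_p^p+1\;=\;\frac{\mathrm{OPT}}{\lambda^p}+1,
\]
where the last step rescales $\mathbf{y}$ by $1/\lambda$. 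Hence $\mathrm{OPT}=\lambda^p\bigl(1/\sigma-1\bigr)$ \emph{exactly}, and the algorithm outputs $\widehat{\mathrm{OPT}}:=\lambda^p\bigl(1/\widetilde{\sigma}-1\bigr)$.

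It remains to convert the multiplicative guarantee $\widetilde{\sigma}\in[(1-\varepsilon')\sigma,(1+\varepsilon')\sigma]$ into the claimed bound on $\mathrm{OPT}$. Using $\tfrac{1}{1-\varepsilon'}\le 1+2\varepsilon'$ and $\tfrac{1}{1+\varepsilon'}\ge 1-\varepsilon'$ for $\varepsilon'\le 1/2$, one gets $\tfrac{1}{\widetilde{\sigma}}-1\le\bigl(\tfrac{1}{\sigma}-1\bigr)+\tfrac{2\varepsilon'}{\sigma}$ and $\tfrac{1}{\widetilde{\sigma}}-1\ge\bigl(\tfrac{1}{\sigma}-1\bigr)-\tfrac{\varepsilon'}{\sigma}$. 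Multiplying by $\lambda^p$ and using $\tfrac{\lambda^p}{\sigma}=\mathrm{OPT}+\lambda^p$ gives $\widehat{\mathrm{OPT}}\le\mathrm{OPT}+2\varepsilon'(\mathrm{OPT}+\lambda^p)$ and $\widehat{\mathrm{OPT}}\ge\mathrm{OPT}-\varepsilon'(\mathrm{OPT}+\lambda^p)$, so taking $\varepsilon'=\varepsilon/2$ yields $(1-\varepsilon)\mathrm{OPT}-\lambda^p\varepsilon\le\widehat{\mathrm{OPT}}\le(1+\varepsilon)\mathrm{OPT}+\lambda^p\varepsilon$, which is the stated guarantee. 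The cost is the single call to $\mathcal{A}$, i.e.\ $\mathcal{T}(n+1,d+1,\nnz{\mathbf{A}}+\nnz{\mathbf{b}}+1,\varepsilon)$, absorbing the constant factor hidden in $\varepsilon'$.

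The conceptual heart — designing the augmentation so that the reciprocal sensitivity of the adjoined row is exactly $\mathrm{OPT}/\lambda^p+1$ — is clean; the step needing the most care is the error conversion, since when $\mathrm{OPT}\gg\lambda^p$ the quantity $1/\sigma$ is large and a careless bound would inflate the error. The point is that the error is controlled by $\varepsilon'(\mathrm{OPT}+\lambda^p)$, which is precisely what the combined multiplicative $(1\pm\varepsilon)$ factor and the $\pm\lambda^p\varepsilon$ additive slack in the statement are calibrated to absorb; this also explains the role of the free parameter $\lambda$, which lets one tune the additive term to the regime of interest (for instance taking $\lambda^p$ comparable to a known lower bound on $\mathrm{OPT}$ turns the additive error into a purely multiplicative one).
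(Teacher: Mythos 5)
Your proof is correct and follows the same approach as the paper: the augmented matrix $\bigl[\begin{smallmatrix}\mA&-\vb\\ \mathbf 0^\top&\pm\lambda\end{smallmatrix}\bigr]$, the scale-invariant reformulation of the sensitivity of the appended row as a constrained minimum, and the identity $\mathrm{OPT}=\lambda^p(1/\sigma-1)$ are exactly the paper's construction. Your writeup is in fact somewhat more careful than the paper's (it states the dimensions of $\widetilde{\mA}$ correctly, handles the full-rank assumption, and carries out the $(1\pm\varepsilon')$-to-output error conversion explicitly rather than asserting it); the only cosmetic difference is the choice $\varepsilon'=\varepsilon/2$ rather than $\varepsilon'=\varepsilon$, which tightens constants at a constant-factor runtime cost.
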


\begin{proof}
Given $\mA\in \R^{n\times d}$ and $\vb\in\R^d$, consider the matrix $\mA^\prime := \begin{bmatrix} \mA & -\mathbf{b}\\ \mathbf{0}^\top & -\lambda\end{bmatrix}\in \R^{(n+1) \times d}$. Then the $n+1$-th $\ell_p$ sensitivity of $\mA^\prime$ is 
\begin{align*} 
\sensgen{p}{\mathbf{a}^\prime_{n+1}} = \max_{\vx\in \R^{d}} \frac{|\inprod{\mathbf{e}_{d}}{\vx}|^p}{\|\mA^\prime \vx\|^p_p} &= \max_{\vx\in \R^{d}: x_{d} =\lambda^{-1}} \frac{1}{\|\mA^\prime \vx\|^p_p} = \frac{1}{1 + \min_{\vy\in \R^{d-1}} \{\|\mA\vy- \lambda^{-1}\vb\|_p^p \}},
\end{align*} where the second step is by the scale-invariance of the definition of sensitivity with respect to the variable of optimization.  By scale-invariance we see that for any $c>0$,  and $\min_{\vy} \|\ma_{:-i}\vy -c \ma_{:i}\|_p^p = c^p \min_{\vy} \|\ma_{:-i}\vy - \ma_{:i}\|_p^p $.

Therefore, note that rearranging gives $\min_{\vy\in\R^{d-1}}\|\mA\vy-\vb\|_p^p = \frac{\lambda^p}{\sensgen{p}{\mathbf{a}^\prime_{n+1}}} - \lambda^p$, which is how we can estimate the cost of the regression. Then computing $\sensgen{p}{\mathbf{a}^\prime_{n+1}}$ to a multiplicative accuracy of $\varepsilon^\prime$ gives the value of $\min_{\vy\in\R^{d-1}}\|\mA\vy-\vb\|_p^p$ up to error $\varepsilon(\frac{\lambda^p}{\sensgen{p}{\mathbf{a}^\prime_{n+1}}}) = \varepsilon(\lambda^p +  \min_{\vy\in\R^{d-1}}\|\mA\vy-\vb\|_p^p)$, giving our final bounds. 

\end{proof}

\looseness=-1Using the same technique as in \cref{thm_lowerBounds_Lp}, we can similarly extend the reduction to a multiple regression task. In particular, we show that computing the values of a family of certain regularized leave-one-out regression problems for a matrix may be obtained by simply computing leverage scores of an associated matrix; therefore, this observation demonstrates that finding fast algorithms for sensitivities is as hard as multiple regression tasks. Specifically, for some matrix $\mA \in \R^{n\times d}$, we denote $\mA_{:-i} \in \R^{n \times d-1}$ as the submatrix of $\mA$ with its $i$-th column, denoted as $\mA_{:i}$, removed. We show that approximate sensitivity calculations can solve $\min_{\vy\in\R^{d-1}} \|\mA_{:-i} \vy + \mA_{:i}\|_2$ approximately for all $i$. 

\begin{restatable}[\textbf{Regularized Leave-One-Out $\ell_p$ Multiregression Reduces to $\ell_p$ Sensitivities}]{lemmma}{lemLOOl2Cost}\label{lem:leaveOneOutL2MultiregressionCost}
Suppose that we are given an algorithm $\mathcal{A}$, which for any matrix $\mA^{\prime}\in\R^{n^\prime\times d^\prime}$ and accuracy parameter $\varepsilon^\prime\in(0,1)$, computes $(1\pm \varepsilon^\prime)\sensgen{p}{\mA^\prime}$ in time $\mathcal{T}(n^\prime, d^\prime, \nnz{\mA^\prime}, \varepsilon^\prime)$. 
Given a matrix $\mA\in \R^{n\times d}$ with $n\geq d$, let $\textrm{OPT}_i:= \min_{\vy\in\R^{d-1}} \|\mA_{:-i} \vy + \mA_{:i}\|_p^p$ and $\vy_i^\star:= \arg\min_{\vy\in\R^{d-1}} \|\mA_{:-i}\vy+\mA_{:i}\|_p$ for all the $i\in [d]$. Then, there exists an algorithm that takes $\mA\in\R^{n\times d}$ and computes $(1\pm\varepsilon)\textrm{OPT}_i \pm \lambda^p(1+ \|\vy^\star_i\|_p^p)$ in time $\mathcal{T}(n+d, d, \nnz{\mA}, \varepsilon)$ for any $\lambda > 0$.
\end{restatable}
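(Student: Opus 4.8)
The plan is to lift the single-regression reduction of \cref{thm_lowerBounds_Lp} so that all $d$ leave-one-out problems are handled by a \emph{single} call to $\mathcal{A}$, by appending a scaled identity block to $\mA$. Concretely, I would run $\mathcal{A}$ (with accuracy parameter $\varepsilon' = \Theta(\varepsilon)$) on the $(n+d)\times d$ matrix $\mA' := \begin{bmatrix}\mA \\ \lambda\mathbf{I}_d\end{bmatrix}$, whose $(n+i)$-th row is $\lambda\mathbf{e}_i^\top$. Since $\lambda>0$, the denominator in the sensitivity definition never vanishes under the relevant constraint, and by scale-invariance of the sensitivity in its maximization variable,
\[ \frac{1}{\sensgen{p}{\mathbf{a}'_{n+i}}} = \min_{\vx\in\R^d:\, x_i = 1/\lambda}\bigl(\|\mA\vx\|_p^p + \lambda^p\|\vx\|_p^p\bigr). \]
Substituting $x_j = y_j/\lambda$ for $j\neq i$ (a bijection onto $\vy\in\R^{d-1}$, and noting $\mA\vx = \lambda^{-1}(\mA_{:-i}\vy + \mA_{:i})$ when $x_i = 1/\lambda$) rewrites the inner objective as $\lambda^{-p}\|\mA_{:-i}\vy + \mA_{:i}\|_p^p + \|\vy\|_p^p + 1$, so multiplying through by $\lambda^p$ gives
\[ \frac{\lambda^p}{\sensgen{p}{\mathbf{a}'_{n+i}}} = \min_{\vy\in\R^{d-1}}\bigl(\|\mA_{:-i}\vy + \mA_{:i}\|_p^p + \lambda^p\|\vy\|_p^p\bigr) + \lambda^p. \]

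Next I would sandwich this exact quantity against $\mathrm{OPT}_i$. Dropping the nonnegative terms $\lambda^p\|\vy\|_p^p$ and $\lambda^p$ gives $\lambda^p/\sensgen{p}{\mathbf{a}'_{n+i}} \geq \mathrm{OPT}_i$, while evaluating the regularized objective at the exact minimizer $\vy_i^\star$ gives $\lambda^p/\sensgen{p}{\mathbf{a}'_{n+i}} \leq \mathrm{OPT}_i + \lambda^p(1 + \|\vy_i^\star\|_p^p)$. Thus the exact value already has the desired additive accuracy, and it only remains to fold in the multiplicative error of $\mathcal{A}$: the call returns $\widehat{s}_{n+i}\in(1\pm\varepsilon')\sensgen{p}{\mathbf{a}'_{n+i}}$ for all $i$ at once, and since $1/(1\pm\varepsilon') = 1\pm O(\varepsilon')$, the output $\lambda^p/\widehat{s}_{n+i}$ is a $(1\pm O(\varepsilon'))$-multiplicative estimate of $\lambda^p/\sensgen{p}{\mathbf{a}'_{n+i}}$. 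Combining with the sandwich, $\lambda^p/\widehat{s}_{n+i}$ lies between $(1-\varepsilon)\mathrm{OPT}_i$ and $(1+\varepsilon)\mathrm{OPT}_i + 2\lambda^p(1+\|\vy_i^\star\|_p^p)$; replacing $\lambda$ by $\lambda/2^{1/p}$ and rescaling $\varepsilon'$ by an absolute constant yields exactly the claimed bound. The algorithm returns $\{\lambda^p/\widehat{s}_{n+i}\}_{i\in[d]}$.

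For the runtime, $\mA'$ has $n+d$ rows, $d$ columns, and $\nnz{\mA'} = \nnz{\mA} + d = O(\nnz{\mA})$ (using $\nnz{\mA}\geq d$ since $\mA$ has full column rank), so the single invocation of $\mathcal{A}$ costs $\mathcal{T}(n+d, d, \nnz{\mA}, \varepsilon)$ and the post-processing of the $d$ selected sensitivities is lower order. The step requiring genuine care — and the main obstacle — is the bookkeeping connecting the purely \emph{multiplicative} guarantee on sensitivities to the mixed multiplicative/additive guarantee on the regression values: one must check that inverting $1\pm\varepsilon'$ and combining with the two-sided sandwich does not blow up the additive term beyond a constant factor, which it does not, because the $\varepsilon'\cdot\mathrm{OPT}_i$ contribution from the upper bound is absorbed into the $(1+\varepsilon)\mathrm{OPT}_i$ term and the $\varepsilon'$-distortion of the additive term is only a constant factor. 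A secondary point worth stating explicitly is \emph{why one call suffices}: each appended row $\lambda\mathbf{e}_i^\top$ constrains only coordinate $i$ of the optimization variable, so the $d$ leave-one-out problems decouple even though they share the common denominator $\|\mA'\vx\|_p^p = \|\mA\vx\|_p^p + \lambda^p\|\vx\|_p^p$.
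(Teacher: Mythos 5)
Your proposal is correct and follows essentially the same approach as the paper: you append $\lambda\mathbf{I}_d$ to form $\mA'$, read off the regularized leave-one-out objective from the $(n+i)$-th sensitivity, sandwich it between $\mathrm{OPT}_i$ and $\mathrm{OPT}_i+\lambda^p(1+\|\vy_i^\star\|_p^p)$, and return $\lambda^p/\widehat{s}_{n+i}$. The only cosmetic differences are that you rescale the free variables ($\vy=\lambda\vx_{-i}$) up front rather than invoking a separate scale-invariance step to substitute $\lambda^{-1}\vy_i^\star$ as the paper does, and you spell out the $(1\pm\varepsilon')$-to-$(1\pm\varepsilon)$ bookkeeping more explicitly.
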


\begin{proof}
    Given $\ma\in \R^{n \times d}$, consider the matrix $\ma^\prime := \begin{bmatrix} \ma \\ \lambda\mathbf{I} \end{bmatrix}\in \R^{(n+d) \times d}$ formed by vertically appending a scaled identity matrix to $\ma$, with $\lambda >0$. By the definition of $\ell_p$ sensitivities, we have the following. 
\begin{align*}
    \sensgen{p}{\mathbf{a}^\prime_{n+i}} &= \max_{\vx\in \R^d} \frac{|\vx^\top \mathbf{a}^\prime_{n+i}|^p}{\|\ma^\prime \vx\|_p^p}
                             = \max_{\vx\in\R^d: x_i = \lambda^{-1}} \frac{1}{\|\ma^\prime \vx\|_p^p}
                            = \max_{\vx\in\R^d: x_i = \lambda^{-1}} \frac{1}{\|\lambda^{-1}\ma^\prime_{:i} + \ma^\prime_{-:i} \vx_{-i}\|_p^p}\\
                            &= \frac{1}{\min_{\vy\in\R^{d-1}} \left\{ 1 +  \lambda^p\| \vy\|_p^p + \|\ma_{:-i}\vy + \lambda^{-1}\ma_{:i}\|_p^p\right\}}
\end{align*}

Recall that $\vy^\star_i :=\arg\min_{\vy\in \R^{d-1}}\|\ma_{:-i}\vy + \ma_{:i}\|_p$, and since the power is a monotone transform, $\OPT_i:=\min_{\vy\in\R^{d-1}}\|\ma_{:-i}\vy + \ma_{:i}\|_p = \|\ma_{:-i} \vy^\star_i + \ma_{:i}\|_p$. By scale-invariance we see that for any $c>0$, $c \vy^\star_i=\arg\min_{\vy\in\R^{d-1}}\|\ma_{:-i}\vy + c \ma_{:i}\|_p^p$ and $\min_{\vy} \|\ma_{:-i}\vy + c \ma_{:i}\|_p^p = c^p \OPT_i $. Therefore, by plugging $\vy = \lambda^{-1} \vy^\star_i$, note that 
\[\min_{\vy\in \R^{d-1}} \left\{ 1 + \lambda^p\| \vy\|_p^p + \|\ma_{:-i}\vy + \lambda^{-1}\ma_{:i}\|_p^p\right\} \leq 1 + \|\vy^\star\|_p^p + \lambda^{-p} \OPT_i.\] However, we also note that by non-negativity,
\[\min_{\vy\in \R^{d-1}}\left\{ 1 + \lambda^p \|\vy\|_p^p + \|\ma_{:-i}\vy + \lambda^{-1}\ma_{:i}\|_p^p \right\} \geq \lambda^{-p} \OPT_i.\] 
Combining this all together gives the claimed approximation guarantee by noting that 

$$\textrm{OPT}_i +  \lambda^p\|\vy^\star\|_p^p + \lambda^p \geq \frac{\lambda^p}{\sensgen{p}{\mathbf{a}^\prime_{n+i}}} \geq  \textrm{OPT}_i$$

Therefore, our algorithm is to simply return $\lambda^p/s_p(\mathbf{a}^\prime_{n+i})$, where $s$ is the $\epsilon$-approximate estimate of the sensitivity and our bound follows. The runtime guarantees follow immediately from assumption that the runtime of calculating the $\ell_p$ sensitivities of $\mA'$. 
\end{proof}

\begin{corollary}
Assuming that leave-one-out $\ell_p$ multi-regression with $\Omega(d)$ instances takes $\mathsf{poly}(d)\nnz{\mA} + \mathsf{poly}(1/\varepsilon)$ time \citep{adil2019fast, jambulapati2022improved}, computing $\ell_p$ sensitivities of an $n \times d$ matrix $\mA$ costs at least $\mathsf{poly}(d)\nnz{\mA} + \mathsf{poly}(1/\varepsilon)$. 
\end{corollary}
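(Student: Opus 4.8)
The plan is to obtain this corollary directly from \cref{lem:leaveOneOutL2MultiregressionCost}, read in contrapositive form. Recall that the lemma shows how a \emph{single} $(1\pm\varepsilon)$-approximate $\ell_p$ sensitivity computation on the augmented matrix $\mathbf{A}' = \begin{bmatrix}\mathbf{A}\\\lambda\mathbf{I}\end{bmatrix}\in\R^{(n+d)\times d}$ simultaneously recovers (approximations to) all $d$ of the regularized leave-one-out regression values $\mathrm{OPT}_i = \min_{\vy}\|\mathbf{A}_{:-i}\vy + \mathbf{A}_{:i}\|_p^p$, $i\in[d]$. So, suppose toward a contradiction that for some $n\times d$ matrix one could $(1\pm\varepsilon)$-approximate all $\ell_p$ sensitivities strictly faster than $\mathsf{poly}(d)\nnz{\mathbf{A}} + \mathsf{poly}(1/\varepsilon)$. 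Since $\nnz{\mathbf{A}'} = \nnz{\mathbf{A}} + d = O(\nnz{\mathbf{A}})$ (using $\nnz{\mathbf{A}}\geq d$ for full-rank $\mathbf{A}$) and the only extra work in \cref{lem:leaveOneOutL2MultiregressionCost} is forming $\mathbf{A}'$ and reading off $d$ scalars, this would solve the $\Omega(d)$-instance leave-one-out $\ell_p$ multi-regression problem in time $o(\mathsf{poly}(d)\nnz{\mathbf{A}} + \mathsf{poly}(1/\varepsilon))$, contradicting the stated hardness assumption. Hence no such sensitivity algorithm exists, which is exactly the claimed lower bound.

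The one point requiring care is that \cref{lem:leaveOneOutL2MultiregressionCost} outputs $(1\pm\varepsilon)\mathrm{OPT}_i \pm \lambda^p(1+\|\vy_i^\star\|_p^p)$ rather than a clean multiplicative approximation, so we must verify that $\lambda$ can be driven small enough for the additive term to be negligible without inflating the running time beyond the $\mathsf{poly}(1/\varepsilon)$ budget. This is the usual bit-complexity argument: standard perturbation bounds give $\|\vy_i^\star\|_p \le \mathrm{poly}(n,d,\kappa)$ and, whenever $\mathrm{OPT}_i\neq 0$, $\mathrm{OPT}_i \ge 1/\mathrm{poly}(n,d,\kappa)$, where $\kappa$ captures the bit complexity (or condition number) of $\mathbf{A}$; choosing $\lambda = \varepsilon/\mathrm{poly}(n,d,\kappa)$ then forces $\lambda^p(1+\|\vy_i^\star\|_p^p) \le \varepsilon\,\mathrm{OPT}_i$, upgrading the guarantee to $(1\pm O(\varepsilon))\mathrm{OPT}_i$. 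Since $\lambda$ affects the computation only through $O(\log(1/\lambda)) = \mathrm{poly}(\log(nd\kappa)+\log(1/\varepsilon))$ extra bits per entry, the resulting overhead is absorbed into the $\mathsf{poly}(1/\varepsilon)$ term of the budget. The degenerate case $\mathrm{OPT}_i = 0$ is handled separately, e.g.\ by detecting that the corresponding sensitivity of $\mathbf{A}'$ attains its extremal value.

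The main obstacle is precisely this bookkeeping around the additive error and bit complexity --- arguing that the $\lambda$ needed for a purely multiplicative guarantee is only inverse-polynomially small, so that the reduction genuinely stays within the $\mathsf{poly}(d)\nnz{\mathbf{A}} + \mathsf{poly}(1/\varepsilon)$ budget rather than incurring an exponential blowup from an exceedingly tiny $\lambda$. Everything else --- the structure of the reduction, the fact that all $d$ instances are batched into one sensitivity call, and the transfer of an asymptotic lower bound across a linear-time reduction --- follows immediately from \cref{lem:leaveOneOutL2MultiregressionCost} and the cost accounting already carried out in its proof.
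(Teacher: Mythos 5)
Your proposal is essentially the argument the paper implies but does not spell out: the corollary has no explicit proof in the text and is intended to be read directly off \cref{lem:leaveOneOutL2MultiregressionCost} in contrapositive form, exactly as you do --- a fast sensitivity algorithm on the augmented $(n+d)\times d$ matrix $\mathbf{A}'$ would batch-solve all $d$ leave-one-out regression instances, contradicting the assumed hardness of multi-regression.

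What you add, and what the paper glosses over, is the observation that \cref{lem:leaveOneOutL2MultiregressionCost} delivers $(1\pm\varepsilon)\mathrm{OPT}_i \pm \lambda^p(1+\|\vy_i^\star\|_p^p)$ rather than a clean $(1\pm\varepsilon)\mathrm{OPT}_i$, so there is a real gap between the lemma's guarantee and what the hardness assumption nominally concerns (unregularized, purely multiplicative multi-regression). Your bit-complexity patch --- bounding $\|\vy_i^\star\|_p$ and $\mathrm{OPT}_i^{-1}$ by $\mathrm{poly}(n,d,\kappa)$ and choosing $\lambda$ inverse-polynomially small so the additive term is dominated, while the extra bits are absorbed into the $\mathsf{poly}(1/\varepsilon)$ budget --- is the right instinct, and it is consistent with how such reductions are usually handled. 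You should be aware, though, that this is a sketch rather than a complete argument: it implicitly assumes the hardness assumption is stated for a model where bit-complexity parameters like $\kappa$ are polynomially bounded, and the degenerate case $\mathrm{OPT}_i = 0$ needs the separate handling you mention. Since the paper's corollary is stated at the same informal level, your proposal matches and, if anything, slightly exceeds the paper's implicit argument in rigor.
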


\section{Additional Experiments}\label{sec:ExperimentsSectionAppendix}

Here we include additional experiments on other similar datasets.

\begin{table}[ht] 
\centering
\resizebox{\linewidth}{!}{
\begin{tabular}{lccccc}
\toprule 
$p$ & \thead{Total Sensitivity \\ Upper Bound} & Brute-Force &  Approximation & Brute-Force Runtime (s) & Approximate Runtime (s) \\
\midrule
1 & 11 & 4.7 & 3.9 & 1940 & 303 \\
1.5 & 11 & 8.3 & 10.3 & 1980 & 280 \\
2.5 & 20 & 10.3 & 11.8 & 1970 & 326 \\
3 & 36.4 & 6.9 & 7.3 & 1970 & 371 \\
\bottomrule
\end{tabular}
}
\caption{Runtime comparison for computing total sensitivities for the \texttt{fires} dataset, which has matrix shape $(517, 11)$. We include the theoretical upper bound for the total sensitivities using lewis weights calculations. 
}
\end{table}

\begin{table}[ht] 
\centering
\resizebox{\linewidth}{!}{
\begin{tabular}{lccccc}
\toprule 
$p$ & \thead{Total Sensitivity \\ Upper Bound} & Brute-Force &  Approximation & Brute-Force Runtime (s) & Approximate Runtime (s) \\
\midrule
1 & 11 & 4.1 & 5.2 & 540 & 154 \\
1.5 & 11 & 10.1 & 6.7 & 560 & 201 \\
2.5 & 20 & 19.9 & 12.2 & 390 & 117 \\
3 & 36.4 & 8.8 & 6.3 & 354 & 136 \\
\bottomrule
\end{tabular}
}
\caption{Runtime comparison for computing total sensitivities for the \texttt{concrete} dataset, which has matrix shape $(101, 11)$. We include the theoretical upper bound for the total sensitivities using lewis weights calculations. 
}
\end{table}

\begin{figure}
\centering
  \includegraphics[width=0.8\linewidth]{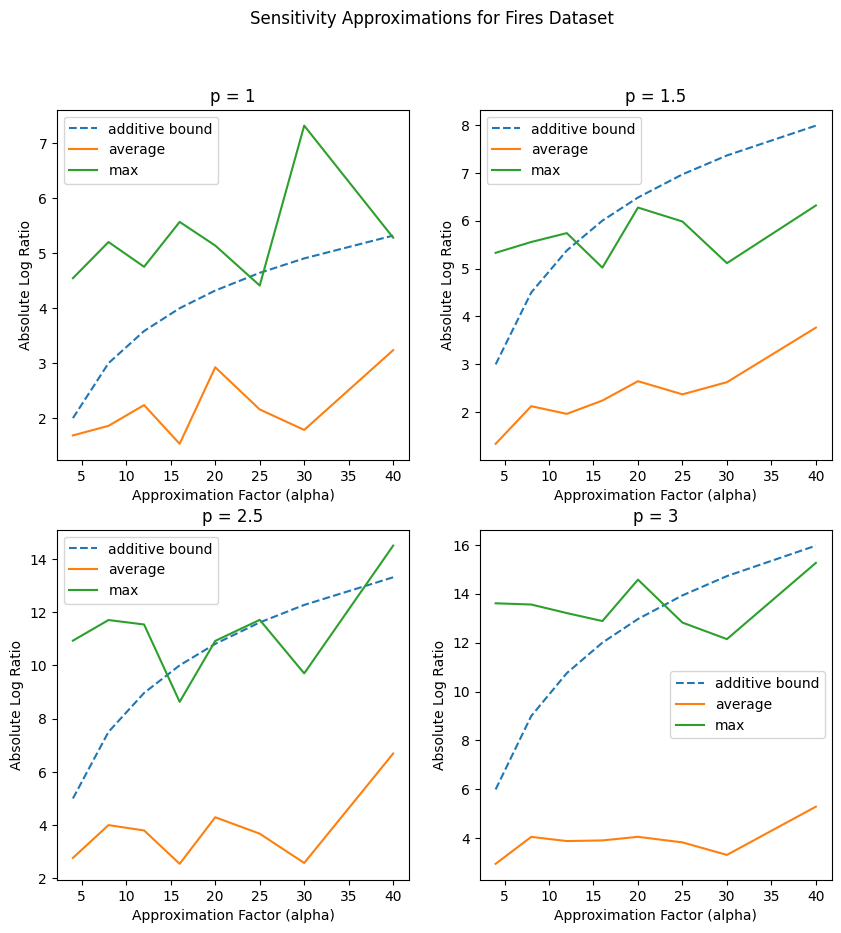}
   \caption{Average absolute log ratios for all $\ell_p$ sensitivity approximations for $\texttt{fires}$.}
\label{fig:fire_sens}
\end{figure}

\end{document}